\theoremstyle{plain}
\newtheorem{theorem}{Theorem}[section]
\newtheorem{proposition}[theorem]{Proposition}
\newtheorem{lemma}[theorem]{Lemma}
\theoremstyle{definition}
\newtheorem{definition}[theorem]{Definition}
\theoremstyle{remark}
\title{Efficient Risk-sensitive Planning via Entropic Risk Measures}
\author{%
  Alexandre Marthe \\
  ENS de Lyon\\
  Lyon, France \\
  \thanks{alexandre.marthe@ens-lyon.fr} \\
  \And
  Samuel Bounan \\
  ENS de Lyon \\
  Lyon, France \\
  \AND
  Aurélien Garivier \\
  ENS de Lyon \\
  Lyon, France \\
  \And
  Claire Vernade \\
  University of Tübingen \\
  Tübingen, Germany \\
}
\begin{document}

\maketitle

\begin{abstract}
    Risk-sensitive planning aims to identify policies maximizing some tail-focused metrics in Markov Decision Processes (MDPs), such as (Conditional) Values at Risk. 
    In general, such optimization problems can be very costly as it is known that only Entropic Risk Measures (EntRM) can be efficiently optimized through dynamic programming.
    We show that EntRM can serve as an approximation of other metrics of interest and we propose a dual optimization problem that requires to compute the set of optimal policies for EntRM across all parameter values. 
    We prove that this \emph{optimality front} can be computed effectively thanks to a novel structural analysis of the smoothness properties of entropic risks. 
    Empirical results demonstrate that our approach achieves strong performance in risk-sensitive  decision-making scenarios.
\end{abstract}

\section{Introduction}
\label{sec:intro}

Markov Decision Processes (MDPs) \citep{puterman2014markov} 
capture sequential decision making in domains as diverse as robotics, finance, healthcare, and operations research \citep{sutton2018reinforcement,silver2017mastering, charpentier2021reinforcement,polydoros2017survey}. Standard planning problems are about maximizing the \emph{expected} cumulative reward, which can be done efficiently via dynamic programming. However, in many high-stakes applications, such as healthcare, average performance alone is not sufficient. In fact, it may be critical to limit the probability of catastrophic outcomes or to ensure that returns remain above certain thresholds with high confidence. This need has driven research on \emph{risk-sensitive} control, which incorporates measures of uncertainty and tail behavior into the objective \citep{bauerle_more_2014,tamar2015policy}. 

One of the central challenges in risk-sensitive optimization is to identify risk criteria that are both \emph{meaningful} for real-world decision making and \emph{tractable} in an MDP context. Popular approaches often revolve around the quantile-based Value at Risk (VaR) and Conditional Value at Risk (CVaR) \citep{artzner1999coherent,rockafellar2000optimization}, which are widely used to control tail risk. 
Another common objective relies on controlling a Threshold Probability \citep{white1993minimizing}, that is the probability that total returns fall below a specified level.

Despite their practical interest and interpretability properties, these common risk metrics cannot be directly and efficiently optimized in MDPs \citep{marthe2024beyond, rowland2019statistics}.
Our work addresses precisely this gap by connecting the Threshold Probability and the (C)VaR metrics to the moment-generating function of the return of a policy. In fact, in the context of MDPs, these two optimization problems can be successfully approximated by the Entropic (Exponential) Risk Measure (EntRM) \citep{howard1972risk}, the unique functional admitting a dynamic programming decomposition \citep{ben2007old,follmer2011stochastic,marthe2024beyond}. Thus, EntRM arguably leads to the ``best possible'' extension of the risk-neutral Bellman recursion to a risk-aware MDP setting. Yet, despite this computational appeal, practical usage of EntRM can be hindered by interpretability concerns, especially around selecting  the risk-tolerance parameter $\beta$.

We propose a unifying framework for risk-sensitive planning in MDPs through the study of the EntRM. Rather than directly optimizing the latter as a proxy for other target metrics, 

We prove that optimal policies evolve in a structured manner as the risk parameter changes, which allows us to derive an efficient algorithm to compute all the optimal policies for EntRM, a set that we call \emph{the Optimality Front}. We show how this set can then be used to optimize tail-focused risk measures through the \emph{Generalized Policy Improvement} principle \citep{barreto2020fast} (see Sec.~\ref{sec:methods}). We demonstrate the consequence of this new method on the optimization of the Threshold Probability, the VaR and CVaR via an empirical study on an inventory management problem.


\section{Risk Sensitive Planning}
\label{sec:prelims}

\subsection{Risk-sensitivity in MDPs}

We consider a finite-horizon MDP $\mathcal{M} \;=\; \bigl(\states, \actions, p, r, H, p_1\bigr)$,
where $\states$ is a finite set of states, $\actions$ is a finite set of actions, $p(x'\mid x, a)$ is the probability of transitioning to state $x'$ from state $x$ when action $a$ is chosen, $r(x,a)$ is the reward function specifying the immediate reward for taking action $a$ in state $x$ (bounded in $[-1,1]$), $H$ is the finite horizon, i.e., the number of decision steps, and $p_1$ is the distribution of initial states $x_1 \sim p_1(\cdot)$.
To simplify notation, we assume all rewards and transitions are stationary, i.e. $p_t=p$ and $r_t=r$ $\forall t$, and the rewards are deterministic, but our results naturally extend to the non-stationary setting.

A policy $\pi = (\pi_1, \ldots, \pi_H)$ is a sequence of decision rules $\pi_t : \states \rightarrow \actions$ that, at each step $t \in \{1,2,\dots,H\}$, selects an action based on the current state. For simplicity, we assume policies to be deterministic, but not necessarily stationary (a stationary policy verifies $\forall t, \pi_1 = \pi_t$). The corresponding \emph{cumulative reward} (or \emph{return}) is a random variable, 
$R^{\pi} \;=\; \sum_{t=1}^H r\bigl(x_t,a_t\bigr)$, 
where $x_0 \sim p_1(\cdot)$, $a_t = \pi_t(x_t)$ and $x_{t+1} \sim p(\cdot \mid x_t, a_t)$.
We also denote $R^{\pi}_h(x,a) \;=\; \sum_{t=h}^H r\bigl(x_t,a_t\bigr)$ with $x_h = x, a_h = a$, and $R^{\pi}_h(x) \;=\; R^{\pi}_h(x,\pi(x))$.

Risk-sensitive control studies generic optimization problems
$\displaystyle{\max_{\pi} \rho (R^{\pi})}$
for a given functional $\rho$ that can capture \emph{tail events} or \emph{uncertainty aversion}. Typical choices of $\rho$ are described below.

\subsection{Risk metrics and computational limitations in MDPs}
\label{sec:expected_vs_risk}

The theory of risk measures is too rich for us to provide more than a few insights below on the most widely used examples. We define the EntRM and common risk metrics, and we discuss the computational challenges that motivate our work. 

\paragraph{Entropic Risk Measure (EntRM).}
Originally introduced by \citet{howard1972risk} for MDPs, the EntRM for a random variable $X$ and parameter $\beta \in \mathbb{R}$ is defined as
\begin{align}
\label{eq:EntRM-def}
    \mathrm{EntRM}_\beta[X] =
    \begin{cases}
        \frac{1}{\beta}\,\log\!\bigl(\mathbb{E}[e^{\beta X}]\bigr), & \text{if } \beta \neq 0, \\
        \mathbb{E}[X], & \text{if } \beta = 0.
    \end{cases}
\end{align}
In the MDP context, maximizing $\mathrm{EntRM}_\beta[R^{\pi}]$ is also equivalent to maximizing $\mathrm{sign}(\beta)\,\mathbb{E}[\,e^{\beta R^{\pi}}]$, by composition with a non-decreasing function. We call it the \emph{exponential form} of EntRM, and will use both forms throughout the paper, noting that the core optimization problem remains the same.

The parameter $\beta$ encodes risk tolerance: large positive $\beta$ values emphasize a risk-seeking attitude (amplifying high returns), while negative values induce conservativeness. 
This seemingly intuitive interpretation unfortunately does not hold the test of practice for two main reasons: the exact choice of $\beta$ for a specific level of risk-sensitivity is unclear in general, 
and EntRM is not \emph{positively homogeneous}, i.e. $\mathrm{EntRM}_\beta[cX] \neq c\,\mathrm{EntRM}_\beta[X]$ for $c \in \mathbb{R}$. In finance, this is problematic when scaling outcomes by different currencies or units.  In the following, the risk parameter is always assumed to be non-positive.

\paragraph{Value at Risk (VaR).} 
VaR represents the quantile of the distribution. At level $\alpha \in (0,1)$:
\begin{equation}
\label{eq:VaR-def}
    \mathrm{VaR}_\alpha[R^{\pi}] 
    \;=\;
    \inf \Bigl\{\, x \;\big|\; \Pr\bigl(R^{\pi} \le x\bigr) \;\ge\; \alpha \Bigr\}.
\end{equation}

Although VaR is widely used, it suffers from a lack of \emph{sub-additivity}, a desirable property for risk measures ensuring that diversification does not increase perceived risk. In the context of MDPs, VaR is particularly challenging to optimize due to its non-linearity and non-monotonicity \citep{chow2018risk}. 

\paragraph{Conditional Value at Risk (CVaR).}
The CVaR \citep{rockafellar2000optimization,follmer2011stochastic,bauerle2011markov,chow_algorithms_2014} at level $\alpha$ is given by
\begin{equation}
 \label{eq:CVaR-def}\mathrm{CVaR}_\alpha[R^{\pi}]
    \;=\;
    \frac{1}{\alpha} \int_0^\alpha \mathrm{VaR}_\gamma [R^{\pi}] d\gamma.  
\end{equation}
CVaR represents the expected return in the $\alpha$-worst fraction of outcomes, providing a more comprehensive view of the risk than VaR. What makes it particularly interesting is that it is a \emph{coherent risk measure}, meaning that it satisfies both sub-additivity and positive homogeneity, compared to the risk measure mentioned previously.

In this article, we chose the convention of VaR and CVaR representing the low tail of the distributions. In the literature, it is often defined using the high tail. This has no impact on the theory as a change of variable $X \leftarrow -X$ will transpose the results. More details can be found in \Cref{app:var_family}.

\textbf{Threshold Probability.}
Certainly the most intuitive measure of risk is the probability of falling below a user-specified threshold level $T$. Solving
$\displaystyle{
    \min_\pi \;\Pr \bigl(R^{\pi} \;\le\; T\bigr),
}$
means seeking a policy whose probability of yielding a return below $T$ is minimal. The VaR and Threshold optimization are \textbf{dual} problems. A farmer who worries about the possibility of a very poor harvest in the coming year will either ask, “What is the chance that my yield will be below 2 tons?”  (Threshold Probability viewpoint), or “With 90\% confidence, what is the size of the minimal yield I can expect?” (VaR perspective).  Lowering the probability of dropping below a certain threshold (Threshold Probability) is directly tied to choosing a quantile-based cutoff for outcomes, as $\mathrm{VaR}_\alpha[X] = T$ just means that $P(X \leq T) = \alpha$.

\textbf{Computational limitations.}
The EntRM verifies a recursive Dynamic Programming equation akin to the Bellman Equation \citep{kupper_representation_2009,follmer2011stochastic,rowland2019statistics,marthe2024beyond}. Thus, for this family of metrics, the optimization problem is tractable (See \Cref{app:entrm} for details). Yet, optimization is much more complex for the other risk measures. 
On the other hand, the optimal policies for popular risk measures such as (C)VaR can be non-Markovian\footnote{It means that the policy in state $S_t$ depends on the full history up to this state and the accumulated reward so far.} \citep{li2022quantile,hau2024dynamic}, and standard optimization strategies rely on augmenting the state space with a continuous variable that contains the cumulated reward so far, which makes the problem intractable except for very simple MDPs.
The optimization of VaR and especially CVaR have been studied extensively \citep{chow_algorithms_2014, chow2015risk,chow2018risk,achab2021robustness,bauerle2011markov}, but remains a challenging task \citep{hau2024dynamic}. 

Approximation schemes have been proposed, such as \emph{Dynamic Risk Measures} \citep{bauerle2022markov} (also called \emph{Nested} or \emph{Recursive Risk Measures}). Where the goal is to recursively optimize a pseudo Bellman objective $V_h(x) = \max_a\rho[r(x,a) + V_{h+1}(X')]$, where $X'$ denotes the (deterministic) next state given $(x,a)$. Despite its appealing computational benefits, such criterion does not optimize for $\rho$ in general. In fact, the approximation is usually quite poor (see Sec.\ref{sec:experiments}) and the actual optimized objective is not law invariant, which makes it harder to interpret.
Optimizing the Threshold Probability raises the same challenges as VaR and CVaR \citep{white1993minimizing, wu1999minimizing, kira2012threshold} but seems to have been less studied and the literature lacks approximation schemes.
In short, while the EntRM is the only risk measure that can be efficiently optimized in MDPs, it is not what people use in practice due to a lack of interpretability. We propose a novel framework to answer the question:
\emph{How can we leverage the computational properties of EntRM to optimize those more preferred measures of risk?}


\section{A unified framework for risk-sensitive optimization}
\label{sec:methods}

The Threshold Probability and VaR/CVaR objectives are all related to the tail probabilities of the return distribution. These tail probabilities can be approximated with the help of exponential moments of the distribution by Chernoff's bound:
\begin{align}\label{eq:chernoff}
    \Pr\!\bigl(X \le T\bigr) 
    \;\;\le\;\;
    \inf_{\beta \leq 0} \;\exp\bigl(-\,\beta\,T\bigr)\,\mathbb{E}\!\bigl[e^{\,\beta\,X}\bigr].
\end{align}
Exponential moments are the core of Entropic Risk Measure and we explain in this section how Inequality~\eqref{eq:chernoff} leads to proxies for the risk metrics introduced above.

\textbf{From (C)VaR to EVaR.}
Solving for the rhs of \eqref{eq:chernoff} to equal $\alpha$, \citet{ahmadi-javid_entropic_2012} introduced the \emph{Entropic Value at Risk (EVaR)} as a proxy for the VaR defined by
\[
\mathrm{EVaR}_\alpha[X] = \sup_{\beta < 0} \mathrm{EntRM}_\beta\bigl[X\bigr] - \frac{1}{\beta}\log(\alpha).
\]
EVaR has been shown to be a tight approximation of the VaR, and an even tighter one for CVaR, due to  
$\mathrm{VaR}_\alpha[X] \geq \mathrm{CVaR}_\alpha[X] \geq \mathrm{EVaR}_\alpha[X]$ \citep{ahmadi-javid_entropic_2012}. 
It is a \emph{coherent} risk measure, and its use for approximating VaR for bandit algorithms was already noted by \citet{maillard2013robust} and has received growing interest recently in MDPs \citep{ni_evar_2022,hau_entropic_2023,su2024evar}. The related proxy for VaR and CVaR is 
     $\max_\pi \mathrm{VaR}_\alpha \left[R^{\pi}\right] \geq \sup_{\beta < 0}\max_\pi \mathrm{EntRM}_\beta\bigl[R^{\pi}\bigr] - \frac{\log(\alpha)}{\beta}$, where the $\sup$ and $\max$ can be swapped as the policy space is finite. 

\textbf{A proxy for the Threshold Probability.}
Using a similar idea, we derive a proxy for the Threshold Probability:
    $\min_\pi \;\Pr \bigl(R^{\pi} \;\le\; T\bigr) \leq \min_{\beta < 0}\min_\pi e^{-\,\beta\,T}\,\mathbb{E}\bigl[e^{\beta R^{\pi}} \bigr]$.
More details can be found in \Cref{app:proxy}. To the best of our knowledge, this metric has not been studied so far, and this simple approximation scheme seems novel. 

\textbf{Quality of the approximations. }
The quality of deviations bounds used to derive the proxies above depend on the tail of the distributions and are known to be more accurate for distributions with light tails \citep{vershynin2018high}.
In MDPs, the return of a policy is more concentrated around its mean when there is a rich and bounded reward signals along the trajectory, leading to thinner tails and tighter EVaR approximations.

On the other hand, existing methods to optimize VaR, CVaR and Threshold Probability rely on dynamic programming on extended MDPs, where the state space is augmented with a continuous variable that correspond to the achievable values of the return \citep{chow_algorithms_2014, white1993minimizing}. Thus, even for small MDPs, rich reward signals may quickly increase the dimension of the state space to an extent that renders the optimization intractable. Conveniently, this is when our approximation method is relevant.


\textbf{Relaxed optimization problems. }
Having derived proxy targets, a natural idea is to attempt to optimize them \emph{instead of the true objective}. We comment on these proxy optimization problems and propose below a better and more general method that builds on these intermediate solutions. 

For a given parameter $\beta$, we denote $\pi^*_\beta$ the optimal policy for the EntRM$_\beta$ \footnote{Recall that this optimal policy can be efficiently computed by DP. }:
\begin{equation}
    \label{eq:pi_star_beta}
    \pi^*_{\beta} := \argmax_{\pi \in \Pi} \text{EntRM}_\beta[R^\pi]\, .
\end{equation}
For all objectives, as suggested by the proxy derivations above, risk-sensitive optimization problems can be reduced to finding the right parameter $\beta$: 
\begin{align}
    \label{eq:var_proxy}
    \beta^*_{\text{(C)VAR}} = \arg \sup_{\beta < 0} \mathrm{EntRM}_\beta\bigl[R^{\pi^*_\beta}\bigr] - \frac{1}{\beta}\log(\alpha), \quad \beta^*_{\text{TP}} = \arg \inf_{\beta < 0} e^{-\,\beta\,T}\,\mathbb{E}\bigl[e^{\beta R^{\pi^*_\beta}} \bigr],
\end{align}
with the corresponding $\pi^*_{\beta^*}$ as the optimal policy for the optimized metric.
 Both proxy problems are effectively optimizing the EntRM, inducing important properties of the resulting optimal policies.

\begin{proposition}[\citet{hau_entropic_2023}]
    \label{pro:proxy_policies}
    For each proxy problem in \eqref{eq:var_proxy}, there exists $\beta <0$ such that the optimal policy is also optimal for the EntRM with parameter $\beta$.
    The optimal policies are deterministic and Markovian. 
\end{proposition}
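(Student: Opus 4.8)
The plan is to treat both proxy problems in \eqref{eq:var_proxy} as a single outer optimization over the scalar $\beta$, exploiting the fact that the inner problem over policies is exactly an EntRM maximization. For the (C)VaR proxy, write $g_\pi(\beta) = \mathrm{EntRM}_\beta[R^\pi] - \tfrac{1}{\beta}\log(\alpha)$, so that $\mathrm{EVaR}_\alpha[R^\pi] = \sup_{\beta<0} g_\pi(\beta)$. Since $\Pi$ is finite the outer maximum over policies is attained; let $\hat\pi$ denote an optimal proxy policy, i.e. a maximizer of $\mathrm{EVaR}_\alpha[R^\pi]$. For the Threshold proxy, set $h_\pi(\beta) = \mathbb{E}[e^{\beta(R^\pi - T)}]$; since the factor $e^{-\beta T}$ does not depend on $\pi$ and, for $\beta<0$, minimizing $\mathbb{E}[e^{\beta R^\pi}]$ coincides with maximizing $\mathrm{EntRM}_\beta[R^\pi]$ (the map $u\mapsto\tfrac{1}{\beta}\log u$ is decreasing), the inner minimizer is again $\pi^*_\beta$. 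Thus both proxies reduce to optimizing a one-dimensional objective whose inner optimum is realized by $\pi^*_\beta$.

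The correspondence with EntRM then follows from an envelope argument. Suppose the inner supremum for $\hat\pi$ is attained at some finite $\hat\beta<0$, so $\mathrm{EVaR}_\alpha[R^{\hat\pi}] = g_{\hat\pi}(\hat\beta)$. For every policy $\pi$, $g_\pi(\hat\beta) \le \sup_{\beta<0} g_\pi(\beta) = \mathrm{EVaR}_\alpha[R^\pi] \le \mathrm{EVaR}_\alpha[R^{\hat\pi}] = g_{\hat\pi}(\hat\beta)$, where the middle inequality uses the optimality of $\hat\pi$. Subtracting the common policy-independent term $-\tfrac{1}{\hat\beta}\log(\alpha)$ gives $\mathrm{EntRM}_{\hat\beta}[R^\pi] \le \mathrm{EntRM}_{\hat\beta}[R^{\hat\pi}]$ for all $\pi$, so $\hat\pi$ is optimal for the EntRM at parameter $\hat\beta$ and may be identified with $\pi^*_{\hat\beta}$. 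The same chain applied to $h_\pi$, with the inequalities reversed because we minimize, settles the Threshold case.

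For the second claim I would invoke the dynamic-programming structure of the EntRM recalled in \Cref{app:entrm}. Writing $W_h(x) = \min_\pi \mathbb{E}[e^{\beta R^\pi_h(x)}]$ for $\beta<0$, the exponential factorizes along trajectories and yields the backward recursion $W_h(x) = \min_{a} e^{\beta r(x,a)} \sum_{x'} p(x'\mid x,a)\, W_{h+1}(x')$ with $W_{H+1}\equiv 1$. The per-step minimizer depends only on the current state and stage, so it defines a deterministic Markov policy that attains the optimum; hence $\pi^*_\beta$ can always be chosen deterministic and Markovian, and by the first part so can $\hat\pi = \pi^*_{\hat\beta}$.

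The crux is the attainment used in the envelope step: the inner extremum over $\beta$ must be reached at a finite $\hat\beta<0$ rather than only in the limit, and this is where I expect the genuine work. Because returns are bounded and take finitely many values, $g_\pi$ is smooth with $g_\pi(\beta)\to-\infty$ as $\beta\to0^-$ (the $-\tfrac{1}{\beta}\log\alpha$ term dominates, since $\mathrm{EntRM}_\beta[R^\pi]\in[-H,H]$) and $g_\pi(\beta)\to\operatorname{essinf} R^\pi$ as $\beta\to-\infty$; continuity then forces an interior maximizer whenever $\mathrm{EVaR}_\alpha[R^{\hat\pi}]$ strictly exceeds this essential infimum. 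The analogous analysis for $h_\pi$, equal to $1$ at $\beta\to0^-$ and diverging as $\beta\to-\infty$ once $\Pr(R^\pi<T)>0$, gives an interior minimizer. The degenerate boundary cases, such as an almost surely constant return where the extremum is attained only in the limit, can be absorbed using the finiteness of $\Pi$: the EntRM$_\beta$ ordering over the finitely many policies stabilizes for $\beta$ sufficiently negative, so some finite $\hat\beta$ still certifies EntRM-optimality of the selected proxy policy.
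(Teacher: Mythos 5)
Your argument is correct and matches the route the paper takes: the proposition is imported from \citet{hau_entropic_2023} without a standalone proof, and the supporting material (the $\sup$/$\max$ swap justified by finiteness of $\Pi$ in \Cref{app:proxy}, and the multiplicative Bellman recursion for the exponential utility in \Cref{app:entrm}) consists of exactly the envelope argument and dynamic-programming step you give. The only point to watch is your degenerate attainment case: the patch works because, under \eqref{eq:var_proxy}, the proxy-optimal policy is by construction $\pi^*_{\beta^*}$ (the limiting element of the finite optimality front when the supremum is approached only as $\beta\to-\infty$), whereas an arbitrary EVaR-maximizing policy chosen without that tie to the front could fail to be EntRM-optimal at any finite $\beta$.
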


Compared to the initial problems where optimal policies are usually not Markovian, the transformation to the EntRM allows finding optimal policies that are easier to compute and implement. However, the proxies in \eqref{eq:var_proxy} now involve an optimization over a \emph{continuous} range of values of $\beta<0$, and for each value, the return of the optimal policy $\pi_\beta^*$ must be computed. While we know efficient algorithms to compute this quantity for one fixed $\beta$, optimizing over a continuous range is significantly harder and the only known approaches use discretization schemes \citep{hau_entropic_2023}. See \Cref{app:proxy} for more details.

Our main contribution is to show that this continuous search problem can be done more efficiently, and more importantly, that the set of all optimal policies $(\pi^*_\beta)_{\beta<0}$ is nicely structured and can be stored to further optimize the true objectives rather than their proxy. 


\subsection{Generalized Policy Improvement}

The key observation is that the optimal policy $\pi^*_{\beta^*}$ for the proxy target \eqref{eq:var_proxy} need not be the best one for the true objective. In general, there may be $\beta'\neq \beta^*$ such that $\pi^*_{\beta'}$ achieves better performance on the true objective.

Conveniently, optimizing the proxies already requires sweeping through all $\beta$ values and computing all the optimal policies for EntRM \eqref{eq:pi_star_beta}, $\Pi^* = \{\pi^*_\beta \, |\, \beta\leq 0 \}$, that we call the \emph{Optimality Front}. While previous work discard this set during the optimization process \citep{hau_entropic_2023}, we propose to store it, and apply the \emph{Generalized Policy Improvement} principle (GPI) proposed by \citet{barreto2020fast}. Namely, multiple policies are computed using tractable objectives (here, EntRM$_\beta$ for various $\beta$), and then the one performing best under the original, possibly intractable, risk criterion is selected. Our novel and original approach is to leverage the Optimality Front such that for any risk measure $\rho$, we compute the optimal policy as
\begin{align}
    \label{eq:generalized_policy_improvement}
    \max_{\pi \in \Pi^*} \rho\bigl(R^{\pi}\bigr) 
    \quad \text{with} \quad
    \Pi^* = \{\pi^*_\beta \mid \beta \leq 0\}
\end{align}



In the next section, we give all the elements to justify that the Optimality Front of the EntRM is indeed a good set of policies to perform GPI. Mainly, we show that it can be computed efficiently and that it is \emph{small} in general because there is a bounded number of $\beta$ values for which the optimal policy changes. Importantly, it can be computed once and reused across multiple downstream risk objectives, such as VaR, CVaR, or threshold-based criteria. Evaluating each candidate policy under the target risk measure can be done efficiently using distributional planning \citep{bellemare2023distributional}, which provides access to the full return distribution of each $\pi^*_\beta$.


\section{Structural Insights into Entropic Risk Measures}
\label{sec:structure}

Optimizing EntRM for an entire range of risk parameters requires understanding the structure of EntRM optimal policies. We now show that exploiting the regularity of the $\mathrm{EntRM}_\beta$ function leads to an efficient algorithm that computes \emph{all the optimal policies} along $\beta \in \mathbb{R}$ much more efficiently than using a grid, and with better guarantees. 
In passing, understanding  how a small perturbation of the risk parameter can influence the optimal policy is also helpful from the point of view of interpretability  and robustness \citep{bauerle2024blackwell}. More formal statements\footnote{To make the paper easier to read, we present slightly informal statements in the main text and refer curious readers to the appendix for full details.} and all the proofs are given in the appendix.

\subsection{Structure Analysis}
\begin{definition}[Optimality Front]
    \label{def:optimality_front}
    For any MDP, the \emph{optimality front} is defined as $\Gamma = (\pi_k, I_k)_k$, where $(I_k)_k$ is the partition of $\mathbb R^-$ and where $\pi_k$ is the optimal policy of EntRM for all risk tolerance parameters $\beta\in I_k$.
\end{definition}
This definition is justified by the following property, formalizing the intuition that a small perturbation of the risk parameter  typically does not change the optimal policy.

\begin{proposition}
    \label{pro:finite_action_change}
    The Optimality Front $\Gamma$ contains a finite set of policies. Each policy in $\Gamma$ is optimal on a finite union of closed intervals. In other words, the mapping $\beta \mapsto \pi^*_\beta$ is piecewise constant.
\end{proposition}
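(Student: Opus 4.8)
The plan is to realize the map $\beta \mapsto \pi^*_\beta$ as the minimizer of the lower envelope of finitely many smooth functions, and then to control how often this minimizer can switch. First I would pass to the exponential form of EntRM: for $\beta<0$, maximizing $\mathrm{EntRM}_\beta[R^\pi]$ over $\pi$ is equivalent to minimizing $g_\pi(\beta) := \mathbb{E}[e^{\beta R^\pi}]$, since $x \mapsto \tfrac{1}{\beta}\log x$ is increasing. Because $\states$, $\actions$ and $H$ are finite, the set $\Pi$ of deterministic (non-stationary) policies is finite, and for each $\pi$ the return $R^\pi$ is supported on a finite set of values $V \subset \mathbb{R}$ (the achievable cumulative rewards), common to all policies. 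Hence each $g_\pi(\beta) = \sum_{v \in V} \Pr(R^\pi = v)\, e^{\beta v}$ is an exponential sum, i.e. a real-analytic function of $\beta$.

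The crux is to bound the number of parameters at which the identity of the minimizer can change. For this I would show that for any two policies $\pi \neq \pi'$ with $g_\pi \not\equiv g_{\pi'}$, the difference $g_\pi - g_{\pi'} = \sum_{v\in V}\bigl(\Pr(R^\pi=v)-\Pr(R^{\pi'}=v)\bigr)\,e^{\beta v}$ is a nonzero exponential sum with at most $|V|$ distinct exponents, and therefore has at most $|V|-1$ real zeros. This is the main obstacle: ordinary real-analyticity only guarantees isolated zeros (finitely many on each \emph{compact} interval), which is insufficient on the unbounded domain $\mathbb{R}^-$. The global bound instead uses that $\{e^{\beta v} : v\in V\}$ is an extended Chebyshev system on $\mathbb{R}$; concretely one proves it by an induction that factors out the smallest exponent and applies Rolle's theorem, reducing the number of terms by one at each step.

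With this bound in hand, I would assemble the result. Let $Z$ be the union, over all pairs $\pi\neq\pi'$ with $g_\pi\not\equiv g_{\pi'}$, of their (finite) crossing sets; then $Z$ is a finite subset of $\mathbb{R}^-$, being a finite union of finite sets. On each open interval of $\mathbb{R}^-\setminus Z$ no two objective curves cross, so the total order of the values $(g_\pi(\beta))_{\pi\in\Pi}$ is constant, and hence (after fixing any tie-breaking rule among policies sharing the same $g_\pi$) the minimizer $\pi^*_\beta$ is constant on that interval. This proves that $\beta\mapsto \pi^*_\beta$ is piecewise constant and that $\Gamma$ uses at most $|\Pi|$ distinct policies. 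Finally, to obtain \emph{closed} intervals I would invoke continuity of each $g_\pi$: the optimality region of a fixed $\pi$ is $\{\beta : g_\pi(\beta)\le g_{\pi'}(\beta)\ \forall \pi'\}$, a finite intersection of closed sets, hence closed; combined with the finiteness of $Z$ this makes it a finite union of closed intervals, where the endpoints lying in $Z$ are assigned to the adjacent pieces by passing to limits.

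A useful byproduct of this approach is that the argument is quantitative: the number of switch points is at most $\binom{|\Pi|}{2}\,(|V|-1)$, which foreshadows the later claim that the Optimality Front is \emph{small} in practice.
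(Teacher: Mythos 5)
Your proposal is correct, and its overall architecture (exponential form, finiteness of the policy set, pairwise crossing sets, constancy of the order between crossings, closedness by continuity) matches the paper's. The one step where you genuinely diverge is the crucial finiteness argument on the unbounded domain $\mathbb{R}^-$. The paper treats the functions $\beta \mapsto -\mathbb{E}[e^{\beta R^{\bar\pi}}]$ as real-analytic and concludes that two distinct ones intersect finitely often \emph{on a compact interval}; to make this work it must first invoke a separate result (\Cref{thm:beta_min}, a Cauchy-type bound on polynomial roots) showing that all breakpoints lie in some $[\beta_{\min},0]$. You instead prove a global bound directly: the pairwise difference is an exponential sum with at most $|V|$ distinct exponents and hence at most $|V|-1$ real zeros on all of $\mathbb{R}$, via the extended-Chebyshev/Rolle induction. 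This is exactly the lemma the paper uses elsewhere (Lemma~\ref{lem:exp_root}, in the appendix on counting breakpoints, \Cref{pro:max_breakpoints}), but not in its proof of this proposition. Your route is self-contained and immediately quantitative, giving the $\binom{|\Pi|}{2}(|V|-1)$ bound for free and handling the unbounded leftmost interval without any auxiliary estimate; the paper's route is less elementary at this step but yields $\beta_{\min}$ as a usable byproduct for the algorithm. Two minor points to tighten: you should handle distinct policies with identical return laws more carefully than ``fix a tie-breaking rule'' --- the paper quotients by the equivalence $R^{\pi_1}\overset{d}{=}R^{\pi_2}$ and uses injectivity of the moment-generating function to ensure the remaining curves are pairwise non-identical; and your closed set $\{\beta : g_\pi(\beta)\le g_{\pi'}(\beta)\ \forall\pi'\}$ may contain isolated points (a tie at a crossing where $\pi$ is optimal on neither side), which the paper's formal definition explicitly strips out by taking the closure of the interior.
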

Crucially, we are able to prove lower bounds on the length of these intervals locally around a specific risk parameter $\beta$, knowing the Advantage function at this specific point.
\begin{theorem}
    \label{thm:interval_policy_change}
    Let $\beta \in \mathbb{R}$ be such that there is a unique deterministic policy $\pi^*_\beta$ optimizing $\mathrm{EntRM}_\beta$. Define the Generalized Advantage function:
    \[
       A^\pi_{h,\beta}(x,a) 
       \;=\; 
       \mathrm{EntRM}_{\beta}[R^{\pi}_h(x)] 
       \;-\; 
       \mathrm{EntRM}_{\beta}[R^{\pi}_h(x,a)].
    \]
    Then, define optimality gaps  as the smallest differences over the entire MDP:
    \begin{align*}
        \Delta =
        \begin{cases}
            \frac{|\beta|}{2}\,\min_{h,x} \min_{a \neq \pi^*_{\beta,h}(x)} \frac{1}{H-h}\, A^{\pi_\beta^*}_{h,\beta}(x,a) & \text{if } \beta \neq 0\\
            2\,\min_{h,x} \min_{a \neq \pi^*_{\beta,h}(x)} \frac{1}{(H-h)^2}\, A^{\pi_\beta^*}_{h,\beta}(x,a) & \text{if } \beta = 0.
        \end{cases}
    \end{align*}
    Then, for all $\beta' \in [\beta-\Delta, \beta + \Delta],$ the optimal policy for $\mathrm{EntRM}_{\beta'}$ remains $\pi^*_\beta$.
\end{theorem}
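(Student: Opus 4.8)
The plan is to reduce the claim to a statement about the \emph{sign} of the generalized advantage function and to control how this sign can change as $\beta$ varies. By the dynamic programming decomposition of the EntRM (\Cref{app:entrm}), a deterministic Markov policy $\pi$ is optimal for $\mathrm{EntRM}_{\beta}$ if and only if it admits no improving one-step deviation, i.e. $A^{\pi}_{h,\beta}(x,a) \ge 0$ for every step $h$, state $x$, and action $a$; under the uniqueness hypothesis these inequalities are in fact strict, $A^{\pi^*_\beta}_{h,\beta}(x,a) > 0$ for all $a \neq \pi^*_{\beta,h}(x)$. It then suffices to show that strictness is preserved for every $\beta' \in [\beta-\Delta,\beta+\Delta]$, since the same characterization applied at $\beta'$ forces $\pi^*_\beta$ to be the unique optimizer of $\mathrm{EntRM}_{\beta'}$.

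The heart of the argument is a quantitative smoothness estimate: I would show that for a fixed policy $\pi$ the map $\beta \mapsto \mathrm{EntRM}_\beta[R^{\pi}_h(x,a)]$ is differentiable and bound its derivative. Writing $\Lambda(\beta) = \log \mathbb{E}[e^{\beta X}]$ for the cumulant generating function of a return $X \in [m,M]$, we have $\mathrm{EntRM}_\beta[X] = \Lambda(\beta)/\beta = \frac{1}{\beta}\int_0^\beta \Lambda'(s)\,ds$, hence
\[
    \frac{d}{d\beta}\,\mathrm{EntRM}_\beta[X] \;=\; \frac{1}{\beta}\bigl(\Lambda'(\beta) - \mathrm{EntRM}_\beta[X]\bigr).
\]
Because $\Lambda'(\beta) = \mathbb{E}[X e^{\beta X}]/\mathbb{E}[e^{\beta X}]$ is a convex combination of the values of $X$, and $\mathrm{EntRM}_\beta[X]$ likewise lies in $[m,M]$, this yields the range bound $\bigl|\tfrac{d}{d\beta}\mathrm{EntRM}_\beta[X]\bigr| \le (M-m)/|\beta|$, which drives the case $\beta \neq 0$. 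Since it blows up near the origin, there I would instead integrate $\Lambda''(s)$ — the variance of $X$ under the tilted law $\propto e^{sX}$, bounded by $(M-m)^2/4$ via Popoviciu's inequality — to obtain the $\beta$-independent bound $\bigl|\tfrac{d}{d\beta}\mathrm{EntRM}_\beta[X]\bigr| \le (M-m)^2/8$, finite at $\beta = 0$ and responsible for the quadratic factor there. In both regimes $M-m = O(H-h)$ since $R^{\pi}_h(x,a)$ sums at most $H-h+1$ rewards in $[-1,1]$, which is where the $(H-h)$ and $(H-h)^2$ normalizations originate; bounding each of the two EntRM terms of $A^{\pi^*_\beta}_{h,\beta}$ and applying the triangle inequality controls the derivative of the advantage itself.

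With the derivative bound in hand, I would conclude by integration: for any $a \neq \pi^*_{\beta,h}(x)$,
\[
    A^{\pi^*_\beta}_{h,\beta'}(x,a) \;\ge\; A^{\pi^*_\beta}_{h,\beta}(x,a) - \Bigl|\int_{\beta}^{\beta'} \tfrac{d}{ds} A^{\pi^*_\beta}_{h,s}(x,a)\,ds\Bigr|,
\]
and the definition of $\Delta$ is calibrated precisely so that the subtracted term stays strictly below $A^{\pi^*_\beta}_{h,\beta}(x,a)$ on the whole interval; taking the minimum over $h,x,a$ keeps all advantages positive simultaneously. The subtlety for $\beta \neq 0$ is that the range bound $(M-m)/|s|$ is not constant on $[\beta-\Delta,\beta+\Delta]$ and is largest at the endpoint nearest $0$; restricting to $\Delta \le |\beta|/2$ — enforced by the factor $\tfrac{1}{2}$ appearing in $\Delta$ — keeps $|s| \ge |\beta|/2$ over the interval, lets me replace the varying bound by $2(M-m)/|\beta|$, and closes the arithmetic.

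The main obstacle I anticipate is sharpening the smoothness estimate so that its constants match the stated $\Delta$: in particular, tracking the effective range $M-m$ of the partial returns (possibly by propagating sensitivities through the EntRM Bellman recursion rather than bounding the two terms separately) and verifying that differencing two entropic values costs no more than the claimed factor. The reduction to advantage positivity and the one-step optimality characterization are routine consequences of the EntRM dynamic programming structure recalled in \Cref{app:entrm}; the genuinely new ingredient is the uniform differentiability-in-$\beta$ control of the entropic value function — precisely the smoothness of entropic risks emphasized in the introduction.
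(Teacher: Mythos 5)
Your high-level architecture coincides with the paper's: reduce optimality of $\pi^*_\beta$ at $\beta'$ to strict positivity of all one-step advantages (the paper phrases this as $\pi^*_\beta$ remaining greedy with respect to itself, via the principle of optimality for EntRM), then show positivity survives the perturbation. Where you genuinely diverge is the perturbation estimate. You differentiate $\beta\mapsto\Lambda(\beta)/\beta$ and integrate bounds on the derivative ($ (M-m)/|\beta|$ away from the origin; $(M-m)^2/8$ near it via $\Lambda''$ and Popoviciu) — these computations are correct. The paper instead proves a global algebraic sandwich (its Proposition on framing): $\mathrm{EntRM}_\beta[X]\le\mathrm{EntRM}_{\beta+\varepsilon}[X]\le\frac{\beta}{\beta+\varepsilon}\mathrm{EntRM}_\beta[X]+\frac{\varepsilon}{\beta+\varepsilon}r_{\min}$, obtained by factoring $e^{(\beta+\varepsilon)x}=e^{\beta x}e^{\varepsilon x}$ and bounding $e^{\varepsilon x}\ge e^{\varepsilon r_{\min}}$, plus Hoeffding's lemma for $\beta=0$. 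Your route is more transparent about where the $1/|\beta|$ and $(H-h)$ scalings come from, but the paper's is exact in $\varepsilon$ rather than a worst-case derivative bound.

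The concrete gap is that your plan, as written, does not reach the stated $\Delta$ but only a constant fraction of it. (i) You apply the triangle inequality to both EntRM terms of the advantage, paying the perturbation cost twice. The paper pays it only once: since $\beta\mapsto\mathrm{EntRM}_\beta[X]$ is nondecreasing, for $\beta'>\beta$ the optimal action's value can only increase and only the suboptimal term must be controlled (symmetrically for $\beta'<\beta$). This monotonicity observation is free and recovers a factor of $2$; with it your $\beta=0$ case closes exactly, since $(M-m)^2/8=(H-h)^2/2$ and $\varepsilon\cdot(H-h)^2/2<A$ gives precisely $\varepsilon<2A/(H-h)^2$. (ii) For $\beta\neq0$ you still fall a factor of $2$ short: replacing the varying bound $(M-m)/|s|$ by its worst case over $[\beta,\beta']$ (your $|s|\ge|\beta|/2$ device), and even the exact integral $(M-m)\log(|\beta|/|\beta'|)$, exceed $A$ at $\varepsilon=\Delta$ because $\log\frac{1}{1-x}>x$. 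The paper avoids this entirely: its algebraic inequality yields the threshold $\varepsilon<|\beta|\,\Delta U/(U^1_\beta-r_{\min})\ge|\beta|\,\Delta U/(2(H-h))$ in one line. So to match the claimed constants you must both exploit monotonicity and abandon the pointwise-derivative estimate in favor of the global inequality (or an equally sharp non-worst-case bound); with your current estimates you prove the theorem only for roughly $[\beta-\Delta/4,\beta+\Delta/4]$ when $\beta\neq 0$.
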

The first bound is relevant when the risk parameter is not too small, as it scales with $\beta$. For large values of $\beta$, it balances the small optimality gap (remember that $\mathrm{EntRM}_\beta{[R^{\pi}]} \rightarrow \mathrm{ess} \inf R^{\pi}$ when $\beta \rightarrow -\infty$, so the gaps tend to $0$). The degeneracy at $\beta=0$ is circumvented by the second bound.

Knowing this structure of intervals, the only information we need to determine the optimality front is the location of these subinterval boundaries. We call \emph{breakpoints} these risk-parameter values at which the optimal policy changes, and we show that the resulting change is generally only local (see \Cref{app:policy_change} for the proof and more details).
\begin{proposition}
    \label{pro:policy_change}
    Consider a random MDP with reward functions and transitions $\left(r_t(x,a)\right)_{t,x,a}$ and $\left(p_t(x)\right)_{t,x}$ generated from, say, independent uniform distributions. With probability $1$, for each breakpoint $\beta \in \breakpoints$ there is a single state-horizon pair for which the optimal action changes: if $\pi^1$ is optimal for $\beta \in [\beta_1, \beta_2]$ and $\pi^2$ is optimal for $\beta \in [\beta_2, \beta_3]$, then there exists a unique state $x$ and time step $t$ such that $\pi^1_t(x) \neq \pi^2_t(x)$. 
\end{proposition}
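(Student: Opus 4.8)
The plan is to show that, almost surely over the random rewards and transitions, no two distinct state-horizon pairs change their locally optimal action at the same value of $\beta$; a union bound over the finitely many pairs (finiteness is guaranteed by \Cref{pro:finite_action_change}) then gives that every global breakpoint carries exactly one action change. I work in the exponential form of EntRM: for $\beta<0$, maximizing $\mathrm{EntRM}_\beta$ is equivalent to minimizing $\mathbb{E}[e^{\beta R}]$, and the optimal policy is obtained by backward dynamic programming, selecting at each pair $(x,h)$ the action minimizing $Q_{h,\beta}(x,a) = e^{\beta r(x,a)}\sum_{x'} p(x'\mid x,a)\, g^{*}_{h+1,\beta}(x')$, where $g^{*}_{h+1,\beta}$ is the optimal continuation exponential value. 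A local breakpoint at $(x,h)$ is a $\beta$ where this choice switches, i.e.\ a zero of a difference $\Phi_{x,h}(\beta)=Q_{h,\beta}(x,a_1)-Q_{h,\beta}(x,a_2)$ for two actions $a_1,a_2$. Since each $g^\pi_{h,\beta}(x)=\mathbb{E}[e^{\beta R^{\pi}_h(x)}]$ is a finite positive combination $\sum_j c_j e^{\beta s_j}$ of exponentials (a Dirichlet polynomial in $\beta$), on any interval where the downstream optimal policy is fixed each $\Phi_{x,h}$ is real-analytic, hence either identically zero or with isolated zeros.

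Fix a candidate global breakpoint $\beta_2$ and suppose, for contradiction, that at least two pairs change there. Among all changing pairs, select $(x^{\ast},h^{\ast})$ with $h^{\ast}$ maximal, and any other changing pair $(x',h')$, so that $h'\le h^{\ast}$. Maximality ensures that at steps after $h^{\ast}$ the optimal policy is unchanged near $\beta_2$, so $g^{*}_{h^{\ast}+1,\beta}$ is a fixed analytic function on a neighborhood of $\beta_2$, and $\Phi^{\ast}:=\Phi_{x^{\ast},h^{\ast}}$ is analytic there with $\Phi^{\ast}(\beta_2)=0$. The genericity backbone is that, because the rewards have an absolutely continuous law, any fixed analytic relation among the parameters that is not identically satisfied holds with probability zero; in particular $\Phi^{\ast}\not\equiv 0$ almost surely, so its zero set $Z^{\ast}$ is countable.

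The decoupling step is a Fubini argument. Let $\tau':=r(x',b_1)$ be the immediate reward of one of the two tied actions $b_1$ at $(x',h')$, and freeze all other parameters. An immediate reward at step $h'$ enters only value functions at steps $\le h'$; since $h'\le h^{\ast}$ and $(x',h')\neq(x^{\ast},h^{\ast})$, the function $\Phi^{\ast}$ is \emph{independent} of $\tau'$, so $Z^{\ast}$ stays fixed as $\tau'$ varies. On the other hand, $\tau'$ enters $\Phi':=\Phi_{x',h'}$ only through the explicit factor $e^{\beta\tau'}$ multiplying a strictly positive continuation value (the continuation $g^{*}_{h'+1,\beta}$ does not depend on $\tau'$). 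Hence, for each fixed $\beta_2\in Z^{\ast}\setminus\{0\}$, the equation $\Phi'(\beta_2;\tau')=0$ is strictly monotone in $\tau'$ and has at most one solution. The set of $\tau'$ for which some $\beta_2\in Z^{\ast}$ is a common zero of $\Phi^{\ast}$ and $\Phi'$ is therefore countable, of measure zero; integrating over $\tau'$ gives probability zero for the offending event, and a union over the finitely many choices of the two pairs, the tied action pairs, and the downstream policy configurations completes the proof. The boundary cases ($\beta_2=0$, or ties at $\beta=0$) correspond to further non-identical analytic constraints and thus also occur with probability zero.

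The main obstacle is the decoupling bookkeeping: isolating a single scalar parameter that moves one pair's comparison function while provably leaving the other's comparison function, and hence the location of its breakpoint, fixed. This rests on the horizon-ordering observation that rewards propagate only upstream, combined with care about the fact that the upstream comparison $\Phi'$ may be only piecewise-analytic at $\beta_2$, since $g^{*}_{h'+1,\beta}$ can itself kink there because $(x^{\ast},h^{\ast})$ changes. One must verify that the value $\Phi'(\beta_2;\tau')$ and its monotone dependence on $\tau'$ remain well-defined through one-sided limits and are unaffected by this kink, which holds precisely because $g^{*}_{h'+1,\beta}$ is independent of $\tau'$. The remaining routine-but-necessary checks are that the relevant Dirichlet polynomials are generically not identically zero and that perturbing a single reward keeps the instance a valid MDP.
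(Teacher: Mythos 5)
Your proposal is correct and takes essentially the same route as the paper: the paper's proof also isolates the immediate reward of one tied action at the ``shallower'' state--horizon pair as fresh randomness, observes that the tie equation has at most one solution in that scalar for each fixed $\beta$ while the other pair's (countable) tie set does not depend on it, and concludes by conditioning/Fubini plus a union bound (its Lemma~\ref{lem:conditional} is exactly your Fubini step). The only difference is organizational --- the paper runs a lexicographic induction over all $(h,x,a)$ triples in decreasing $h$, whereas you argue by contradiction with a maximal-$h^{\ast}$ selection --- but the underlying decoupling idea is identical.
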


\subsection{Computing the Optimality Front}

The first step towards computing the Optimality Front is to \emph{find the breakpoints}. We first show that a direct approach is not feasible but instead we can exploit \Cref{thm:interval_policy_change}. Then, we present our algorithm, \emph{Distributional Optimality Front Iteration} (\algoname~), based on efficient (distributional) value iteration \citep{bellemare2023distributional}. 

\textbf{Finding the Breakpoints. } Breakpoints mark the transition between two different intervals of optimality. According to \Cref{pro:finite_action_change} there must be at least two optimal policies for those particular parameter values.
This yields a system of equations characterizing the breakpoints.

\begin{proposition}
\label{pro:eq_breakpoint}
    Assume $\pi^1$ and $\pi^2$ are such that $\pi^1$ is optimal for $\beta \in [\beta_1, \beta_b]$ and $\pi^2$ is optimal for $\beta \in [\beta_b, \beta_2]$. Then $\beta_b$ satisfies:
    \[
        \forall h,x, \quad 
        \mathrm{EntRM}_{\beta_b}[R^{\pi^1}_h(x)] 
        \;=\;
        \mathrm{EntRM}_{\beta_b}[R^{\pi^2}_h(x)]
    \]
\end{proposition}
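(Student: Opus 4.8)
The plan is to show that at the breakpoint both policies are simultaneously optimal \emph{at every state-horizon pair}, and then to propagate this through the EntRM Bellman recursion so that their value functions coincide everywhere. The statement is really an instance of uniqueness of the optimal EntRM value function, the only genuine content being that $\beta_b$ is a parameter value at which $\pi^1$ and $\pi^2$ are both optimal.

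First I would establish that $\pi^1$ and $\pi^2$ are both optimal for $\mathrm{EntRM}_{\beta_b}$ at every $(h,x)$. Here the closed-interval structure of \Cref{pro:finite_action_change} is essential. On the open interval $(\beta_1,\beta_b)$ the policy $\pi^1$ equals the DP-optimal policy $\pi^*_\beta$, so by the per-state advantage characterization underlying \Cref{thm:interval_policy_change} it satisfies $A^{\pi^1}_{h,\beta}(x,a)\ge 0$ for every $(h,x)$ and every action $a$. Since $\beta\mapsto \mathrm{EntRM}_\beta[R^\pi_h(x)]$ is smooth in $\beta$ (real-analytic away from $0$, continuously extended at $0$) for bounded returns, the advantage $A^{\pi^1}_{h,\beta}(x,a)$ is continuous in $\beta$, so nonnegativity passes to the limit $\beta\to\beta_b^-$; likewise $A^{\pi^2}_{h,\beta_b}(x,a)\ge 0$ by taking $\beta\to\beta_b^+$. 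Hence at $\beta_b$ each policy is greedy with respect to its own value at every, possibly unreachable, state-horizon pair.

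Next I would invoke the EntRM Dynamic Programming principle (\Cref{app:entrm}). Writing $V^\pi_h(x)=\mathrm{EntRM}_{\beta_b}[R^\pi_h(x)]$, the exponential form decomposes across one step into
\[
V^\pi_h(x)\;=\;r\bigl(x,\pi_h(x)\bigr)\;+\;\tfrac{1}{\beta_b}\,\log\,\mathbb{E}_{x'\sim p(\cdot\mid x,\pi_h(x))}\!\bigl[\,e^{\beta_b V^\pi_{h+1}(x')}\,\bigr],
\]
with the optimal value $V^*_h(x)$ obtained by maximizing over the action and $V^\pi_{H+1}\equiv 0$. I would then prove $V^{\pi^1}_h\equiv V^{\pi^2}_h\equiv V^*_h$ by backward induction on $h$: the base case $h=H+1$ is trivial, and in the inductive step the induction hypothesis gives the common continuation value $V^{\pi^1}_{h+1}=V^{\pi^2}_{h+1}=V^*_{h+1}$, while the previous paragraph guarantees that both $\pi^1_h$ and $\pi^2_h$ select maximizing actions at every $x$; substituting the shared continuation value into the recursion yields $V^{\pi^1}_h(x)=V^*_h(x)=V^{\pi^2}_h(x)$ for all $x$, which is exactly the claimed equality.

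The main obstacle I anticipate is the first step: upgrading optimality from the initial distribution $p_1$ to optimality at \emph{every} state-horizon pair, since root-optimality alone only forces agreement on states reachable under the policies. The argument must therefore lean on the per-state advantage characterization of the optimality front together with continuity of $\beta\mapsto A^\pi_{h,\beta}(x,a)$ to certify that both policies are genuinely greedy everywhere exactly at $\beta_b$. Once that is secured the backward induction is routine; the only mild care needed is the continuous extension of the recursion across $\beta=0$ (where it degenerates to the ordinary additive Bellman recursion for the expectation), which presents no difficulty.
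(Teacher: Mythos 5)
Your proof is correct, but it takes a more elaborate route than the paper's. The paper's argument is a direct sandwich: by the principle of optimality for EntRM (optimal policies are optimal for every sub-problem, cf.\ \Cref{app:entrm}), $\pi^1$ being optimal on $[\beta_1,\beta_b]$ gives $\mathrm{EntRM}_{\beta}[R^{\pi^1}_h(x)]\ \ge\ \mathrm{EntRM}_{\beta}[R^{\pi^2}_h(x)]$ for \emph{every} $(h,x)$ on that interval, the reverse inequality holds on $[\beta_b,\beta_2]$, and continuity of $\beta\mapsto\mathrm{EntRM}_\beta$ forces equality at the shared endpoint $\beta_b$ for all $(h,x)$ simultaneously. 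You instead first certify via continuity of the advantage that both policies are greedy with respect to themselves everywhere at $\beta_b$, and then rebuild the equality of value functions by backward induction through the Bellman recursion. Both arguments hinge on exactly the same key fact — the one you correctly flag as the main obstacle, namely upgrading root-optimality to per-state-and-horizon optimality, which is what the EntRM principle of optimality delivers — so the substance is shared; your backward induction is sound but redundant once that fact is in hand, since the two-sided per-state inequalities evaluated at $\beta_b$ already give the conclusion. On the other hand, your version is more explicit about why optimality extends to the closed endpoint $\beta_b$ itself (the paper simply assumes closed optimality intervals, as justified by \Cref{pro:finite_action_change}), whereas the paper's appeal to the intermediate value theorem and to the absence of intermediate optimal policies is terser and leaves more to the reader. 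What your approach buys is a self-contained verification; what the paper's buys is brevity.
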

Note that since the policies only differ in one state-horizon pair (Prop.~\ref{pro:policy_change}), most of these equations are trivial. Nevertheless, one could attempt to use them to directly compute the breakpoints by resolving a system of equations. Unfortunately, we argue in \Cref{app:exact_breakpoints} that this is unfeasible due to the lack of regularity of the EntRM functions and to the computational complexity of the problem. 
In general, we show that the lack of “regularity” in these functions makes it impossible in practice to know in advance how many breakpoints, or optimal policies, might appear between two given points.

However, the advantage of using the EntRM in MDPs is that the optimal policies can be computed recursively by Dynamic Programming and we show that this implies a structure on the breakpoints. 

\begin{proposition}
    \label{pro:recursive_breakpoint}
    (Informal) Let $\breakpoints^h$
    be the set of breakpoints at time $h$ and 
    $\breakpoints^h(x)$ 
    the corresponding set of breakpoints when starting at state $x$. 
    Then, the sets verify the recursive equation
    \[
    \breakpoints^h = \breakpoints^{h+1} \cup \left( \bigcup_{x\in\states} \breakpoints^h(x) \right)
    \]  
\end{proposition}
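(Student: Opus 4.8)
The plan is to leverage the multiplicative Dynamic Programming structure of the EntRM in its exponential form, which decouples the choice made at step $h$ from the continuation on $\{h+1,\dots,H\}$. Writing $u^\beta_{h+1}(x') = \mathbb{E}\bigl[e^{\beta R^{\pi^*_\beta}_{h+1}(x')}\bigr]$ for the optimal continuation moment and recalling that, for $\beta\le 0$, maximising $\mathrm{EntRM}_\beta$ is equivalent to minimising the exponential moment, the Bellman recursion is
\[
    u^\beta_h(x) \;=\; \min_{a\in\actions}\; f^\beta_a(x), \qquad f^\beta_a(x) \;:=\; e^{\beta r(x,a)}\sum_{x'} p(x'\mid x,a)\, u^\beta_{h+1}(x').
\]
Since $e^{\beta r(x,a)}>0$ and $p\ge 0$, the objective is monotone in the continuation values, so the principle of optimality holds: the optimal tail policy on $\{h,\dots,H\}$ is obtained by taking at each state the greedy action $\arg\min_a f^\beta_a(x)$ and then following the optimal continuation policy on $\{h+1,\dots,H\}$, whose breakpoints are by definition $\breakpoints^{h+1}$. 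First I would make this decomposition precise: the optimal tail policy is determined by the pair formed of (i) the collection of time-$h$ greedy actions $(\arg\min_a f^\beta_a(x))_{x\in\states}$ and (ii) the optimal continuation policy, and I would define $\breakpoints^h(x)$ as the set of $\beta$ at which the greedy action $\arg\min_a f^\beta_a(x)$ changes, using the true $\beta$-dependent values $u^\beta_{h+1}$.

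With this decomposition, the two inclusions follow. For $\supseteq$, any $\beta_b\in\breakpoints^{h+1}$ changes the continuation, which is part of the tail policy, so $\beta_b\in\breakpoints^h$; similarly any $\beta_b\in\breakpoints^h(x)$ changes a time-$h$ action, hence the tail policy, so $\beta_b\in\breakpoints^h$. For $\subseteq$, let $\beta_b\in\breakpoints^h$, so the optimal policies on $\{h,\dots,H\}$ just below and just above $\beta_b$ differ at some step. If they differ at step $h$ for some state $x$, the greedy action there changes and $\beta_b\in\breakpoints^h(x)$. Otherwise they agree on step $h$ and differ only on $\{h+1,\dots,H\}$; since by the principle of optimality the restriction of an optimal tail policy is an optimal continuation policy, this is a change of the optimal continuation and therefore $\beta_b\in\breakpoints^{h+1}$. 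Taking the union over the two cases yields $\breakpoints^h \subseteq \breakpoints^{h+1}\cup\bigl(\bigcup_x \breakpoints^h(x)\bigr)$, closing the argument.

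The subtle point — and the reason the $\breakpoints^{h+1}$ term cannot be folded into $\bigcup_x\breakpoints^h(x)$ — is that $u^\beta_{h+1}$ is \emph{continuous} across a continuation breakpoint: by \Cref{pro:eq_breakpoint} the two competing continuation policies attain equal value at $\beta_b\in\breakpoints^{h+1}$, so $u^\beta_{h+1}$, and hence each $f^\beta_a(x)$, is continuous there (with, in general, only a kink in $\beta$). Consequently the argmin $\arg\min_a f^\beta_a(x)$ may remain unchanged across $\beta_b$, meaning a continuation breakpoint can alter the tail policy through its action at step $h+1$ without changing any greedy action at step $h$; this is exactly what the separate $\breakpoints^{h+1}$ term records. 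I expect this continuity bookkeeping, together with the principle of optimality for the EntRM recursion, to be the main obstacle; the remaining care concerns ties in the argmin, which under the genericity of \Cref{pro:policy_change} occur only at isolated breakpoints, so fixing a tie-breaking rule makes each $\breakpoints^h(x)$ well defined and the inclusions exact under the closed-interval convention of \Cref{def:optimality_front}.
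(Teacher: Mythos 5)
Your proposal is correct and follows essentially the same route as the paper: both rest on the principle of optimality for the EntRM Bellman recursion, decomposing the tail policy into the time-$h$ greedy choice (whose changes give $\bigcup_x \breakpoints^h(x)$, computed against the optimal continuation on each of its optimality intervals) and the continuation itself (whose changes give $\breakpoints^{h+1}$). The paper only spells out the $\subseteq$ direction, by selecting for each breakpoint the \emph{last} timestep at which the policy changes so that the continuation is locally fixed; your version with the explicit case analysis, both inclusions, and the continuity remark across continuation breakpoints is a slightly more complete write-up of the same argument.
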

A formal statement and the proof can be found in \Cref{app:recursive_breakpoint}.
This result shows that the set of breakpoints is simply a union over the per-state breakpoints, and they can be computed via a backward recursion. 

We now have all the tools to build an incremental approach that we call \emph{FindBreaks} and a detailed pseudo-code is in \Cref{app:single_state} but we give the intuition here. 
\Cref{thm:interval_policy_change} tells us that knowing the Entropic risk at a given point allows us to identify an interval of $\beta$ values over which the optimal policy does not change.
This fact can be utilized to `jump' over $\beta$ values.
The process is the following. At a given state, assume the distribution of the return for each action is known $(\eta(x,a))_{a\in\actions}$. Start with $\beta=0$ and iterate the following steps:
\begin{enumerate}
    \item Evaluate the Generalized Advantage function and the optimality gaps (see \Cref{thm:interval_policy_change}),
    \item Use the optimality gaps to get a lower bound $\beta-\Delta$ on the next breakpoint, and `jump': $\beta\gets \beta-\Delta$
\end{enumerate}

At some point, when getting close to a breakpoint, the increments $\Delta$ will get closer to 0. Then use a minimal increment $\epsilon$ until the optimal action changes. This can be done in parallel over states thanks to \Cref{pro:recursive_breakpoint}. 

This may not be the optimal way to compute the breakpoints but it exploits all the structure of the problem: both the regularity of the exponential functions and the recursive properties of the MDP optimization allow to speed up the process. The general question of characterizing optimality for this problem is a challenging open problem we leave for future work. Our final risk-sensitive optimization algorithm below is fully modular and could integrate any other breakpoint-search algorithm. 

\textbf{Distributional Optimality Front Iteration. }
Combining both insights from Dynamic Programming and the approximation of Optimality Intervals, we derive an algorithm to compute the Optimality Front up to a desired accuracy.

This algorithm keeps in memory the distribution of the return recursively. While not compulsory, it accelerates the computation of several values of the EntRM with same reward distribution. For more details on the Dynamic Programming computation of return distributions, see \citet{bellemare2023distributional}.


            
            



\begin{algorithm}
  \caption{\algoname - \textbf{D}istributional \textbf{O}ptima\textbf{l}ity \textbf{F}ront \textbf{I}teratio\textbf{n}}
  \label{alg:mdp}
  \begin{algorithmic}[1]
    \Require Precision $\varepsilon \in (0,1)$; MDP $\mdp(\states, \actions, \transitions, \rewards, \horizon)$ parameters.
    \State Select lower bound $\beta_{\min}$ \Comment{Computed or handpicked}
    \State $\mathcal{I}_H \gets [\beta_{\min}, 0]$ \Comment{Starting interval}
    \State $\nu_H(x) \gets \delta_0$ \Comment{\small Optimal return distribution at timestep $H$}
    \For{$h \gets H$ \textbf{to} $1$}
      \For{$x \in \states$}
        \For{$I \in \mathcal{I}_h$}
          \State $\eta^I_h(x,a) \gets \varrho(x,a) * \sum_{x'} \transitions(x'\mid x,a)\,\nu^I_{h+1}(x')$ \Comment{\small Return distributions}
          \State $\{\mathcal{J}, (a^*_j)_{j\in \mathcal{J}}\} \gets \textsc{FindBreaks}\bigl(\epsilon,\;(\eta^I_h(x,a))_{a}, I\bigr)$ \Comment{\small Apply \Cref{alg:state} on $(\eta^I_h(x,a))_{a}$ as the reward distribution for each action}
          \For{$j \in \mathcal{J}$}
            \State Add $j$ to $\mathcal{I}_{h-1}$ \Comment{\small Update intervals for next timestep}
            \State $\nu^{j}_{h}(x) \gets \eta^I_h\bigl(x,a^*_j\bigr)$ \Comment{\small Store optimal return distribution}
          \EndFor
        \EndFor
      \EndFor
    \EndFor
    \State \textbf{Output} $\Gamma = \left( \pi_k, I_k \right)_k, (\eta_0^{k})^k$ \Comment{\small Optimality Front, distributions}
  \end{algorithmic}
\end{algorithm}

\algoname~outputs the Optimality Front $\Gamma$ as well as the return distributions of each optimal policy and it remains to solve \eqref{eq:generalized_policy_improvement}: $\min_k \rho(\pi_k)$, following the Generalized Policy Improvement principle discussed above. In the experimental section below, we simply call this combined optimization the \emph{Optimality Front} method.

\textbf{About the computational cost.}
Calling $B$ the number of breakpoints in the optimality front of the MDP, the number of calls to FindBreaks is bounded by $\Theta(|\states|HB)$ and thus heavily depend on the number of optimal policies. For a low number, only a few calls will be made and only a few Q-value evaluations will have to be computed. Using the empirical observations on the number of breakpoints, the number of calls to can be estimated to be in the order of $(|\states|H)^2$. The total complexity ultimately depends on the complexity of FindBreaks. An empirical study of our implementation can be found in \Cref{app:single_state}. In practice we observe that FindBreaks runs in $O(|A|f(1/\varepsilon))$ time, with some sublinear $f$.
When the support of the accumulated reward is too large, the implementation of the  distributional induction can benefit from the approximation schemes described in \cite{bellemare2023distributional}.


\section{Numerical Experiments}
\label{sec:experiments}

This section evaluates the effectiveness and efficiency of the Optimality Front approach on simple risk-sensitive planning tasks. Our goal is primarily to demonstrate the performance of our method compared to existing baselines mentioned earlier, namely the EVaR optimization approach of \citet{hau_entropic_2023} (\emph{Proxy Optimization} thereafter) and the \emph{Nested Risk Measure} \citep{bauerle2022markov} implementing an approximate (invalid) value iteration algorithm.  All the experiments in this section are of relative small scale and run on a standard laptop in a few seconds. 


\textbf{Environment. }
 We test our method on two settings, the Inventory Management MDP \citep{bellman1955optimal,scarf1960optimality}, which is a standard model for an important logistics problem, and the Cliff environment, that we mainly use to visualize the optimality front test the empirical efficiency of \emph{Find Breaks} compared to a naive grid-based discretization. 
 
 In the Inventory Management problem, the goal is to maximize the profit of a store selling one extensive good. The store has a strict maximal capacity of $M=10$. At each time step, the state of the store is its number of available goods, $x_t \in [M]$, and it can buy (action) a quantity $a_t\in [M]$ of new goods. The reward obtained is the profit minus the costs: 
 $r_t = [f(D_t,x_t,a_t) - C_m(x_t) - C_c(a_t)]/4M$, where $D_t$ is the random demand modeled by a binomial $D_t \sim B(0.5,M)$, $f(D_t,x_t,a_t) =4 \min(D_t,x_t+a_t)$ is the sales profit, $C_m(x_t)=1x_t$ is the maintenance cost, and $C_c(a_t)=3+2a_t$ is the order cost. We considered a horizon $H=10$ with $s_0=0$.
Optimal policies in the Inventory Management MDP can be parametrized by two thresholds $(s_t, S_t)$ \citep{scarf1960optimality}: at time $t$, if the stock $x_t$ is less than $s_t$, then agent should buy goods so that they have a stock of exactly $S_t$, i.e. $a^*_t = S_t-x_t$. 

\textbf{Optimality Front} 
For our method, \emph{Optimality Front}, we executed \algoname~ only once, with accuracy $\varepsilon = 10^{-2}$ on the chosen environment. It outputs a set of return distribution for all EntRM-optimal policies. We then computed all the metrics following \Cref{eq:generalized_policy_improvement} by applying the functional of choice on all returned optimal distributions and selecting the optimal value. For the Inventory Management problem described above, ,with our choice of rewards and costs, the Optimality Front contained $18$ different optimal policies.

\subsection{Threshold Probability}
\begin{table}[tbp]
  \centering
  \small 
  \begin{subtable}[t]{0.48\textwidth}
    \centering
    \caption{Evaluation of $P(R^\pi \le T) ~~(\downarrow)$}
    \label{tab:tau_results}
    \begin{tabular}{lcc}
      \toprule
      $T/\mu^*$    & 0.25  & 0.33  \\
      \midrule
      \textbf{Optimality Front} & $\mathbf{1.26e^{-5}}$  & $\mathbf{8.40e^{-5}}$  \\
      Proxy Optimization        & $2.33e^{-5}$           & $1.18e^{-4}$        \\
      Risk neutral optimal     & $1.11e^{-4}$           & $4.24e^{-4}$         \\
      Nested Risk Meas.     & $1.54e^{-3}$           & $8.37e^{-3}$            \\
      \midrule
      Optimal value            & $6.29e^{-7}$           & $8.22e^{-6}$       \\
      \bottomrule
    \end{tabular}
  \end{subtable}
  \hfill
  \begin{subtable}[t]{0.48\textwidth}
    \centering
    \caption{Evaluation of $\mathrm{(C)VaR}_\alpha[R^\pi] ~~(\uparrow)$}
    \label{tab:var_results}
    \begin{tabular}{lcccc}
      \toprule
      Risk Measure                 & \multicolumn{2}{c}{VaR} & \multicolumn{2}{c}{CVaR} \\
      Risk parameter $\alpha$      & 0.05 & 0.1 & 0.05   & 0.1    \\
      \midrule
      \textbf{Optimality Front}    & $\mathbf{1.25}$ & $\mathbf{1.33}$ & $\mathbf{1.14}$ & $\mathbf{1.21}$ \\
      Proxy Optimization           & $1.22$          & $1.30$          & $1.13$          & $1.20$          \\
      Risk neutral optimal         & $1.22$          & $\mathbf{1.33}$ & $1.11$          & $1.19$          \\
      Nested Risk Measure          & $0.88$          & $0.95$          & $0.75$          & $0.84$          \\
      \bottomrule
    \end{tabular}
  \end{subtable}
  \caption{Comparison of our two evaluation metrics under different policies.}
  \label{tab:combined_results}
  \vspace{-.5cm}
\end{table}

We first optimize policies to minimize the probability that the return falls below a threshold $T<\mu^*$ where $\mu^*$ is the optimal mean return. For this problem, we were able to compute the non-Markovian optimal policies via Dynamic Programming on the augmented state space. On Table~\ref{tab:combined_results}~(a),  we report the estimated value of the probability of falling below the imposed threshold. 
We observe that using the Optimality Front method outperforms the risk-neutral optimal policy by up to a factor $10$. On the other hand, \emph{Nested Risk Measure} fails to find a good policy and performs worse than the risk-neutral one. While \emph{Proxy Optimization} achieves reasonable results, this experiment shows that Generalized Policy Optimization on the Optimality Front performs better. 

The true optimal value here is significantly better than what any Markovian policy can achieve, especially for low thresholds. This is due to the density and high randomness of the reward that strongly affect the distance to the threshold at each step.  
In goal-oriented MDPs, with scarce reward, this gap mostly vanishes (ex. see Cliff in \Cref{app:more_experiments}).

\subsection{Value at Risk family}

In this second experiment, we maximize lower quantiles of the return, aiming for policies with thin tails again, but through a different optimization objective.
\citet{hau_entropic_2023} already showed that optimizing over the EVaR performed better than previous methods to optimize VaR or CVaR, including using augmented MDPs, so we do not compute the optimal augmented-state DP policies for this second experiment and directly compare to \emph{Proxy optimization}. We observe in \Cref{tab:var_results} that for both VaR and CVaR, \algoname~achieves slightly better or comparable performance to the state-of-the-art \emph{Proxy Optimization}.

\subsection{Optimality Front and sample efficiency}

\begin{figure}[ht]
    \centering
    \includegraphics[width=0.45\textwidth]{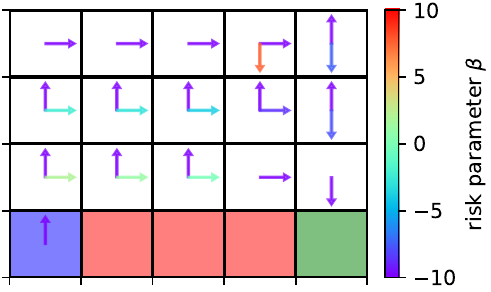}
    \hspace{0.3cm}
    \includegraphics[width=0.45\linewidth]{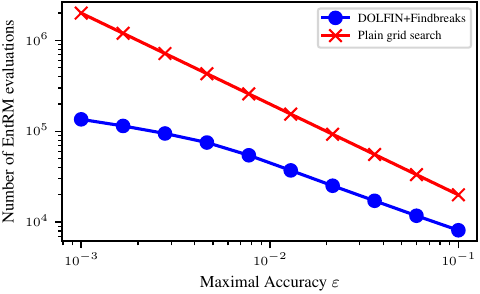}
    \caption{(Left)Illustration of all cliffs $\beta$-optimal policies. Arrows are of the color of the smallest $\beta$ value corresponding to the optimal action $[\pi^*_{\beta}]$. (Right) Number of evaluations of EntRM required to obtain an accuracy~$\varepsilon$ on the breakpoints using our method (blue) vs.\ a naive grid (red). The efficiency ratio goes up to 15.}
    \vspace{-0.3cm}
    \label{fig:cliff_policies}
\end{figure}

We illustrate the Optimality Front on the classic Cliff environment \citep{sutton2018reinforcement} (\Cref{fig:cliff_policies} Left). As the risk parameter $\beta$ grows on a continuous scale from $[-10,10]$, the optimal policy shifts on a finite number of breakpoints reflected in the color of the corresponding arrows. For each $\beta$ evaluated, we compute the optimal policy $\pi^*_\beta$ \eqref{eq:pi_star_beta} by Dynamic Programming. The Optimality Front approach relies \emph{FindBreaks} to avoid computing the optimal policy on a naive grid of $\beta$ values. It is hard to theoretically bound the number of evaluations needed to find all the breakpoints, as it strongly depends on the structure of the MDP. We illustrate the computational gains on this simple example on Fig.~\ref{fig:cliff_policies} (Right), showing that \emph{FindBreaks} can save up to an order of magnitude of evaluations.

\section{Limitations and Discussions}
\label{sec:limitations}

\textbf{Planning Setting.} This paper only considers the \emph{planning} setting, when the environment dynamics are fully known, and only small-scale problems are considered. The extension to more general settings is natural yet challenging because of the new problems raised. Optimizing the EntRM in the reinforcement learning setting has been studied before \citep{liang2024bridging}, but only for a fixed risk parameter, not a full range as studied here, so it is not clear how this can be done efficiently. Considering larger-scale environment would require using function approximations. In this case, computing the exact Optimality Front may not be tractable anymore, but the principle of Generalized Policy Improvement still holds. One idea could be to compute only a few policies associated to hand-picked risk parameters, and apply the same method. Those challenges are left for future work.

\textbf{Approximation tightness.} The approximations used in \Cref{sec:methods} are fully dependent on the return distribution and hard to control precisely in general. We are not aware of distribution-dependent measure of the tightness of EVaR, and it is unclear to us whether it is possible to prove any relevant error bounds for our \emph{Optimality Front}. 

\textbf{Complexity of the algorithm.} The theoretical complexity of \algoname~ is a function of the number of breakpoints. Those breakpoints arise from the MDP dynamics and their number cannot be simply expressed as a function of the main parameters of the MDP ($\states$, $\actions$ or $H$). The only bound we can obtain is highly pessimistic and does not allow us to provide meaningful and problem-dependent complexity results. Empirical results show a clear gain on a toy problem and these gains only get bigger with the size of the MDP (see \Cref{app:more_experiments}).

\section{Conclusion}
\label{sec:conclusion}

We propose a unified framework for optimizing risk-sensitive objectives in Markov Decision Processes. Leveraging the computational advantages of the Entropic Risk Measure (EntRM), we provide an efficient algorithm for computing the optimality front, a family of policies which are optimal on a range of risk tolerance values. This also allows us to approximate key metrics such as Threshold Probabilities, Values at Risk and Conditional Values at Risk. Our algorithm demonstrates significant practical benefits in both efficiency and policy quality. This approach not only enhances risk-sensitive planning but also provides a versatile tool for tackling a variety of decision-making problems under uncertainty. It remains to be extended beyond planning in the context of learning unknown transition probabilities and reward distributions.

\begin{ack}
C. Vernade is funded by the Deutsche Forschungsgemeinschaft (DFG) under both the project 468806714 of the Emmy Noether Program and under Germany’s Excellence Strategy – EXC number 2064/1 – Project number 390727645. CV also thanks the international Max Planck Research School for Intelligent Systems (IMPRS-IS). Aurélien Garivier and Alexandre Marthe thank the Chaire SeqALO (ANR-20-CHIA-0020-01) and PEPR IA project FOUNDRY (ANR-23-PEIA-0003).
\end{ack}

\bibliography{bibliography,bib_neurips}
\bibliographystyle{abbrvnat}

\newpage
\appendix
\onecolumn

\section{EntRM supplements}
\label{app:entrm}

In this section, we give more insights about the Entropic Risk Measure. We first recall the definition of the EntRM.

Let $X$ be a random variable. The Entropic Risk $\mathrm{EntRM}_\beta[X]$ of parameter $\beta \in \mathbb{R}$ is defined as 
\begin{align*}
    \mathrm{EntRM}_\beta[X] =
    \begin{cases}
        \frac{1}{\beta}\,\log\!\bigl(\mathbb{E}[e^{\beta X}]\bigr), & \text{if } \beta \neq 0, \\
        \mathbb{E}[X], & \text{if } \beta = 0.
    \end{cases}
\end{align*}

\paragraph{Risk parameter interpretation.}
The parameter $\beta$ of EntRM measures risk tolerance. Large (positive) values of $\beta$ encourage a riskier behavior, as only the largest values taken by $X$ contribute significantly to the expectation. Conversely, negative values of $\beta$ promote a conservative behavior that minimizes potential losses, irrespective of the maximum reward. This behavior is illustrated by the values of the EntRM when the risk parameter approaches $\pm \infty$.

If the support of $X$ is $(R_{\min}, R_{\max})$, then 
\[
\lim_{\beta \rightarrow -\infty} \mathrm{EntRM}_\beta[X] = R_{\min} \text{ and }\lim_{\beta \rightarrow +\infty} \mathrm{EntRM}_\beta[X] = R_{\max}.
\]

When $\beta$ approaches zero, the EntRM converges to the expected value, leading to risk-neutral behavior: 
\[
\mathrm{EntRM}_\beta (X) \underset{\beta \to 0}{=} \mathbb E[X] + \frac{\beta}{2} \mathbb V[X] + o(\beta).
\]  
The risk neutral behavior obtained for $\beta = 0$ is just the expectation, and the variance appears as the sensitivity of EntRM. For a Gaussian random variable $X \sim \mathcal{N}(\mu, \sigma^2)$, it holds exactly that $\mathrm{EntRM}_\beta(X) = \mu + \frac{\beta}{2}\sigma^2$.

\paragraph{Dynamic programming.} Q-value function in the case of Entropic Risk Measure is defined as follows:
\begin{align}
\label{eq:entrm_bellman}
    Q^{\pi}_{h,\beta}(x,a) = \frac{1}{\beta}\log\left(\mathbb E\left[\exp\left(\beta R_h^\pi(x,a) \right) \right]\right).  
\end{align}
As shown in \citet{howard1972risk}, this Q-value also satisfies a Bellman equation and thus it can be optimized efficiently by value iteration, and can result in a deterministic optimal policy.

The Q-value function for the entropic risk of parameter $\beta$ satisfies the following optimal Bellman equations:
\begin{align}
    Q^{\pi}_{h,\beta}(x,a) &= \mathrm{EntRM}_\beta[r_h(x,a)] + \mathrm{EntRM}_\beta \left[Q_{h+1, \beta}^{\pi}(X',\pi(X'))\right]\\
    Q^{ * }_{h,\beta}(x,a) &= \mathrm{EntRM}_\beta[r_h(x,a)] + \mathrm{EntRM}_\beta \left[\max_{a'} Q_{h+1, \beta}^{*}(X',a')\right]
\end{align} 
Where $X' \sim p(\cdot | x,a)$.

\paragraph{Principle of Optimality.} Another implication of the Bellman equation is that the Entropic Risk Measure verifies the principle of Optimality, meaning there exists an optimal policy that is optimal for any subproblem: 
\begin{align}
    \exists \pi^*,\ \forall x,h, \quad \pi^*_{h:H} \in \argmax_\pi \mathrm{EntRM}_\beta[R^{\pi}(x)]
\end{align}
with $\pi_{h:H} = (\pi_h, \dots, \pi_H)$. This is in general not true for other risk measures, where a policy may only be optimal for a specific timestep and state, and can be suboptimal at intermediate timestep.

\paragraph{Exponential form}
While the EntRM is often defined as in \Cref{eq:EntRM-def}, the $\frac{1}{\beta} \ln$  serves solely as a renormalization factor. It does not influence the preference ordering between returns, and thus has no impact on the optimal policy in an MDP. For this reason, we also consider the exponential form $E_\beta[X] = \text{sign}(\beta)\mathbb{E}_\pi[\exp(\beta X)]$ (also called \emph{exponential utility}), which can appear in some applications, like for the Threshold Probability objective. If a policy $\pi$ is optimal for $\mathrm{EntRM}_\beta$, it is also optimal for $E_\beta$, and vice versa. In general:
\begin{align*} 
    \mathrm{EntRM}_\beta(X_1) > \mathrm{EntRM}_\beta(X_2) 
    \Longleftrightarrow 
    \text{sign}(\beta)\cdot \mathbb{E}[\exp(\beta X_1)] > \text{sign}(\beta)\cdot \mathbb{E}[\exp(\beta X_2)] 
\end{align*}
and thus,
\begin{align*} 
    \argmax_{\pi} \mathrm{EntRM}_\beta[R^\pi] = \argmax_{\pi} \text{sign}(\beta)\mathbb{E}_\pi[\exp(\beta R^\pi)].
\end{align*}
This equivalence is used at several occasions below.


\section{More on the Value at Risk family of risk measures}
\label{app:var_family}

In this section, we provide additional details on the \emph{Value at Risk} (VaR), \emph{Conditional Value at Risk} (CVaR), and \emph{Entropic Value at Risk} (EVaR) measures discussed in the main text. Two main points are covered: (1) a clarification of conventions and notation, and (2) the Derivation of EVaR.

\subsection{Conventions and Notation}

In the finance literature, VaR is generally introduced as the $\bigl(1 - \alpha\bigr)$-quantile of a \emph{loss} distribution, referring to a worst-case tail. It can also be defined via an $\alpha$-quantile for \emph{gains} (or returns), leading to slightly different formulas or sign conventions. For instance, one would usually write
\[
  \Pr\bigl(X \ge \mathrm{VaR}_{1-\alpha}[X]\bigr) = \alpha.
  \quad\text{while others use}\quad
  \Pr\bigl(X \le \mathrm{VaR}_\alpha[X]\bigr) = \alpha,
\]
Both definitions capture the idea of a critical quantile but from opposite sides of the distribution (gains vs.\ losses).

Throughout this work, we consistently adopt the low-tail perspective, defining
\[
  \mathrm{VaR}_\alpha[X] 
  \;=\;
  \inf\bigl\{\;x \,\big|\; \Pr(X \le x) \;\ge\;\alpha \bigr\},
\]
so that VaR at level $\alpha$ is simply the $\alpha$-quantile of $X$ from below. We focus on this definition to remain consistent with our usage of “negative outcomes” or “low returns” as the main source of risk. Should the convention change, one can substitute $X \leftarrow -X$ to convert between definitions without altering the mathematical properties.

To follow the change of convention, we also define the \emph{Conditional Value at Risk} (CVaR) as
\[
    \mathrm{CVaR}_\alpha[X]
    \;=\;
    \frac{1}{\alpha} \int_0^\alpha \mathrm{VaR}_\gamma[X] \, d\gamma.
    \quad\text{instead of}\quad
    \mathrm{CVaR}_{1-\alpha}[X]
    \;=\;
    \frac{1}{\alpha} \int_0^\alpha \mathrm{VaR}_{1-\gamma}[X] \, d\gamma.
\]
and the \emph{Entropic Value at Risk} (EVaR) as
\[
    \mathrm{EVaR}_\alpha[X]
    \;=\;
    \sup_{\beta < 0} \left\{ \frac{1}{\beta}\ln \mathbb E[e^{\beta X}] - \frac{1}{\beta}\ln(\alpha) \right\}
    \quad\text{instead of}\quad
    \inf_{\beta > 0}\left\{ \frac{1}{\beta}\ln \mathbb E[e^{\beta X}] - \frac{1}{\beta}\ln(1-\alpha) \right\}.
\]

\subsection{Entropic Value at Risk (EVaR)}

\emph{Entropic Value at Risk} (EVaR) was proposed as a tighter exponential-based bound on VaR and CVaR (\citealp{ahmadi-javid_entropic_2012}). Its derivation stems from the fact that exponential moment bounding can yield a close approximation to quantile-based measures. 

\begin{proof}
    By Chernoff's inequality, for any $\beta < 0$ we have:
    \[
      \Pr\bigl(X \leq \ell\bigr) \;\le\; \mathbb{E}\bigl[e^{\beta X}\bigr]\exp\bigl(-\beta\,\ell\bigr).
    \]
    Solving the equation
    \(
      \mathbb{E}\bigl[e^{\beta X}\bigr]\exp\bigl(-\beta\,\ell\bigr) \;=\; \alpha
    \)
    for $\ell$, we obtain
    \[
      \ell \;=\; a_X(\beta,\alpha)
      \;=\;
      \frac{1}{\beta}\,\ln\!\bigl(\mathbb{E}[e^{\beta X}]\bigr)
      \;-\;
      \frac{1}{\beta}\ln(\alpha).
    \]
    Hence, for any $\beta<0$ and $\alpha\in(0,1]$, the inequality
    \[
      \Pr\bigl(X \leq a_X(\beta, \alpha)\bigr)
      \;\le\;
      \alpha
    \]
    implies
    \[
      a_X(\beta, \alpha)
      \;\le\;
      \mathrm{VaR}_\alpha(X).
    \]
    Hence,
    \[
      \mathrm{EVaR}_\alpha(X)
      \;=\;
      \sup_{\beta<0}
      a_X\bigl(\beta, \alpha\bigr)
      \;\le\;
      \mathrm{VaR}_\alpha(X).
    \]
\end{proof}


\section{Details on \Cref{sec:methods}}

In this section we provide additional details on the proxy method. We explain : (1) how to derive the problemes, (2) how to optmize them using a grid-based method, and (3) how to optimize them using the optimality front.

\subsection{Derivation of Proxy Problems}
\label{app:proxy}

We first detail how the proxy problems are derived from the original problem.

\paragraph{Probability Threshold Problem.}

The inital problem is 
\[
    \min_\pi \;\Pr \!\bigl(R^\pi \;\le\; T\bigr),
\]

Using the Chernoff bound, and a few manipulations.
\begin{align*}
    \min_\pi \;\Pr\!\bigl(R^\pi \leq T\bigr) 
    &\leq \min_\pi \;\min_{\beta < 0} \mathbb{E}\bigl[e^{\beta(R^\pi - T)}\bigr] \\
    &= \min_{\beta < 0} \;\min_\pi \mathbb{E}\bigl[e^{\beta(R^\pi - T)}\bigr] \\
    &= \min_{\beta < 0} \mathbb{E}\bigl[e^{\beta\left(R^{\pi^*_\beta} - T\right)}\bigr].
\end{align*}
Where the inversion between the minimum on the policy and the minimum on $\beta$ is justified by the fact that there is only a finite number of policies.

\paragraph{Value at Risk Problem.} The initial problem is
\[
    \max_\pi \;\mathrm{(C)VaR}_\alpha\bigl[R^\pi\bigr].
\]

The similar manipulations lead to
\begin{align*}
    \max_\pi \;\mathrm{(C)VaR}_\alpha\bigl[R^\pi\bigr]
    &\geq \max_\pi \;\mathrm{EVaR}_\alpha\bigl[R^\pi\bigr] \\
    &= \max_\pi \sup_{\beta < 0} \mathrm{EntRM}_{\beta}\bigl[R^\pi\bigr] + \frac{1}{\beta}\ln(\alpha) \\
    &= \sup_{\beta < 0} \max_\pi \mathrm{EntRM}_{\beta}\bigl[R^\pi\bigr] + \frac{1}{\beta}\ln(\alpha) \\
    &= \sup_{\beta < 0} \mathrm{EntRM}_{\beta}\bigl[R^{\pi^*_\beta}\bigr] + \frac{1}{\beta}\ln(\alpha) \\
\end{align*}

\subsection{Grid-Based Optimization of the Risk Parameter}

A natural first idea to optimize the EVaR in MDPs is to discretize $\beta$ on a grid. By considering well-chosen values of $\beta$, \citet{hau_entropic_2023} show that they can get an $\varepsilon$-approximation of the optimal policy for the EVaR problem with a complexity of $O\bigl(|\mathcal{S}|^2\,|\mathcal{A}|\,H\frac{\log(1/\varepsilon)}{\varepsilon^2}\bigr)$.

We prove a similar result for the Chernoff approximation of the Threshold Probability problem.

\begin{proposition}
    \label{prop:approx_chernoff}
    Let $R$ be the return of the MDP such that there exists $a<0$ and $p>0$ with the property that for any policy $\pi$, $\Pr(R \leq a)\geq p$. Then solving the proxy problem 
    with accuracy $2\log(1+\varepsilon)/H$ on $\beta$ and $\beta_{\min} = \ln(p)/a$, finds a policy $\pi$ that satisfies
    \[
    \mathbb{P}\bigl(R^{\pi}\leq 0\bigr) \;\leq\; \tilde{B}
    \quad \text{and} \quad
    B \;\leq\; \tilde{B} \;\leq\; (1+\varepsilon)\,B,
    \]
    where $B$ is the true optimal value.
    
\end{proposition}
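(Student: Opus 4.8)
The plan is to transfer the analysis that \citet{hau_entropic_2023} carry out for the EVaR proxy to the Threshold Probability proxy. The quantity to be optimized is $g(\beta,\pi) = e^{-\beta T}\,\mathbb{E}[e^{\beta R^\pi}]$ with $T=0$, and the object of interest is $B = \min_{\beta<0}\min_\pi g(\beta,\pi) = \min_{\beta<0} g(\beta,\pi^*_\beta)$. The idea is that the grid $\beta_0=0 > \beta_1 > \cdots$ with spacing $2\log(1+\varepsilon)/H$ must be fine enough that moving from the true minimizer $\beta^*$ to the nearest grid point $\tilde\beta$ inflates the value of $g$ by at most a factor $(1+\varepsilon)$, while being coarse enough (together with the lower cutoff $\beta_{\min}$) to give the stated complexity. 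The output policy is then $\tilde\pi = \pi^*_{\tilde\beta}$, the EntRM-optimal policy at the chosen grid point.

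\textbf{First} I would fix $T=0$ so that $g(\beta,\pi)=\mathbb{E}[e^{\beta R^\pi}]$, and establish the two-sided structure. The lower bound $B\le\tilde B$ is immediate: since $\tilde\pi$ and $\tilde\beta$ are feasible choices, $\tilde B = \mathbb{P}(R^{\tilde\pi}\le 0) \le g(\tilde\beta,\tilde\pi)$ by Chernoff \eqref{eq:chernoff}, but I must be careful that the reported $\tilde B$ is the \emph{proxy value} $g(\tilde\beta,\tilde\pi)$, so that $B\le\tilde B$ holds by definition of $B$ as the minimum of $g$. \textbf{Second}, for the upper bound $\tilde B\le(1+\varepsilon)B$, I would control the sensitivity of $\log g(\beta,\pi)$ in $\beta$: because $R^\pi\in[-H,H]$, the log-moment-generating function $\Lambda_\pi(\beta)=\log\mathbb{E}[e^{\beta R^\pi}]$ has derivative $\Lambda_\pi'(\beta)=\mathbb{E}[R^\pi e^{\beta R^\pi}]/\mathbb{E}[e^{\beta R^\pi}]\in[-H,H]$. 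Hence for any $\beta,\beta'$ and any fixed $\pi$, $|\log g(\beta,\pi)-\log g(\beta',\pi)|\le H|\beta-\beta'|$. Choosing the grid spacing $2\log(1+\varepsilon)/H$ guarantees that the nearest grid point $\tilde\beta$ to the optimizer $\beta^*$ satisfies $|\beta^*-\tilde\beta|\le\log(1+\varepsilon)/H$, so that $g(\tilde\beta,\pi^*_{\beta^*})\le(1+\varepsilon)\,g(\beta^*,\pi^*_{\beta^*})=(1+\varepsilon)B$. \textbf{Third}, since $\tilde\pi=\pi^*_{\tilde\beta}$ minimizes $g(\tilde\beta,\cdot)$ over policies, $g(\tilde\beta,\tilde\pi)\le g(\tilde\beta,\pi^*_{\beta^*})$, chaining to give $\tilde B\le(1+\varepsilon)B$.

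The remaining point is to justify the lower cutoff $\beta_{\min}=\ln(p)/a$, which ensures the grid is finite without discarding the true minimizer. Here I would use the assumption that $\Pr(R\le a)\ge p$ for every policy: for $\beta<0$ this forces $\mathbb{E}[e^{\beta R^\pi}]\ge p\,e^{\beta a}$, a quantity that blows up as $\beta\to-\infty$. I would argue that for $\beta<\beta_{\min}$ the proxy value already exceeds the value achievable at, say, $\beta=0$ (where $g(0,\pi)=1$ is a trivial upper bound on any probability), so no minimizer can lie below $\beta_{\min}$ and the search may safely be restricted to $[\beta_{\min},0]$. \textbf{The main obstacle} I expect is this cutoff step: making the comparison tight enough that $\beta_{\min}=\ln(p)/a$ exactly is the right threshold requires balancing the lower bound $p\,e^{\beta a}$ against a benchmark value of $g$ at the interior optimum, and one must check that truncating the interval cannot exclude a genuinely better $\beta$. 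The Lipschitz and optimality-swapping arguments are routine; the delicate accounting is entirely in showing that restricting to $[\beta_{\min},0]$ is lossless and that the resulting grid size yields the claimed $\tfrac{\log(1/\varepsilon)}{\varepsilon^2}$ factor.
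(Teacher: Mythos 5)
Your proposal is correct and follows essentially the same route as the paper: a multiplicative (log-scale) Lipschitz bound in $\beta$ with constant $H$, the observation that $\pi^*_{\tilde\beta}$ can only improve the value at the grid point, and the cutoff argument that $\mathbb{E}[e^{\beta R^\pi}] \geq p\,e^{\beta a} \geq 1 = g(0,\pi)$ below $\beta_{\min}$, so truncation is lossless. The only cosmetic difference is that the paper establishes the local stability by splitting $e^{\beta R}$ and bounding one factor by $e^{\epsilon H}$ at each endpoint of a grid cell (its Lemma~\ref{lemma:local_bound}), whereas you bound the derivative of the log-moment-generating function; both yield the same $(1+\varepsilon)$ factor, and the cutoff step you flagged as delicate is exactly as routine as you suspected.
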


Hence, one needs to compute
$\frac{H\,\beta_{\min}}{2\,\log(1+\varepsilon)}$
policies to obtain a value within a factor $(1+\varepsilon)$ of the optimum. Given that computing an optimal policy for EntRM involves a complexity of $O\bigl(|\mathcal{S}|^2\,|\mathcal{A}|\,H\bigr)$, the overall complexity to achieve an approximation ratio of $(1+\varepsilon)$ is 
$O\!\Bigl(\frac{H^2\,|\mathcal{S}|^2\,|\mathcal{A}|\,\beta_{\min}}{\log(1+\varepsilon)}\Bigr).$

Note that this result translates to any value of threshold other than $0$, by just translating the rewards of the MDP so that the threshold becomes equivalent to $0$.   

As expected, those bounds show that the complexity explodes as the grid is refined to obtain more accurate approximations. However, this method does not use the knowledge about the structure of the Optimality Front, and computes the same optimal policies over and over again. 

We show in \Cref{sec:experiments} that \algoname~ computes the Optimality Front orders of magnitude more efficiently than computing on a grid. Conveniently, we can use this Optimality Front to then compute those relaxed optimization problems in a more efficient way.

Consider the EntRM optimal policies $\pi_1, \ldots, \pi_K$ and the corresponding optimality intervals $I_1, \ldots, I_K$. We have
\begin{align*}
    \min_\pi \;\Pr\!\bigl(R^\pi \leq T\bigr)
    &\leq \min_{\beta < 0} \mathbb{E}\bigl[e^{\beta(R^{\pi^*_\beta} - T)}\bigr] \\
    &= \min_k \;\min_{\beta \in I_k} \mathbb{E}\bigl[e^{\beta(R^{\pi_k} - T)}\bigr].
\end{align*}
and 
\begin{align*}
    \max_\pi \;\mathrm{(C)VaR}_\alpha\bigl[R^\pi\bigr] 
    &\geq \sup_{\beta < 0} \mathrm{EntRM}_{\beta}\bigl[R^{\pi^*_\beta}\bigr] + \frac{1}{\beta}\ln(\alpha) \\
    &= \max_k \sup_{\beta \in I_k} \mathrm{EntRM}_{\beta}\bigl[R^{\pi_k}\bigr] + \frac{1}{\beta}\ln(\alpha).
\end{align*}

These problems are reduced to more simple optimization problems on small intervals that can be solved using gradient methods. Indeed, the first one, for the Threshold Probability, is concave \citep{boyd_convex_2004}, and the second one is quasiconcave\footnote{It becomes a concave problem after using the change of variable $\beta \leftarrow \frac{1}{\beta}$ and thus can still be optimized efficiently} \citep{hau_entropic_2023}.

\subsection{Proof of \Cref{prop:approx_chernoff}}

We write $M_R(\beta)=\mathbb{E}[\exp(\beta R)]$ the moment generating function of the random variable $R$.

\begin{lemma}
\label{lemma:local_bound}
 Assume that $R$ is bounded between $-H$ and $H$, then for a fixed stepsize $\epsilon>0$, for $k \in \mathbb{N}$ and $\beta \in [\epsilon k, \epsilon (k+1)]$, noting $b_k = M_R(-\epsilon k)$, we have
 $$M_R(-\beta) \geq \exp(-\epsilon H)b_k \quad \text{and} \quad M_R(-\beta) \geq \exp(\epsilon H)b_{k+1},$$
 and,
 $$\frac{M_R(-\beta)}{\min \{b_k,b_{k+1}\}} \geq \exp\bigg(\epsilon \frac{-H}{2}\bigg) \bigg(\frac{b_k}{b_{k+1}}\bigg)^{\frac{1}{2H}} \geq \exp\bigg(\epsilon \frac{-H}{2}\bigg)\;,$$
 otherwise the ratio is greater or equal to $1$.
\end{lemma}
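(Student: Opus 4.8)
The plan is to reduce everything to two elementary pointwise comparisons between exponentials, and then combine them \emph{multiplicatively} rather than additively. Throughout, I fix $\beta \in [\epsilon k, \epsilon(k+1)]$ and use that $R \in [-H,H]$ almost surely, so it suffices to prove the exponential inequalities realization by realization and then take expectations.

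First I would establish the two one-sided bounds. For every realization $r \in [-H,H]$ of $R$,
\[
-\beta r - \big(-H(\beta-\epsilon k) - \epsilon k\, r\big) = (\beta - \epsilon k)(H - r) \ge 0,
\]
which gives $e^{-\beta r} \ge e^{-H(\beta - \epsilon k)} e^{-\epsilon k r}$; symmetrically,
\[
-\beta r - \big(-H(\epsilon(k+1)-\beta) - \epsilon(k+1)\, r\big) = (\epsilon(k+1)-\beta)(H + r) \ge 0,
\]
which gives $e^{-\beta r} \ge e^{-H(\epsilon(k+1)-\beta)} e^{-\epsilon(k+1) r}$. Taking expectations, and bounding the gaps $\beta - \epsilon k \le \epsilon$ and $\epsilon(k+1)-\beta \le \epsilon$, yields the first two displayed inequalities. (This is simply the differentiation-free version of the fact that $\beta \mapsto \log M_R(-\beta)$ is $H$-Lipschitz, since its derivative equals $-\mathbb{E}_{\tilde P}[R] \in [-H,H]$ under the exponentially tilted law.)

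The heart of the argument — and where the factor-of-two improvement comes from — is to retain the \emph{sharp} exponents $H(\beta-\epsilon k)$ and $H(\epsilon(k+1)-\beta)$ instead of bounding each crudely by $\epsilon H$. Multiplying the two sharp bounds and using that the two gaps sum exactly to $\epsilon$,
\[
M_R(-\beta)^2 \ge e^{-H[(\beta-\epsilon k)+(\epsilon(k+1)-\beta)]}\, b_k b_{k+1} = e^{-\epsilon H}\, b_k b_{k+1},
\]
so $M_R(-\beta) \ge e^{-\epsilon H/2} \sqrt{b_k b_{k+1}}$. Dividing by $\min\{b_k,b_{k+1}\}$ and writing $\sqrt{b_k b_{k+1}}/\min\{b_k,b_{k+1}\} = \sqrt{\max\{b_k,b_{k+1}\}/\min\{b_k,b_{k+1}\}} \ge 1$ gives the claimed lower bound. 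When $b_k \ge b_{k+1}$ this square-root factor equals $(b_k/b_{k+1})^{1/2}$, which recovers the middle expression and dominates $e^{-\epsilon H/2}$; in the complementary case the factor still exceeds $1$, so the final bound $e^{-\epsilon H/2}$ holds in all cases, which is the quantity actually used in \Cref{prop:approx_chernoff}.

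No step is genuinely difficult; the only subtlety I would flag is resisting the temptation to bound each gap by $\epsilon$ separately, which would only produce the weaker constant $e^{-\epsilon H}$. The geometric-mean combination — exploiting that the distances from $\beta$ to the two adjacent grid points add up to exactly $\epsilon$ — is precisely what yields the sharper $e^{-\epsilon H/2}$; equivalently, one averages the two $H$-Lipschitz lower bounds on $\log M_R(-\beta)$ and exponentiates.
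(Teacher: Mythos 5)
Your proof is correct and follows essentially the same route as the paper: the two one-sided bounds come from the identical pointwise exponential comparison, and your multiplicative combination $M_R(-\beta)^2 \ge e^{-\epsilon H}\,b_k b_{k+1}$ is just the algebraic form of the paper's argument that the maximum of the decreasing bound $t\mapsto e^{-tH}b_k$ and the increasing bound $t\mapsto e^{(t-\epsilon)H}b_{k+1}$ is minimized at their intersection, where it equals $e^{-\epsilon H/2}\sqrt{b_k b_{k+1}}$. In passing, your derivation confirms that the stated $\exp(\epsilon H)b_{k+1}$ should read $\exp(-\epsilon H)b_{k+1}$ and that the exponent $\tfrac{1}{2H}$ in the middle term should be $\tfrac{1}{2}$ --- harmless slips, since only the final $e^{-\epsilon H/2}$ bound is used in \Cref{prop:approx_chernoff}.
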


\begin{proof}
 First we have
 \begin{align*}
  M_R(-\beta)&=\mathbb{E}[\exp(-(\beta - \epsilon k)R)\exp(-\epsilon kR)] \\
  &\geq \exp(-(\beta - \epsilon k)H)\mathbb{E}[\exp(-\epsilon kR)] \geq \exp(-\epsilon H)b_k, \\
  M_R(-\beta)&=\mathbb{E}[\exp((-\beta + \epsilon (k+1))R)\exp(-\epsilon (k+1)R)] \\
  &\geq \exp(-(\epsilon (k+1)-\beta)H)\mathbb{E}[\exp(-\epsilon (k+1)R)] \geq \exp(-\epsilon H)b_{k+1}\;.
 \end{align*}
To obtain the following inequality it is enough to consider the intersection of the functions $t \mapsto \exp(-t H)b_k$ and $t \mapsto \exp((-\epsilon -t)H)b_{k+1}$ on $t \in [0,\epsilon]$: as $M_R(-\beta)$ follows both left decreasing and right increasing constraint, the minimal value possible is on the intersection.
\end{proof}

We can now give a proof of Proposition \ref{prop:approx_chernoff}.
\begin{proof}
Let $\pi_c$ and $\beta_c$ be the policy and the $\beta$ optimizing the Chernoff bound $B$. First, we can consider that $\beta_c < \ln(1/p)/a\}$. Indeed if not we can choose $\beta_c = 0$ instead:

\begin{align*}
M_{R}(\beta_c) &= \mathbb{E}[\exp(-\beta R)\mathds{1}\{R\leq a\}]+\mathbb{E}[\exp(-\beta R)\mathds{1}\{R>a\}] \\
&\geq \exp(-\beta_c a)\mathbb{P}[R\leq a] +\exp(-\beta M)\mathbb{P}[R \geq a]\\
&\geq \exp(-\beta_c a)p \geq 1 = M_{R}(0) \quad \text{using the hypothesis on }\beta_c\;. \\
\end{align*}

Let then $k\in\mathbb{N}$ be such that $\epsilon k\leq \beta_c < \epsilon(k+1)$, and suppose  $M_{R^{\pi_c}}(-\epsilon k)  \leq M_{R^{\pi_c}}(-\epsilon (k+1))$ without loss of generality. We have
$$\tilde{B} \leq \min_\pi M_{R^\pi}(-\epsilon k) \leq M_{R^{\pi_c}}(-\epsilon k) \leq \exp \bigg( \epsilon \frac{H}{2}\bigg)M_{R^{\pi_c}}(\beta_c) \leq \exp \bigg(\epsilon \frac{H}{2}\bigg)B,$$
using \Cref{lemma:local_bound}.

For $\epsilon \gets 2\log(1\varepsilon)/H$, we have that $\tilde{B} \leq (1+\varepsilon)B$.
\end{proof}


\section{Complements on \Cref{sec:structure}}
\label{app:proof_structure}

\subsection{Optimality Front definition}
\Cref{def:optimality_front} has a some shortcomings and the optimality front is not well defined this way. A first issue is that, for a specific $\beta$, the optimal policy may not be unique. There might be different possible choices, that could lead to also different optimality intervals. For the optimality front to be well defined, we would need unicity. A second issue comes from the fact that a policy might be optimal for a unique value of $\beta$, and not an interval of length $>0$. 
To match with the definition of \emph{breakpoints}, we would like to ensure that the optimality intervals are always of length $>0$, and that two optimality intervals may only overlaps on their endpoints. For both those reasons, we introduce the following notions.

We say that two policies $\pi_1$ and $\pi_2$ are equivalent, noted $\pi_1 \sim \pi_2$ if the associated return distributions are equal : $\pi_1 \sim \pi_2 \Longleftrightarrow R^{\pi_1} \overset{d}{=} R^{\pi_2}$.
It is an \emph{equivalence relation} (in the sense of mathematical binary relations) so it is possible to consider the quotient set $\bar \Pi = \Pi/\sim$ of equivalence classes. For $\bar \pi \in \bar \Pi$, the return distribution $R^{\bar \pi}$ is well defined. Considering the equivalence classes allows to avoid the issue of having several optimal policies on any given intervals of the risk parameter, instead, all such policies are in the same equivalence class $\bar \pi$.

For $\bar \pi \in \bar \Pi$, we consider $O_{\bar \pi} = \left\{ \beta \in \mathbb R \text{ s.t. } \mathrm{EntRM}_\beta[R^{\bar \pi}] = \sup_\pi \mathrm{EntRM}_\beta[R^{\pi}]\right\}$ the set of values for which $\bar \pi$ is optimal, and $I_{\bar \pi}= \overline{\text{int } O_{\bar \pi}}$ that set with its isolated points removed.

\begin{definition}[Optimality front]
    \label{def:optimality_front_formal}
    The optimality front $\Gamma$ is defined as $\Gamma= (\bar \pi_k, I_k)_{k \in [1,K]}$ where $I_k = I_{\bar \pi_k}$ and $(\bar \pi_k)$ is the set of class of policies such that $I_k \neq \emptyset$.    
\end{definition}

With such definition, the optimality front is uniquely defined, up to permutations. Computing it consists in computing \emph{one} representative policy of each optimal class $\bar \pi_k$, and the corresponding optimality interval $I_k$. 

\subsection{Complements on \Cref{pro:finite_action_change} and definition of Breakpoints.}

We first provide a more formal statement of the result, and then give a proof of it.

\begin{proposition}
    \label{pro:finite_action_change_formal}
    The Optimality Front is unique, and finite. Also, $\forall k, I_k$ is a finite union of closed intervals in $\mathbb R \bigcup  \{  -\infty\}$ with no isolated points, and $\bigcup_k I_k = \mathbb R^- \bigcup \{-\infty\}$. Finally, for $k_1 \neq k_2$, $I_{k_1} \cap I_{k_2} = \partial I_{k_1} \cap \partial I_{k_2} $.  
\end{proposition}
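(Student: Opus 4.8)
The plan is to reduce everything to the elementary structure of \emph{exponential polynomials}. Since policies are deterministic and non-stationary over a finite horizon, the policy space $\Pi$ is finite, so the quotient $\bar\Pi=\Pi/\!\sim$ has finitely many classes $\bar\pi_1,\dots,\bar\pi_N$ with $N\le|\Pi|$. Because rewards are bounded and $H$ is finite, each return $R^{\bar\pi_k}$ is supported on a finite set of values $v^{(k)}_1<\dots<v^{(k)}_{m_k}$ with probabilities $p^{(k)}_i$, so the exponential-form objective $g_k(\beta):=\mathbb E[e^{\beta R^{\bar\pi_k}}]=\sum_i p^{(k)}_i e^{\beta v^{(k)}_i}$ is an exponential polynomial, and $f_k(\beta):=\mathrm{EntRM}_\beta[R^{\bar\pi_k}]$ is real-analytic on $\mathbb R$. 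Uniqueness and finiteness of the front are then immediate from Definition~\ref{def:optimality_front_formal}: the sets $O_{\bar\pi_k}$, and thus the intervals $I_k=\overline{\mathrm{int}\,O_{\bar\pi_k}}$, are canonically determined by the MDP, and there are at most $N$ nonempty ones.

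First I would establish that there are only finitely many breakpoints. For $\beta\neq0$, since $\log$ is strictly increasing, $f_{k_1}(\beta)=f_{k_2}(\beta)$ is equivalent to $g_{k_1}(\beta)=g_{k_2}(\beta)$; more generally, for $\beta<0$ the ordering of the $f_k$ is the reverse of that of the $g_k$, so maximizing $f_k$ amounts to minimizing $g_k$. Now $g_{k_1}-g_{k_2}$ is an exponential polynomial which is \emph{not} identically zero whenever $\bar\pi_{k_1}\neq\bar\pi_{k_2}$, because distinct compactly supported laws have distinct moment generating functions. The key lemma is the classical fact that a nonzero exponential polynomial $\sum_j c_j e^{w_j\beta}$ (distinct real $w_j$, nonzero $c_j$) has only finitely many real zeros; this I would prove by induction on the number of exponents, dividing by $e^{w_1\beta}$ and applying Rolle's theorem to kill one term. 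Taking the union over the finitely many pairs $(k_1,k_2)$ yields a finite set $\mathcal B\subset(-\infty,0]$ of candidate breakpoints, crucially with \emph{no} accumulation point even as $\beta\to-\infty$.

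Next I would read off the interval structure. On each connected component of $(-\infty,0]\setminus\mathcal B$ no two $g_k$ cross, so the pointwise maximizer of $f_k$ is a single fixed class; hence each $O_{\bar\pi_k}$ is a finite union of closed intervals together with possibly some isolated crossing points, and $I_k=\overline{\mathrm{int}\,O_{\bar\pi_k}}$ is a finite union of nondegenerate closed intervals with no isolated points. Since the maximum over the finite family is always attained, every $\beta\in(-\infty,0]$ lies in some $O_{\bar\pi_k}$; an isolated optimal point is necessarily an endpoint of a neighbouring class's interval and so is recovered inside that class's $I_k$, giving $\bigcup_k I_k=\mathbb R^-$. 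Because $\mathcal B$ has a smallest element, a single class is optimal on an unbounded interval $(-\infty,-M]$, and we attach the point $-\infty$ in the compactification $\mathbb R\cup\{-\infty\}$ to its interval, accounting for $\{-\infty\}$.

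Finally, for the overlap identity, suppose $\beta$ were interior to both $I_{k_1}$ and $I_{k_2}$ with $k_1\neq k_2$: then both classes are optimal on a full neighbourhood, forcing $g_{k_1}=g_{k_2}$ on a nondegenerate interval, hence $g_{k_1}-g_{k_2}\equiv0$ (an exponential polynomial vanishing on an interval is zero), hence $R^{\bar\pi_{k_1}}\overset{d}{=}R^{\bar\pi_{k_2}}$ and $\bar\pi_{k_1}=\bar\pi_{k_2}$, a contradiction; the same computation excludes a point interior to one $I_k$ and on the boundary of another. Thus the interiors are pairwise disjoint, and since each $I_k$ is closed this gives $I_{k_1}\cap I_{k_2}=\partial I_{k_1}\cap\partial I_{k_2}$. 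I expect the main obstacle to be the exponential-polynomial zero-counting lemma, together with the bookkeeping that promotes ``no interior overlap'' into the precise boundary identity and correctly handles the $-\infty$ endpoint; the remainder is routine once the breakpoints are known to be finite.
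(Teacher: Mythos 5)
Your proof is correct and follows the same overall architecture as the paper's (finite quotient $\bar\Pi$, MGF injectivity to separate distinct classes, finiteness of pairwise crossings, then reading off the interval and boundary structure), but it differs in the key finiteness lemma. The paper invokes the generic identity theorem for analytic functions, which only controls the zero set on a \emph{compact} interval, and therefore must first restrict to $[\beta_{\min},0]$ by appealing to \Cref{thm:beta_min} (the lower bound on the lowest breakpoint). You instead use the sharper, global fact that a nonzero exponential polynomial with $n$ terms has at most $n-1$ real zeros (Rolle-induction), which rules out accumulation of crossings as $\beta\to-\infty$ directly and makes the argument self-contained — no $\beta_{\min}$ needed, and you get an explicit bound on the number of candidate breakpoints for free. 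Amusingly, this is exactly Lemma~\ref{lem:exp_root} (cited from Tossavainen) that the paper only deploys later, in \Cref{pro:max_breakpoints}, to bound the number of breakpoints; your route unifies the two results. Your handling of the boundary identity (interiors pairwise disjoint because coincidence on a nondegenerate interval would force equality of the exponential polynomials, hence of the return laws) and of the $-\infty$ endpoint is also sound. The only point deserving a sentence of care is $\beta=0$: there $g_{k}(0)=1$ for every class, so the equivalence between crossings of $f_k$ and of $g_k$ degenerates, and $\beta=0$ must simply be adjoined to the candidate breakpoint set by hand (comparing $\mathbb{E}[R^{\bar\pi_k}]$ there); this adds one point and does not affect finiteness — the paper's own proof glosses over the same degeneracy.
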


\begin{proof}[Proof of \Cref{pro:finite_action_change}]

    Let us first show that the optimality front is well defined.

    Let $\bar \pi \in \bar \Pi$. $\forall \pi_1, \pi_2 \in \bar \pi$, the distributions of the return are equal by definition: $R^{\pi_1} \overset{d}{=} R^{\pi_2}$. Hence, 
    $R^{\bar \pi}$ is well defined. Similarly 

    \begin{align*}
        O_{\pi_1} 
        &= \left\{ \beta \in \mathbb R \text{ s.t. } \mathrm{EntRM}_\beta[R^{ \pi_1}] = \sup_\pi \mathrm{EntRM}_\beta[R^{\pi}]\right\} \\
        &= \left\{ \beta \in \mathbb R \text{ s.t. } \mathrm{EntRM}_\beta[R^{\pi_2}] = \sup_\pi \mathrm{EntRM}_\beta[R^{\pi}]\right\} = O_{\pi_2}
    \end{align*}

    and so $O_{\bar \pi}$ (and consequently, $I_{\bar \pi}$) is well defined. Finally, the Optimality Front $\left\{ (\bar \pi, I_{\bar \pi}) \mid \bar \pi \in \bar \Pi, I_ {\bar \pi} \neq \emptyset \right\}$ is uniquely defined.

    We now proceed to show the intervals part. In order to do so, we introduce the following result about analytic functions.
    \begin{lemma}
        Let $f_1$ and $f_2$ be two analytic functions on a finite interval $I \subset \mathbb{R}$. If $f_1$ and $f_2$ are equal on $I$, then they are equal everywhere. If they are not equal on $I$, then they can only intersect at a finite number of points in $I$.
    \end{lemma}

    We also recall that $\bar \Pi$ is finite since the number of markov deterministic policies is finite in our finite horizon setting, and that optimizing on $\mathrm{EntRM}_\beta[R^\pi]$ is equivalent to optimizing on $-\mathbb E\left[\exp(\beta R^\pi)\right]$

    Consider $f_{\bar \pi}(\beta) = -\mathbb E\left[\exp(\beta R^{\bar \pi})\right]$. Those functions are analytic as a convex combination of exponential functions, which are analytic. 

    Assume that there exist $\bar \pi_1 \neq \bar \pi_2$ such that $f_{\bar \pi_1}(\beta) = f_{\bar \pi_2}(\beta)$ for all $\beta$. The moment generating function defines uniquely the distribution of a random variable, so it implies that $R^{\bar \pi_1} \overset{d}{=} R^{\bar \pi_2}$. This contradicts the fact that $\bar \pi_1$ and $\bar \pi_2$ are different policies. Hence, the functions $f_{\bar \pi}$ are distinct pairwise.
    
    Using \Cref{thm:beta_min}, we can restrict ourselves to only considering the risk parameter in the interval $[\beta_{\min}, 0]$. The functions $f_{\bar \pi}$ are analytic on this interval, and thus they can only intersect at a finite number of points.
    This implies that the set $\{ \beta \in \mathbb R^- \mid \exists \bar \pi_1  \neq \bar \pi_2 \in \bar \Pi, f_{\bar \pi_1}(\beta) = f_{\bar \pi_2}(\beta) \}$ is finite. We write $\beta_0 < \beta_1 < \ldots < \beta_K$ the ordered set of those points. By definition, on each interval $]\beta_k, \beta_{k+1}[$, there is a unique $\bar \pi_k$ such that $f_{\bar \pi_k}(\beta) = \sup_{\bar \pi} f_{\bar \pi}(\beta)$, and thus that $\mathrm{EntRM}_\beta[R^{\bar \pi_k}] = \sup_{\bar \pi} \mathrm{EntRM}_\beta[R^{\bar \pi}]$. Also, by continuity, both $\pi_k$ and $\pi_{k+1}$ are optimal for $\beta_{k+1}$. 
    Hence, consider $\bar \pi \in \bar \Pi$ and $k_1, \ldots, k_m$ the set of indices such that $\forall i, \bar \pi_{k_i} = \bar \pi$. Then, we have $I_{\bar \pi} \supset \bigcup_{i=1}^m [\beta_k, \beta_{k+1}]$. Furthermore, the only other values of $\beta$ for which $\bar \pi$ can be optimal, are isolated points among $\beta_0, \ldots, \beta_K$. 
    By definition, the intervals $I_{\bar \pi}$ do not include isolated points, so we have the equality $I_{\bar \pi} = \bigcup_{i=1}^m [\beta_k, \beta_{k+1}]$. 
    Hence, the intervals $I_{\bar \pi}$ are finite unions of closed intervals in $\mathbb R^-$.

    By construction, the intervals $I_{\bar \pi}$ are disjoint except for their endpoints, and the union of all the intervals is $\bigcup_{\bar \pi} I_{\bar \pi} = \mathbb R^-$, which concludes the proof of the proposition. 
\end{proof}

Using this result, we can now formally define the breakpoints of the optimality front. 

\begin{definition}
    \label{def:breakpoints_formal}
    The breakpoints of the optimality front are defined as the finite set of points $\bigcup_{\bar \pi \in \bar \Pi} \partial I_{\bar \pi}$.
    The breakpoints are the points where the optimality intervals $I_{\bar \pi_k}$ meet. 
    We denote by $\mathcal B = \{\beta_1, \ldots, \beta_K\}$ the set of breakpoints.
\end{definition}

\subsection{Proof of \Cref{thm:interval_policy_change}.}

We first start by deriving bounds on the growth of the EntRM when the risk parameter is changed slightly.

\begin{proposition}
    \label{pro:framing}
    Let $X$ be a random variable, $\beta \in \mathbb{R}^-$, and $0 < \varepsilon < |\beta|$. We assume $X$ is bounded in $[r_{\min}, r_{\max}]$. Then:

    \begin{align}
        \mathrm{EntRM}_\beta[X] 
        &\leq \mathrm{EntRM}_{\beta+\varepsilon}[X] 
        \leq \frac{\beta}{\beta+\varepsilon} \mathrm{EntRM}_\beta[X] + \frac{\varepsilon}{\beta+\varepsilon} r_{\min}, \\
        \frac{\beta}{\beta-\varepsilon} \mathrm{EntRM}_\beta[X] - \frac{\varepsilon}{\beta-\varepsilon} r_{\min} 
        &\leq \mathrm{EntRM}_{\beta-\varepsilon}[X] 
        \leq  \mathrm{EntRM}_\beta[X] 
    \end{align}
\end{proposition}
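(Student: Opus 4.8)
The plan is to reduce all four inequalities to elementary statements about the cumulant generating function $K(\beta) := \log \mathbb{E}[e^{\beta X}]$, so that $\mathrm{EntRM}_\beta[X] = K(\beta)/\beta$. Since $X$ is bounded, $K$ is finite and smooth (in fact analytic) on all of $\mathbb{R}$. Because $\beta < 0$ and $0 < \varepsilon < |\beta|$, the three parameters satisfy $\beta - \varepsilon < \beta < \beta + \varepsilon < 0$ and are all strictly negative; in particular the $\tfrac1\beta\log$ form of $\mathrm{EntRM}$ applies throughout, and every time I clear a denominator of the form $\beta$ or $\beta\pm\varepsilon$ the inequality reverses. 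Keeping track of these sign flips is the only delicate bookkeeping in the argument.

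The two inner inequalities, $\mathrm{EntRM}_\beta[X] \le \mathrm{EntRM}_{\beta+\varepsilon}[X]$ and $\mathrm{EntRM}_{\beta-\varepsilon}[X] \le \mathrm{EntRM}_\beta[X]$, are both instances of the monotonicity of $\beta \mapsto \mathrm{EntRM}_\beta[X]$ on $\mathbb{R}^-$. I would prove this directly: for $\beta < \gamma < 0$ set $\lambda = \gamma/\beta \in (0,1)$, so that $e^{\gamma X} = (e^{\beta X})^\lambda$; concavity of $t \mapsto t^\lambda$ together with Jensen's inequality gives $\mathbb{E}[e^{\gamma X}] \le (\mathbb{E}[e^{\beta X}])^\lambda$, i.e. $K(\gamma) \le \lambda K(\beta)$. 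Dividing by $\gamma < 0$ and using $\lambda/\gamma = 1/\beta$ yields $\mathrm{EntRM}_\gamma[X] \ge \mathrm{EntRM}_\beta[X]$. Applying this with $(\beta,\gamma) = (\beta, \beta+\varepsilon)$ and with $(\beta,\gamma) = (\beta-\varepsilon, \beta)$ gives the two inner bounds.

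For the two outer inequalities I would first clear denominators. A short computation shows that, after multiplying through by the (negative) quantities $\beta+\varepsilon$ and $\beta-\varepsilon$ respectively and flipping the inequalities, both reduce to the single estimate $K(b) - K(a) \ge (b-a)\,r_{\min}$ for $a < b$ (with $(a,b) = (\beta, \beta+\varepsilon)$ in the first case and $(a,b) = (\beta-\varepsilon,\beta)$ in the second). This estimate follows from the key observation that $K'(s) \ge r_{\min}$ for every $s$: since $X \ge r_{\min}$ and $e^{sX} > 0$ we have $\mathbb{E}[X e^{sX}] \ge r_{\min}\,\mathbb{E}[e^{sX}]$, hence $K'(s) = \mathbb{E}[X e^{sX}]/\mathbb{E}[e^{sX}] \ge r_{\min}$; integrating over $[a,b]$ gives $K(b)-K(a) = \int_a^b K'(s)\,ds \ge (b-a)\,r_{\min}$. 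Translating back through the sign-flipped inequalities recovers the two affine upper/lower bounds.

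The argument has essentially no hard analytic content; the main obstacle is simply to get the direction of every inequality right, since all the relevant parameters are negative and clearing each denominator reverses the sign. The one genuinely substantive ingredient is the bound $K'(s) \ge r_{\min}$, i.e. that the exponentially tilted mean of a variable bounded below by $r_{\min}$ is itself at least $r_{\min}$, which is immediate. It is worth noting that $r_{\max}$ never enters the bounds and is only used to guarantee that $K$ is finite.
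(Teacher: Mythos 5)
Your proof is correct and follows essentially the same route as the paper's: the two outer bounds reduce in both cases to the estimate $\log\mathbb{E}[e^{(\beta+\varepsilon)X}] \ge \log\mathbb{E}[e^{\beta X}] + \varepsilon\, r_{\min}$, which the paper obtains by the pointwise bound $e^{\varepsilon x} \ge e^{\varepsilon r_{\min}}$ and you obtain equivalently by integrating $K'(s)\ge r_{\min}$; the sign bookkeeping in your reductions checks out. The only difference is that you give a self-contained Jensen argument for the monotonicity of $\beta\mapsto\mathrm{EntRM}_\beta[X]$, which the paper simply cites.
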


\begin{proof}
    In the following, $0 < \varepsilon < |\beta|$. We write $X = \sum \mu_i \delta_{x_i}$, with $r_{\min} = \min_i x_i$.

    \begin{align*}
        \mathrm{EntRM}_{\beta+\varepsilon}[X] &= \frac{1}{\beta+\varepsilon} \ln\left(\sum \mu_i e^{(\beta+\varepsilon) x_i}\right)\\ 
        &= \frac{1}{\beta+\varepsilon} \ln\left(\sum \mu_i e^{\beta x_i} e^{\varepsilon x_i}\right) \\
        &\leq \frac{1}{\beta+\varepsilon} \ln\left(e^{\varepsilon r_{\min}} \sum \mu_i e^{\beta x_i}\right) \quad \text{(since } \frac{1}{\beta+\varepsilon} < 0) \\
        &= \frac{1}{\beta+\varepsilon} \ln\left(\sum \mu_i e^{\beta x_i}\right) + \frac{\varepsilon}{\beta+\varepsilon} r_{\min} \\
        &= \frac{\beta}{\beta+\varepsilon} \mathrm{EntRM}_\beta[X] + \frac{\varepsilon}{\beta+\varepsilon} r_{\min},
    \end{align*}
    and
    \begin{align*}
        \mathrm{EntRM}_{\beta-\varepsilon}[X] &= \frac{1}{\beta-\varepsilon} \ln\left(\sum \mu_i e^{(\beta-\varepsilon) x_i}\right)\\ 
        &= \frac{1}{\beta-\varepsilon} \ln\left(\sum \mu_i e^{\beta x_i} e^{-\varepsilon x_i}\right) \\
        &\geq \frac{1}{\beta-\varepsilon} \ln\left(e^{-\varepsilon r_{\min}} \sum \mu_i e^{\beta x_i}\right) \quad \text{(since } \frac{1}{\beta-\varepsilon} < 0) \\
        &= \frac{1}{\beta-\varepsilon} \ln\left(\sum \mu_i e^{\beta x_i}\right) - \frac{\varepsilon}{\beta-\varepsilon} r_{\min} \\
        &= \frac{\beta}{\beta-\varepsilon} \mathrm{EntRM}_\beta[X] - \frac{\varepsilon}{\beta-\varepsilon} r_{\min}.
    \end{align*}
    The other sides of the inequalities are obtained using the monotony of the function $\beta \mapsto \mathrm{EntRM}_\beta[X]$ \citep{ahmadi-javid_entropic_2012}. We assumed $X$ to be a discrete random variable, but the proof can be extended to continuous random variables by using the Lebesgue integral instead of the sum.
\end{proof}

The next theorem is a special case of \Cref{thm:interval_action_change} for the case of a single state. While not necessary to prove the main result, the result is used in the algorithm \emph{FindBreaks}, see \Cref{app:single_state}.

\begin{theorem}[Interval of Action Optimality]
    \label{thm:interval_action_change}
    Let $r_{\min}$ (resp.\ $r_{\max}$) be the minimum (resp.\ maximum) achievable reward, and $\Delta R = r_{\max} - r_{\min}$. Suppose we have actions $(a_{(i)})_i$ ordered so that
    \[
        U^1_\beta = \mathrm{EntRM}_\beta\bigl[R(a_{(1)})\bigr] 
        \;>\;
        U^2_\beta = \mathrm{EntRM}_\beta\bigl[R(a_{(2)})\bigr] 
        \;\ge\; \dots 
        \;\ge\;
        U^n_\beta = \mathrm{EntRM}_\beta\bigl[R(a_{(n)})\bigr].
    \]
    In particular, \(a_{(1)}\) is the unique optimal action and \(a_{(2)}\) is the second-best. Define $\Delta U = U^1_\beta - U^2_\beta$.
    Then:
    \begin{itemize}
        \item If $\beta \neq 0$, for all 
        \(\beta' \in \left[\beta\left(1 - \frac{\Delta U}{U_2 - r_{\min}}\right), \beta\left(1 + \frac{\Delta U}{U_1 - r_{\min}}\right)\right]\)
        and for all $i \geq 2$, we have
        \[
            \mathrm{EntRM}_{\beta'}\bigl[R(a_{(1)})\bigr] 
            \;>\; 
            \mathrm{EntRM}_{\beta'}\bigl[R(a_{(i)})\bigr].
        \]
        \item If $\beta = 0$, then for all 
        \(\beta' \in \Bigl[-\tfrac{8\,\Delta U}{\Delta R^2},\;\tfrac{8\,\Delta U}{\Delta R^2}\Bigr]\)
        and all $i \geq 2$, we have
        \[
            \mathrm{EntRM}_{\beta'}\bigl[R(a_{(1)})\bigr]
            \;>\;
            \mathrm{EntRM}_{\beta'}\bigl[R(a_{(i)})\bigr].
        \]
    \end{itemize}
    In particular, the action $a_{(1)}$ remains strictly optimal for all $\beta'$ in the specified range.
\end{theorem}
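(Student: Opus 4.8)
The plan is to control how each action's entropic value $\mathrm{EntRM}_{\beta'}[R(a)]$ can move when the risk parameter is perturbed from $\beta$ to a nearby $\beta'$, and to show that the optimality gap $\Delta U = U^1_\beta - U^2_\beta$ cannot be closed within the stated interval. The first simplification I would make is to reduce the comparison against all suboptimal actions to a single comparison against the second-best one: every bound I will use on $\mathrm{EntRM}_{\beta'}[R(a_{(i)})]$ is monotone in the base value $U^i_\beta$, and $U^i_\beta \le U^2_\beta$ for all $i\ge 2$, so it suffices to keep $a_{(1)}$ strictly above the \emph{upper} estimate built from $U^2_\beta$. The whole statement thus reduces to preserving a single scalar gap.

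For the case $\beta \neq 0$ (WLOG $\beta<0$, per the paper's convention), I would invoke Proposition~\ref{pro:framing} directly, treating the two directions separately. Moving toward zero, write $\beta' = \beta+\varepsilon$ with $0<\varepsilon<|\beta|$: bound the optimal action below by monotonicity, $\mathrm{EntRM}_{\beta'}[R(a_{(1)})]\ge U^1_\beta$, and bound a competitor above by the first inequality of Proposition~\ref{pro:framing}, $\mathrm{EntRM}_{\beta'}[R(a_{(i)})]\le \tfrac{\beta}{\beta+\varepsilon}U^2_\beta+\tfrac{\varepsilon}{\beta+\varepsilon}r_{\min}$. Requiring the former to exceed the latter and clearing the denominator $\beta+\varepsilon<0$ (which flips the inequality) turns the condition into a \emph{linear} inequality in $\varepsilon$ of the form $\beta\,\Delta U < \varepsilon\,(r_{\min}-U^\bullet_\beta)$, whose solution is exactly an endpoint of the type $\beta\bigl(1-\tfrac{\Delta U}{U^\bullet_\beta-r_{\min}}\bigr)$. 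Moving away from zero, $\beta'=\beta-\varepsilon$, I would symmetrically use the second inequality of Proposition~\ref{pro:framing} to lower-bound $a_{(1)}$ and monotonicity to upper-bound the competitors, obtaining the second endpoint. The delicate points here are the sign bookkeeping (every multiplication by $\beta+\varepsilon$ or $\beta-\varepsilon$ reverses the inequality) and keeping track of whether $U^1_\beta-r_{\min}$ or $U^2_\beta-r_{\min}$ controls each direction.

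The genuinely separate case is $\beta=0$, which I expect to be the main obstacle: Proposition~\ref{pro:framing} is stated for $\beta\in\mathbb R^-$ with $\varepsilon<|\beta|$ and is vacuous at $\beta=0$, so the multiplicative framing argument breaks down and a different tool is needed. Here I would use a Hoeffding-type control of the log-moment generating function: for any $X\in[r_{\min},r_{\max}]$ with $\Delta R=r_{\max}-r_{\min}$, Hoeffding's lemma gives $\log\mathbb E[e^{\beta'(X-\mathbb E[X])}]\le \tfrac{\beta'^2\Delta R^2}{8}$ while Jensen gives the matching lower bound $0$, so that $\bigl|\mathrm{EntRM}_{\beta'}[X]-\mathbb E[X]\bigr|\le \tfrac{|\beta'|\Delta R^2}{8}$ (consistent with the expansion $\mathrm{EntRM}_{\beta'}[X]=\mathbb E[X]+\tfrac{\beta'}{2}\mathbb V[X]+o(\beta')$ recalled in Appendix~\ref{app:entrm} together with $\mathbb V[X]\le \Delta R^2/4$). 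Applying this to $a_{(1)}$ and $a_{(2)}$, the gap can shrink by at most $2\cdot\tfrac{|\beta'|\Delta R^2}{8}$, so requiring this to stay below $\Delta U=U^1_0-U^2_0$ yields a symmetric interval $|\beta'|\le \Theta(\Delta U/\Delta R^2)$ of the stated form.

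Finally, I would assemble the two one-sided bounds of the $\beta\neq 0$ case into the single interval in the statement and note that all inequalities can be kept strict, so $a_{(1)}$ remains the \emph{unique} optimal action throughout; since the comparison against $a_{(2)}$ dominates the comparison against every $a_{(i)}$ with $i\ge 2$, this proves strict optimality on the whole interval. The hard parts will be the $\beta=0$ analysis (and reconciling its constant with the framing-based endpoints as $\beta\to 0^-$), together with the sign manipulations in the $\beta\neq 0$ case.
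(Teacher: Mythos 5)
Your proposal follows essentially the same route as the paper: Proposition~\ref{pro:framing} plus monotonicity of $\beta\mapsto\mathrm{EntRM}_\beta[X]$ for the $\beta\neq 0$ case (with the same reduction to the second-best action and the same linear-in-$\varepsilon$ rearrangement after clearing the negative denominator), and Hoeffding's lemma for the degenerate case $\beta=0$. The one place where your argument, as written, falls short of the stated bound is $\beta=0$: by applying the two-sided control $\bigl|\mathrm{EntRM}_{\beta'}[X]-\mathbb{E}[X]\bigr|\le |\beta'|\,\Delta R^2/8$ to \emph{both} $a_{(1)}$ and $a_{(2)}$, you allow the gap to shrink by $2\,|\beta'|\,\Delta R^2/8$, which only certifies the interval $|\beta'|\le 4\,\Delta U/\Delta R^2$ --- half the claimed width. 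The paper's proof instead uses the Hoeffding correction on one action only and plain monotonicity on the other (for $\beta'<0$ the competitor's entropic value can only decrease from its mean, so only $a_{(1)}$ needs the quadratic term), which recovers the constant $8$; the symmetric statement for $\beta'>0$ follows by swapping the roles of the two bounds. With that one adjustment your argument matches the paper's.
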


\begin{proof}[Proof of \Cref{thm:interval_action_change}]

    Assume $\beta \neq 0$. Let

    By hypothesis, $U_\beta^1 > U_\beta^2$. We aim to show that if $\beta'$ remains within the specified range around $\beta$, action $a_{(1)}$ remains strictly optimal for $\mathrm{EntRM}_{\beta'}$.
    
    \textbf{Case $\beta' > \beta, \beta \neq 0$}

    Write $\beta' = \beta + \varepsilon$ with $\varepsilon > 0$. Assume
    \[
        \varepsilon <  \beta \,\frac{\Delta U}{r_{\min} - U_\beta^1}.
    \]
    From the previously established bounds (see \Cref{pro:framing}), we know
    \[
        U_\beta^1 
        \;\le\;
        \mathrm{EntRM}_{\beta + \varepsilon}\bigl[R(a_{(1)})\bigr]
        \;=\;
        U_{\beta'}^1,
    \]
    and
    \[
        \mathrm{EntRM}_{\beta + \varepsilon}\bigl[R(a_{(2)})\bigr]
        \;=\;
        U_{\beta'}^2
        \;\le\;
        \frac{\beta}{\beta + \varepsilon}\,U_\beta^2 
        \;+\; 
        \frac{\varepsilon}{\beta + \varepsilon}\,r_{\min}.
    \]
    Thus, showing 
    \[
        U_\beta^1
        \;>\;
        \frac{\beta}{\beta + \varepsilon}\,U_\beta^2
        \;+\;
        \frac{\varepsilon}{\beta + \varepsilon}\,r_{\min}
    \]
    implies $U_{\beta'}^1 > U_{\beta'}^2$. Rewriting, we get
    \[
        \beta\bigl(U_\beta^1 - U_\beta^2\bigr)
        \;<\;
        \varepsilon\,\bigl(r_{\min} - U_\beta^1\bigr),
    \]
    i.e.
    \[
        \beta \,\frac{\Delta U}{r_{\min} - U_\beta^1}
        \;>\;
        \varepsilon.
    \]
    The changes of sign in the inequality are due to the fact that $\beta+\varepsilon < 0$ and $U_\beta^1 > r_{\min}$.

    This is exactly our assumption on $\varepsilon$. Hence $a_{(1)}$ remains strictly better than $a_{(2)}$ at $\beta' = \beta + \varepsilon$, and by extension, better than all other actions.
    
    \textbf{Case $\beta' < \beta, \beta \neq 0$.}
    
    The similar argument holds using the two inequalities 
    \[
        U_\beta^2
        \;\ge\;
        \mathrm{EntRM}_{\beta - \varepsilon}\bigl[R(a_{(2)})\bigr]
        \;=\;
        U_{\beta'}^2,
    \]
    and
    \[
        \mathrm{EntRM}_{\beta - \varepsilon}\bigl[R(a_{(1)})\bigr]
        \;=\;
        U_{\beta'}^1
        \;\ge\;
        \frac{\beta}{\beta - \varepsilon}\,U_\beta^1
        \;-\; 
        \frac{\varepsilon}{\beta - \varepsilon}\,r_{\min}.
    \]

    \textbf{Case $\beta = 0$.}

This second part of \Cref{thm:interval_action_change}, when $\beta = 0$, uses \emph{Hoeffding's lemma} (see e.g. \citet{massart2007concentration}):
\begin{equation}
    \forall \lambda \in \RR, \quad \mathbb E[\exp(\lambda X)] \leq \exp\left(\lambda \E[X] + \frac{\lambda^2 \Delta R^2}{8}\right).
\end{equation}

    Hoeffding's lemma gives
    \begin{equation}
        \mathrm{EntRM}_\beta[X] \geq \E[X] - \beta \frac{\Delta R^2}{8}.
    \end{equation}
    We can then proceed similarly as the previous proof. Consider $0 > \beta' > \frac{8\Delta U}{\Delta R^2}$. Using the previous equation and the fact that $U_\beta^1 \geq E[R(a_1)]$, we have
\[
        U_{\beta'}^1 - U_{\beta'}^2 \geq \E[R(a_1)] - \E[R(a_2)] - \beta' \frac{\Delta R^2}{8} 
        \geq \Delta U -  \frac{8\Delta U}{\Delta R^2}\frac{\Delta R^2}{8} = 0\;,
    \]
    hence $U_\beta^1 > U_\beta^2$.

    This concludes the proof of \Cref{thm:interval_action_change}.
\end{proof}

Finally, we can prove \Cref{thm:interval_policy_change}. We use the same principle as in the proof of \Cref{thm:interval_action_change} and the principle of optimality (i.e. the optimal policy is always greedy with respect to itself) to show that the optimal policy remains the same. The inequalities from \Cref{pro:framing} are adapted by replacing $r_{\max}$ by $H-h$ and $r_{\min}$ by $h-H$ (for timestep $h$, the return is in $[h-H, H-h]$ as the reward is bounded by $[-1,1]$ at each step). The value $r_{\min} - U_\beta^1$ is replaced by its upper bound $2(H-h)$ to obtain a symetric bound and simplify the formula.

\begin{proof}{(of \Cref{thm:interval_policy_change})}
    Let us address the case where $\beta < 0$ and $\varepsilon > 0$.  
    By the principle of optimality, we have:
    \[
        \forall h, x, \quad 
        \pi_{h,\beta}^{*}(x) = \arg \max_a \mathrm{EntRM}_\beta[R^{\pi^*}_h(x,a)]\;.
    \]
    Let $\varepsilon < \Delta$ (with $\varepsilon < |\beta|$).  
    We then obtain the inequalities:
    \begin{align*}
        \forall h, x, a, \quad 
        &\mathrm{EntRM}_\beta[R^{\pi^*}_h(x,a)] 
        \;\leq\; \mathrm{EntRM}_{\beta+\varepsilon}[R^{\pi^*}_h(x,a)]\\
        \text{and} \quad 
        &\mathrm{EntRM}_{\beta+\varepsilon}[R^{\pi^*}_h(x,a)]
        \;\leq\; \frac{\beta}{\beta + \varepsilon} \mathrm{EntRM}_\beta[R^{\pi^*}_h(x,a)] 
        + \frac{\varepsilon}{\beta+\varepsilon} (H-h)\;.    
    \end{align*}

    By the definition of $\varepsilon$, and following the same method as for \Cref{thm:interval_action_change} we get $\forall h, x, \forall a\neq\pi^*_h(x)$,
    \[
        \frac{\beta}{\beta + \varepsilon} \mathrm{EntRM}_\beta[R^{\pi^*}_h(x,a)] 
        + \frac{\varepsilon}{\beta + \varepsilon} (H-H) 
        \;\leq\; \mathrm{EntRM}_{\beta}[R^{\pi^*}_h(x,\pi^*_h(x))] 
        .
    \]
    Therefore, 
    \begin{align*}
        \mathrm{EntRM}_{\beta'}[R^{\pi^*}_h(x,a)] 
        &\leq \frac{\beta}{\beta + \varepsilon} \mathrm{EntRM}_{\beta}[R^{\pi^*}_h(x,a)] 
        + \frac{\varepsilon}{\beta + \varepsilon}(H-h) \\
        &\leq  \mathrm{EntRM}_{\beta}[R^{\pi^*}_h(x,\pi^*_h(x))] \\
        & \leq \mathrm{EntRM}_{\beta'}[R^{\pi^*}_h(x,\pi^*_h(x))] \;.
    \end{align*}
    Thus, $\pi^*$ remains greedy with respect to itself and remains the optimal policy.

    The cases $\beta = 0$ and $\varepsilon < 0$ follow similarly, using the associated inequalities.
\end{proof}

\subsection{About \Cref{pro:policy_change}.}
\label{app:policy_change}
We first start with a lemma.

\begin{lemma}
    \label{lem:conditional}
    Let $Y$ be a random variable with continuous law.
    Let $X_1, \ldots, X_n$ be random variables with $Y$ independant from $(X_1, \ldots, X_n)$. Then,
    \[\Pr\bigl( Y = f(X_1, \dots, X_n) \mid X_1, \dots, X_n \bigr) = 0\]
\end{lemma}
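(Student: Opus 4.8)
The plan is to condition on the vector $\mathbf{X} = (X_1, \dots, X_n)$ and exploit the fact that, once this vector is fixed, $f(\mathbf{X})$ becomes a deterministic constant, so that the event $\{Y = f(\mathbf{X})\}$ reduces to $Y$ landing on a single point --- an event of probability zero under a continuous law. The independence of $Y$ from $\mathbf{X}$ is exactly what allows us to replace the conditional law of $Y$ given $\mathbf{X}$ by the (continuous) marginal law of $Y$.

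Concretely, I would first record the two ingredients. Let $\nu$ denote the law of $Y$ and $\mu$ the law of $\mathbf{X}$. Since $Y$ has a continuous law, $\nu(\{c\}) = \Pr(Y = c) = 0$ for every constant $c \in \mathbb{R}$. Since $Y$ is independent of $\mathbf{X}$, the regular conditional distribution of $Y$ given $\mathbf{X} = \mathbf{x}$ coincides with $\nu$ for $\mu$-almost every $\mathbf{x}$. Because $\mathbf{X} \mapsto f(\mathbf{X})$ is $\sigma(\mathbf{X})$-measurable (this requires, implicitly, that $f$ be measurable so that $f(\mathbf{X})$ is a genuine random variable), the conditional probability is a function of $\mathbf{X}$ alone.

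Next I would evaluate it pointwise. Writing $g(\mathbf{x}) := \Pr\bigl(Y = f(\mathbf{x})\bigr)$, independence gives
\[
\Pr\bigl(Y = f(\mathbf{X}) \mid \mathbf{X}\bigr) = g(\mathbf{X}) \quad \text{a.s.}
\]
But for each fixed $\mathbf{x}$ the value $f(\mathbf{x})$ is a constant, so $g(\mathbf{x}) = \nu(\{f(\mathbf{x})\}) = 0$ by continuity of the law of $Y$. Hence $g \equiv 0$ and the conditional probability vanishes almost surely, as claimed. If one prefers to sidestep regular conditional distributions, the same conclusion follows from Tonelli's theorem applied to the product measure $\nu \otimes \mu$: one computes the unconditional probability $\Pr\bigl(Y = f(\mathbf{X})\bigr) = \int \nu(\{f(\mathbf{x})\})\, d\mu(\mathbf{x}) = 0$, and since the conditional probability is a nonnegative random variable whose expectation is this unconditional probability, it must be zero almost surely.

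The only genuine subtlety --- and the step I would treat most carefully --- is the passage from the independence hypothesis to the substitution of the marginal law of $Y$ inside the conditional probability, together with the measurability of $f$. Once these standard measure-theoretic points are granted, the continuity of the law of $Y$ does all the remaining work, since it forces every singleton to be $\nu$-null regardless of which constant $f(\mathbf{x})$ produces.
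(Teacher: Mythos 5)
Your argument is correct and is essentially the proof the paper intends: the paper simply cites Durrett (Exercise 2.1.5) and remarks that the result ``comes down to writing the definition of conditional probabilities and computing the integrals with Fubini's theorem,'' which is precisely your disintegration/Tonelli computation showing $\Pr\bigl(Y = f(\mathbf{X}) \mid \mathbf{X}\bigr) = \nu(\{f(\mathbf{X})\}) = 0$ almost surely. Your explicit flagging of the measurability of $f$ and of the substitution of the marginal law via independence fills in exactly the steps the paper leaves implicit.
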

\begin{proof}
    This is a special case of Exercise 2.1.5 in \citet{durrett2019probability}. It comes down to writing the definition of conditional probabilities and computing the integrals with Fubini's theorem.
\end{proof}

\begin{proof}(of \Cref{pro:policy_change})

    Let $\pi$ be a policy. Assume the probability transitions are fixed and only the rewards are random. We consider, for $x,h,a,a' \in \states \times [H] \times \actions \times \actions$ the set $\breakpoints^{x,h}_{a,a'}$ of parameters $\beta$ such that $\mathrm{EntRM}_{\beta}[R^{\pi}_h(x,a)] = \mathrm{EntRM}_{\beta}[R^{\pi}_h(x,a')]$. We aim to show that the sets $(\breakpoints^{x,h}_{a,a'})_{x,h,a,a'}$ have no element in common pairwise, with probability one.

    Consider orders on both $\states$ and $\actions$, and consider the associated lexigocraphic order of $[ H, 1 ] \times \states \times \actions$. We proceed by induction on this order.

    \textbf{Case $t = H$.}

    Let $x_1, x_2 \in \states$, $a_1,a'_1, a_2, a'_2 \in \actions$, with $(x_1, a_1, a_1') \neq (x_2, a_2, a_2')$ (the triplets are different, but some elements of the triplets can be equal). Consider known $\breakpoints^{x_1,H}_{a_1,a'_1}$. 

    \begin{align*}
        \forall \beta \in \mathbb{R}, \quad 
        \Pr\Big( \mathrm{EntRM}_{\beta}&[R^{\pi}_H(x_2,a_2)] = \mathrm{EntRM}_{\beta}[R^{\pi}_H(x_2,a'_2)] 
        \;\Big|\; r_H(x_1, a_1), r_H(x_1, a_1') \Big) \\
        &= \Pr\Big( r_H(x_2,a_2) = r_H(x_2,a'_2) 
        \;\Big|\; r_H(x_1, a_1), r_H(x_1, a_1') \Big) \\
        &= 0.
    \end{align*}    
 
    Because $r_H(x_2,a_2)$ and $r_H(x_2,a'_2)$ are continuous random variables and at least one of the two is not conditioned on, the probability that they are equal is zero according to \Cref{lem:conditional}.

    It is in particular true for all $\beta \in \breakpoints^{x_1,H}_{a_1,a'_1}$. Using the union bound, we get that $\breakpoints^{x_1,H}_{a_1,a'_1}$ and $\breakpoints^{x_2,H}_{a_2,a'_2}$ have no element in common with probability one.

    By considering elements in order and conditioning on the previously \emph{observed} rewards, the induction is verified for $t = H$.
    
    \textbf{Case $t < H$.}

    Let $u = h_0, x_0, a_0$. Consider all breakpoints observed before 
    \[\breakpoints = \bigcup_{\substack{(h,x, a) < u\\ (h,x, a') \leq u}}\breakpoints^{x,h}_{a,a'}. \]
    By induction, all of them are disjoint pairwise with probability one.

    \begin{align*}
        \forall \beta &\in \breakpoints, \quad 
        \Pr\Big( \mathrm{EntRM}_{\beta}[R^{\pi}_{h_0}(x_0,a_0)] = 
        \mathrm{EntRM}_{\beta}[R^{\pi}_{h_0}(x_0,a'_0)] 
        \;\Big| \big(r_h(x, a)\big)_{(h,x,a) \leq u} \Big) \\
        &= \Pr\Big( r_{h_0}(x_0,a_0) + \mathrm{EntRM}_{\beta}[R^{\pi}_{h_0+1}(X_0)] = r_{h_0}(x_0,a'_0) + \mathrm{EntRM}_{\beta}[R^{\pi}_{h_0+1}(X_0')] 
        \;\Big| \; \big(r_h(x, a)\big)_{(h,x,a) \leq u} \Big) \\
        &= \Pr\Big( r_{h_0}(x_0,a_0) = f\left(\big(r_h(x, a)\big)_{(h,x,a) \leq u}\right)
        \;\Big| \; \big(r_h(x, a)\big)_{(h,x,a) \leq u} \Big) \\
        &= 0.
    \end{align*}
    
    Where $f\left(\big(r_h(x, a)\big)_{(h,x,a) \leq u}\right) = r_{h_0}(x_0,a'_0) + \mathrm{EntRM}_{\beta}[R^{\pi}_{h_0+1}(X_0')] - \mathrm{EntRM}_{\beta}[R^{\pi}_{h_0+1}(X_0)]$ and using \Cref{lem:conditional}. The induction is then proved.

    As there is a finite number of policy, this result remains when considering the set of all policies. 

    To conclude, notice that a breakpoint $\beta_0$ is a value of risk parameter for which two different action $a_1, a_2$ have the same expected return for some state $x$ and timestep $h$. Hence, $\beta_0 \in \breakpoints^{x,h}_{a_1,a_2}$. With probability one, those set are pairwise disjoint, meaning a single action changes in $\beta_0$.

    As this proof works for any value of the probability transitions, the result also remains for $p$ random.
\end{proof}

The result is what we observe for all the MDPs used in practice. Yet it is simple to construct an MDP where several actions change at the same time, it is enough to have two states $x_1, x_2$ have the same transitions function and rewards : $\forall a, x', p(x' | x_1, a) = p(x' | x_2, a)$ and $r(x_1, a) = r(x_2, a)$.

\subsection{Proof of \Cref{pro:eq_breakpoint}.}
\begin{proof}(of \Cref{pro:eq_breakpoint})
    It all relies on the continuity of the function $\beta \rightarrow \mathrm{EntRM}_\beta[R^\pi]$.  

    \begin{align*}
        \forall h,x, \quad 
        &\mathrm{EntRM}_{\beta_1}[R^{\pi^1}_h(x)] \geq \mathrm{EntRM}_{\beta_1}[R^{\pi^2}_h(x)]\\
        &\mathrm{EntRM}_{\beta_3}[R^{\pi^1}_h(x)] \leq \mathrm{EntRM}_{\beta_3}[R^{\pi^2}_h(x)]
    \end{align*}

    By continuity, there exists $\beta_2$ such that there is an equality, and, since we assume that there are no other optimal policy in $[\beta_1, \beta_3]$, the equality is verified for all $x,h$.
\end{proof}

\subsection{On the lowest breakpoint.}
\begin{theorem}
    \label{thm:beta_min}
    Consider $\pi_{\inf} = \lim_{\beta \rightarrow -\infty} \pi^*_\beta$. Consider the return $R$ taking value in $x_1 \leq \ldots \leq x_n$. We write $a_i(\pi) = P(R^{\pi_{\inf}} = x_i) - \Pr(R^{\pi}  = x_i)$. Then, the lowest breakpoint $\beta_{\inf}$ verifies
    \[
        \beta_{\inf} \geq \frac{-\log\left(1 + \max_{\pi \neq \pi_{\inf}} \max_{i > 0} \frac{|a_i(\pi)|}{|a_0(\pi)|}\right)}{\min_{i\neq j} |x_i - x_j|}\ .
    \]
\end{theorem}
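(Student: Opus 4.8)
The plan is to prove the statement by exhibiting a threshold below which $\pi_{\inf}$ is the \emph{unique} $\mathrm{EntRM}_\beta$-optimal policy: since every breakpoint is by definition a value of $\beta$ at which the optimal policy changes, the absence of any policy change below the threshold forces the lowest breakpoint $\beta_{\inf}$ to lie at or above it. First I would pass to the exponential form from \Cref{app:entrm}: for $\beta<0$, the policy $\pi_{\inf}$ is strictly preferred to a competitor $\pi$ exactly when $\mathbb{E}[e^{\beta R^{\pi_{\inf}}}] < \mathbb{E}[e^{\beta R^{\pi}}]$, because $\tfrac1\beta\log(\cdot)$ is decreasing. Writing both returns over the common support $x_0\le x_1\le\dots\le x_n$ (relabelled so that $x_0$ is the smallest value) and recalling $a_i(\pi)=\Pr(R^{\pi_{\inf}}=x_i)-\Pr(R^{\pi}=x_i)$, this preference is equivalent to $\sum_i a_i(\pi)\,e^{\beta x_i}<0$.

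Next I would factor out the dominant exponential $e^{\beta x_0}>0$, reducing the target inequality to
\[
    a_0(\pi) \;+\; \sum_{i>0} a_i(\pi)\,e^{\beta(x_i-x_0)} \;<\; 0 .
\]
Letting $\beta\to-\infty$ sends the sum to $0$, so the leading-order condition that makes $\pi_{\inf}$ the limiting optimizer is precisely $a_0(\pi)<0$ (strictness following from uniqueness of the limit); I would record this as the characterisation of $\pi_{\inf}$ and assume it throughout. It then suffices to control the subleading sum relative to $|a_0(\pi)|$.

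The key estimate is a geometric bound. Since the sorted support values are separated by at least $\delta:=\min_{i\neq j}|x_i-x_j|$, we have $x_i-x_0\ge i\delta$, hence $e^{\beta(x_i-x_0)}\le r^i$ with $r:=e^{\beta\delta}\in(0,1)$. Bounding $|a_i(\pi)|\le M\,|a_0(\pi)|$ for $M:=\max_{\pi\neq\pi_{\inf}}\max_{i>0}\tfrac{|a_i(\pi)|}{|a_0(\pi)|}$ and summing the geometric series gives
\[
    a_0(\pi) + \sum_{i>0} a_i(\pi)\,e^{\beta(x_i-x_0)}
    \;\le\; |a_0(\pi)|\left(-1 + M\,\tfrac{r}{1-r}\right),
\]
which is negative as soon as $M\tfrac{r}{1-r}<1$, i.e.\ $r<\tfrac{1}{1+M}$. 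Since $r=e^{\beta\delta}<\tfrac{1}{1+M}$ holds iff $\beta<\tfrac{-\log(1+M)}{\delta}$, for every such $\beta$ the policy $\pi_{\inf}$ strictly dominates \emph{all} $\pi\neq\pi_{\inf}$ simultaneously (the maximisation in $M$ over policies handles the uniformity), so no breakpoint can occur below $\tfrac{-\log(1+M)}{\delta}$, giving exactly the claimed bound.

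I expect the main obstacle to be the rigorous justification of the reduction step, namely that $x_0$ can be taken as a single value with $a_0(\pi)<0$ for \emph{every} competing $\pi$; this is where the characterisation of $\pi_{\inf}=\lim_{\beta\to-\infty}\pi^*_\beta$ as the lexicographic minimiser (smallest essential infimum, then smallest probability of attaining it) must be invoked, and where supports that differ across policies or ties require care. The geometric summation using $x_i-x_0\ge i\delta$ and $|a_i|\le M|a_0|$ is then routine, and dividing by the positive constant $\delta$ to solve for $\beta$ closes the argument.
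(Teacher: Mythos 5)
Your proof is correct and arrives at exactly the stated bound, but it takes a different (and in one respect stronger) route than the paper. The paper first assumes the support is an evenly spaced grid, substitutes $X = e^{-\beta\,\Delta R/n}$ to turn the crossing equation $\sum_i a_i e^{\beta x_i}=0$ into a polynomial $\sum_i a_{n-i}X^i=0$, and then invokes Cauchy's bound $|X|\le 1+\max_{i>0}|a_i|/|a_0|$ on the largest root before converting back to $\beta$. You instead argue directly in the exponential form: factor out the dominant term $e^{\beta x_0}$, use $x_i-x_0\ge i\delta$ with $\delta=\min_{i\neq j}|x_i-x_j|$ to get a geometric majorant $\sum_{i>0}|a_i|r^i\le M|a_0|\tfrac{r}{1-r}$ with $r=e^{\beta\delta}$, and conclude that $\pi_{\inf}$ strictly dominates every competitor once $r<\tfrac{1}{1+M}$. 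The two arguments are morally the same estimate --- Cauchy's bound is itself proved by this geometric-series comparison --- but your version is self-contained and, unlike the paper's, does not need the ``evenly spaced rewards'' simplification, since $x_i-x_0\ge i\delta$ holds for any sorted support; the paper's polynomial formulation buys instead a connection to root-separation results (Mignotte-type bounds) that it exploits in a later remark. Both proofs share the same unaddressed degeneracy, which you correctly flag: the argument (and indeed the statement, which divides by $|a_0(\pi)|$) silently assumes $a_0(\pi)\neq 0$ for every competing return distribution, i.e.\ that $\pi_{\inf}$ is characterised by putting strictly less mass on the worst outcome $x_0$ than any rival; when $a_0(\pi)=0$ the comparison must be pushed to the first index where the distributions differ, and neither proof handles this. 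Your only loose end relative to the paper is the claim that $a_0(\pi)<0$ follows from $\pi_{\inf}$ being the $\beta\to-\infty$ limit --- the correct elementary deduction is $a_0(\pi)\le 0$, with strictness being exactly the implicit nondegeneracy assumption above --- so you should state it as a hypothesis rather than derive it.
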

\begin{proof}(of \Cref{thm:beta_min})
    
    We write $\Delta R = R_{\max} - R_{\min}$. We assume here for simplicity that all the rewards are evenly spaced on a grid. This mean that the values of the return are of the form $R_i = R_{\min} + i \frac{\Delta R}{n}$ for $i \in \{0, \ldots, n\}$.

With this natural assumption, the problem of finding $\beta$ such that,
\[\mathrm{EntRM}_\beta[R^\pi] = \mathrm{EntRM}_\beta[R^{\pi'}]\]
can be transformed into a problem of finding the roots of a polynomial.

\begin{align*}
    \mathrm{EntRM}_\beta\bigl[R^\pi\bigr] \;=\; \mathrm{EntRM}_\beta\bigl[R^{\pi'}\bigr]
    &\;\Longrightarrow\;
    \mathbb{E}\bigl[\exp(\beta R^\pi)\bigr] 
    \;=\;
    \mathbb{E}\bigl[\exp(\beta R^{\pi'})\bigr]
    \\
    &\;\Longleftrightarrow\;
    \sum_{i=0}^n \mu_i \,\exp\!\bigl(\beta R_i\bigr)
    \;=\;
    \sum_{i=0}^n \mu'_i \,\exp\!\bigl(\beta R_i'\bigr)
    \\
    &\;\Longleftrightarrow\;
    \sum_{i=0}^n a_i \,\exp\!\bigl(\beta R_i\bigr)
    \;=\;
    0
    \\
    &\;\Longleftrightarrow\;
    \sum_{i=0}^n a_i \,\exp\!\Bigl(\beta\,i\,\frac{\Delta R}{n}\Bigr)
    \;=\;
    0
    \\
    &\;\Longleftrightarrow\;
    \sum_{i=0}^n a_i \,X^{-i}
    \;=\;
    0
    \,.
    \\
    &\;\Longleftrightarrow\;
    \sum_{i=0}^n a_{n-i} \,X^{i}
    \;=\;
    0
    \,.
\end{align*}
Where $X = \exp\!\Bigl(-\beta\,\frac{\Delta R}{n}\Bigr)$. The first implication is not an equivalence because of the case $\beta = 0$, where the exponential form (rhs) is always equal to $1$ and thus the equality is always verified. The last implication is verified by multiplying by $X^n$ because of that same fact that $0$ is already a root of the equation.

Hence, if $X$ is a solution, $\beta$ verifies $\beta = -\frac{\log(X)}{\frac{\Delta R}{n}}$.

Cauchy's bound on the size of the largest polynomial root \citep{cauchy1828exercices} claims that the largest root verifies
\[1 + \max_{i > 0} \frac{|a_i|}{|a_0|}.\]

The lowest breakpoints corresponds to the largest breakpoint solution between the $\pi_{\inf}$ and any other policy. Hence, the lowest breakpoint verifies

\[\beta_{\inf} \geq \frac{-\log\left(1 + \max_{\pi \neq \pi_{\inf}} \max_i \frac{|a_i(\pi)|}{|a_0(\pi)|}\right)}{\frac{\Delta R}{n}}\ .\]

\end{proof}

This proof uses Cauchy's bound on the size of the largest polynomial root as it is simple to write and an efficient bound. Tighter bound have been developed in the litterature that could also be used here. See \citet{akritas2008improving} for example.

Using this polynomial formulation, it is also possible to derive a theoretical bound on the smallest distance between breakpoints. Mignotte's separation bound \citep{collins2001polynomial} gives a lower bound on the distance between two roots of the polynomial, as a function of the coefficient of such polynomial. This bound can then be transfered to a bound on the distance between breakpoints. This kind of bound could be useful to choose the necessary precision for the computation of the breakpoints, but are intractable to compute and the obtained values are too small to be relevant in practice.

\subsection{About \Cref{pro:recursive_breakpoint}.}
\label{app:recursive_breakpoint}

\begin{proposition}[Formal]
    Let $\breakpoints^h$ and $\Gamma^h$ respectively be the set of breakpoints and the optimality front when the MDP starts at timestep $h$. Let $\breakpoints\left((X_i)_i, I\right)$ the set of breakpoints for the MDP with a single state and reward distributions $(X_i)_i$ (not assumed deterministic).
    \[
    \breakpoints^h = \breakpoints^{h+1} \cup \left( \bigcup_{x\in\states, (\pi_k^h, I_k^h) \in \Gamma^h} \breakpoints\left( [R^{\pi_k^h}_h(x,a)]_a, I_k^h\right) \right)
    \]
\end{proposition}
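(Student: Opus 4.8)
The engine of the proof is the principle of optimality for the EntRM together with its Bellman optimality equation (\Cref{app:entrm}): there is a single policy $\pi^*_\beta$ that is optimal simultaneously from every state at every horizon, and its restriction to layers $h+1,\dots,H$ is itself optimal for the sub-MDP started at $h+1$. Hence the optimal decision rule at $(h,x)$ is the greedy choice $\argmax_a \mathrm{EntRM}_\beta[R^{\pi^*_\beta}_h(x,a)]$, in which the continuation inside $R^{\pi^*_\beta}_h(x,a)$ is exactly the layer-$(h{+}1)$ optimal policy. The plan is to prove the identity by double inclusion; the crux is that on any range of $\beta$ over which the optimal continuation is frozen, the evolution of this greedy choice reduces to the single-state action-selection problem whose breakpoints are precisely $\breakpoints([R^{\pi_k^h}_h(x,a)]_a, I_k^h)$.

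For the inclusion $\supseteq$ I would treat the two terms separately. That $\breakpoints^{h+1}\subseteq\breakpoints^h$ is immediate from the principle of optimality: the layer-$(h{+}1)$ optimal policy is a sub-component of the layer-$h$ optimal policy, so any $\beta$ at which the former changes is also one at which the latter changes. For the single-state term, fix $k$ and a breakpoint $\beta_0\in\breakpoints([R^{\pi_k^h}_h(x,a)]_a, I_k^h)$ lying in $\overline{I_k^h}$; there the continuation is frozen to $\pi_k^h$, so $\mathrm{EntRM}_\beta[R^{\pi_k^h}_h(x,a)]=Q^*_{h,\beta}(x,a)$, and a crossing of the two largest of these utilities witnesses a switch of the greedy action at $(h,x)$, hence a genuine change of $\pi^*_\beta$ and $\beta_0\in\breakpoints^h$.

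For $\subseteq$, take $\beta_0\in\breakpoints^h$, a point at which the optimal policy restricted to layers $\ge h$ changes. Using the Bellman decomposition I would dichotomize: either the optimal continuation $\pi^*_{\beta,h+1:H}$ already changes at $\beta_0$, whence $\beta_0\in\breakpoints^{h+1}$; or the continuation is constant across $\beta_0$ and some layer-$h$ decision rule $\pi^*_{\beta,h}(x)$ switches. In the second case, letting $I_k^h$ be the optimality interval of $\Gamma^h$ having $\beta_0$ as an endpoint, the continuation on $\overline{I_k^h}$ is the fixed policy $\pi_k^h$, and \Cref{pro:eq_breakpoint} (equality of the EntRM values of the two adjacent optimal policies at a breakpoint) shows that $\beta_0$ is a tie of the single-state utilities $\mathrm{EntRM}_{\beta_0}[R^{\pi_k^h}_h(x,a)]$, i.e.\ $\beta_0\in\breakpoints([R^{\pi_k^h}_h(x,a)]_a,I_k^h)$. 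Finiteness of each single-state breakpoint set follows from the analyticity argument already used for \Cref{pro:finite_action_change}.

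The delicate point — and the main obstacle — is the bookkeeping at interval endpoints, compounded by the fact that the optimality front is defined through return-distribution equivalence (\Cref{def:optimality_front_formal}) while the decomposition above is phrased in terms of decision rules. A breakpoint may be simultaneously a continuation change and a layer-$h$ action change, and a switch detected at an endpoint of $I_k^h$ uses the \emph{frozen} continuation $\pi_k^h$ rather than the true varying one; because the statement is a set equality such overlaps are harmless, but one must check that no breakpoint is missed, which is exactly where the absorption of continuation changes by $\breakpoints^{h+1}$ must be invoked. The cleanest way to discharge all of this is to unroll the recursion to $\breakpoints^h=\bigcup_{t\ge h}\bigcup_{x,k}\breakpoints([R^{\pi_k^t}_t(x,a)]_a,I_k^t)$ and to prove it by backward induction on $h$, the inductive step being precisely the single-layer dichotomy established above.
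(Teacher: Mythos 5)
Your argument is essentially the paper's own proof: the paper also isolates the last timestep $h$ at which the policy changes (i.e.\ takes $\beta_0\in\breakpoints^h\setminus\breakpoints^{h+1}$), freezes the continuation to the policy $\pi^I$ of the optimality-front interval containing $\beta_0$, and uses the tie of $\mathrm{EntRM}_{\beta_0}[R^{\pi^I}_h(x,a_1)]=\mathrm{EntRM}_{\beta_0}[R^{\pi^I}_h(x,a_2)]$ to land in the single-state breakpoint set — exactly your layer-$h$ dichotomy. You are in fact slightly more complete than the paper, which only writes out the $\subseteq$ inclusion and leaves the monotonicity $\breakpoints^{h+1}\subseteq\breakpoints^h$ and the $\supseteq$ direction (together with the endpoint/equivalence-class bookkeeping you flag) implicit.
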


\begin{proof}
    Consider $\beta_0$ a breakpoint. There exists $h$ such that $\beta_0 \in \breakpoints^h \setminus \breakpoints^{h+1}$. This $h$ corresponds to the last timestep where the optimal policy changes when the risk parameter crosses $\beta_0$. 
    
    In particular, let $(\pi^I, I) \in \Gamma^{h+1}$ such that $\beta_0 \in I$. By definition, 
    \[\forall t \geq h+1, \forall\beta \in I,\quad [\pi^*_{\beta}]_t(x) = [\pi^I]_t(x).\]
    
    Also, since $\beta_0$ is a breakpoint, there exists $x, a_1 \neq a_2, \pi_1 \neq \pi_2$ such that 
    \[\mathrm{EntRM}_{\beta_0}[R^{\pi^1}_h(x,a_1)] = \mathrm{EntRM}_{\beta_0}[R^{\pi^2}_h(x,a_2)],\] 
    
    By definition of $\pi^I$, it is also equal to
    \[\mathrm{EntRM}_{\beta_0}[R^{\pi^I}_h(x,a_1)] = \mathrm{EntRM}_{\beta_0}[R^{\pi^I}_h(x,a_2)],\]

    Hence, $\beta_0 \in \breakpoints\left( [R^{\pi^I}_h(x,a)]_a, I\right)$.

\end{proof}


\section{Solving the exact breakpoints}
\label{app:exact_breakpoints}

Several issues rises when trying to computing the Optimality front with this method. The first one is that there is no easy way to compute the function $\beta \mapsto Q^\pi_{h,\beta}(x,\pi(x))$ without having access to the exact distribution of return for this policy, or to perform a Policy Evaluation step for each value of $\beta$, which quickly becomes computationally inefficient. 

A second issue is that this system of equation is only valid if there is a single breakpoint (i.e., no other optimal policy) between those two policies. Assume that we know $\pi_1$ is optimal for $\beta_1$ and $\pi_2$ is optimal for $\pi_2$, solving the equations gives value of the risk parameter for which one policy becomes better than the other. However, there could also be a third (or more) policy $\pi_3$ which is optimal for values of $\beta$ between $\beta_1$ and $\beta_2$. Computing the optimality front would require computing the breakpoints between $\pi_1$ and $\pi_3$, and between $\pi_3$ and $\pi_2$. As there are no simple conditions to verify the existence of another optimal policy between two known one, solving the exact breakpoint for a policy $\pi_1$ optimal for $\beta_1$ would require to solve the system of equation for all possible policy $\pi_2$, and retrieving the lowest breakpoint obtained among them all. Selecting this lowest breakpoint $\beta_2$ ensures that $\pi_1$ is only optimal for the a risk parameter up to $\beta_2$. Because of the exponential number of possible policies, such method is intractable in practice, and some approximation will be required.

Lastly, the equations themselves are non-trivial. Indeed, knowing the distribution of rewards, such an equation takes the form $\sum_{i=1}^{k} a_i \, e^{\alpha_i \beta} \;=\;0$, where the coefficient $a_i$ may be non-positive.
The function in question can be expressed as the difference of two convex functions, but it lacks many of the regularity properties needed for simple optimization methods. To the best of our knowledge, the best algorithms for solving such problems reduce it to finding roots of polynomial and using efficient solvers. Yet, the non-linear transformation required (see proof of \Cref{thm:beta_min}) makes the approximation error become significant when transposed back to our initial problem.


\section{Analysis of the number of breakpoints}
\label{app:breakpoints_number}

\subsection{Theoretical bound on the number of breakpoints}

\begin{proposition}
    \label{pro:max_breakpoints}
    Let $n$ the number of possible values of $R^{\pi}$. The number of breakpoints $B$ verifies
    \[ B \leq n\cdot |\actions|^{2|X|H}\]
\end{proposition}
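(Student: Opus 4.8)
The plan is to bound the number of breakpoints by controlling, for each pair of candidate policies, how many values of the risk parameter can make the two policies tie in entropic value, and then to take a union bound over all such pairs.

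First I would count the candidate policies. Since a deterministic Markov policy assigns one of $|\actions|$ actions to each of the $|\states|\,H$ state--time pairs, there are at most $N := |\actions|^{|\states|H}$ distinct policies, hence at most $N$ equivalence classes $\bar\pi$ in the sense of \Cref{def:optimality_front_formal}. Next, recall from the proof of \Cref{pro:finite_action_change} that every breakpoint $\beta_b \in \breakpoints$ is a point where two \emph{distinct} optimal classes $\bar\pi_1 \neq \bar\pi_2$ tie, i.e. $\mathbb{E}[e^{\beta_b R^{\bar\pi_1}}] = \mathbb{E}[e^{\beta_b R^{\bar\pi_2}}]$. Writing $R_1 < \dots < R_n$ for the at most $n$ achievable return values and $a_i = \Pr(R^{\bar\pi_1} = R_i) - \Pr(R^{\bar\pi_2} = R_i)$, every breakpoint associated with this pair is a real zero of the exponential sum
\[
  g_{\bar\pi_1,\bar\pi_2}(\beta) \;=\; \sum_{i=1}^n a_i\, e^{\beta R_i}\,.
\]
Because distinct classes induce distinct return laws, the moment generating function argument from \Cref{pro:finite_action_change} shows $g_{\bar\pi_1,\bar\pi_2} \not\equiv 0$.

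The technical core — and the main obstacle — is a zero-counting bound for exponential sums: a nontrivial real sum $\sum_{i=1}^n a_i e^{\lambda_i \beta}$ with distinct exponents $\lambda_1 < \dots < \lambda_n$ has at most $n-1$ real zeros. I would prove this by induction on $n$. The base case $n=1$ is clear. For the inductive step, multiplying by the nonvanishing factor $e^{-\lambda_1 \beta}$ leaves the zero set unchanged and produces $a_1 + \sum_{i\ge 2} a_i e^{(\lambda_i-\lambda_1)\beta}$; differentiating annihilates the constant $a_1$ and yields an exponential sum with $n-1$ terms and distinct nonzero exponents, which by the induction hypothesis has at most $n-2$ zeros. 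By Rolle's theorem, if the original function had $m$ zeros its derivative has at least $m-1$, so $m-1 \le n-2$, i.e. $m \le n-1$. Hence each pair of distinct classes contributes at most $n-1$ breakpoints.

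Finally I would union-bound over the at most $\binom{N}{2}$ pairs of classes, using that the set of breakpoints is contained in the union of the corresponding finite zero sets:
\[
  B \;\le\; \binom{N}{2}(n-1) \;\le\; \frac{N^2}{2}(n-1) \;\le\; n\,N^2 \;=\; n\,|\actions|^{2|\states|H}\,,
\]
which is the claimed bound. Everything besides the exponential-sum lemma is bookkeeping, namely the policy count and the union bound. One subtlety to flag explicitly is that pairs with identical return laws (for which $g \equiv 0$, with infinitely many zeros) must be excluded: these are precisely the equivalent policies, which by construction share a single optimality interval and therefore never create a breakpoint between them.
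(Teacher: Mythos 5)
Your proof is correct and follows essentially the same route as the paper's: bound the number of entropic-risk crossing points for each pair of distinct policies by the number of real zeros of an $n$-term exponential sum, then union-bound over all pairs of deterministic Markov policies. The only differences are cosmetic: you prove the zero-counting lemma yourself via Rolle's theorem and induction where the paper cites it (\Cref{lem:exp_root}), and your $\binom{N}{2}$ pair count is marginally tighter than the paper's $|\actions|^{2|\states|H}$, but both yield the stated bound.
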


\begin{proof}(of \Cref{pro:max_breakpoints})

    We first consider this lemma, on the number of roots of an exponential sum.
    \begin{lemma}[\citep{tossavainen2007lost}]
        \label{lem:exp_root}
        Let $f_n(x) = \sum^n_0 b_i k_i^x$, $b_i \in \RR, k_i > 0$. Then $f_n$ has at most $n-1$ roots.
    \end{lemma}

    This lemma allows for bounding the number of values of the risk parameter $\beta$ for which two different policies have the same entropic risk.
    \begin{proposition}
        \label{pro:nbr_crossing_deux}
            Consider two distributions $\mu_1 \neq \mu_2$ with support on $X = \{x_1, \dots, x_n\} \subset \RR$ of size $n$. Consider $X_1 \sim \mu_1, X_2 \sim \mu_2$ random variables following those distributions. Consider $\breakpoints = \{\beta \in \RR \ | \ \mathrm{EntRM}_\beta(X_1) = \mathrm{EntRM}_\beta(X_2)\}$. 
            
            Then:  $|\breakpoints| \leq n-1$
    \end{proposition}
    The proposition is straightforward by considering the exponential form of the EntRM, which is a sum of exponentials.

    Finally we conclude by considering that there are $|\actions|^{|X|H}$ deterministic markovian policies for a specific MDP, and thus $|\actions|^{2|X|H}$ pairs of policies. The breakpoints are values for which the entropic risk of two policies is equal, and thus are included in the union of the "breakpoints" of all pairs of policies. By \Cref{pro:nbr_crossing_deux}, the number of breakpoints is at most $(n-1)|\actions|^{2|X|H}$.
\end{proof}

This proposition hat no reason to be tight in general. In the conclusion we consider the number of point of equality between any two policies, but those points cannot all be breakpoints. Only the points where one of the two policy is optimal count as breakpoints. This problem is reduced to finding the number of components of the function $h(\beta) = \max \{ f_\pi(\beta) \}_{\pi \in \Pi}$, where $f_\pi(\beta) = \mathrm{EntRM}_\beta[R^\pi]$. This combinatorial problem has been studied before (see Lemma 2.4 in \citet{atallah1985some}), but to the best of our knowledge, no better explicit formula has been found.

\subsection{Evolution with the number of actions and atoms in the distributions}
\label{app:breakpoints_number_state}

We conducted experiments in the single-state setting (the \emph{FindBreaks} setting) to determine whether the theoretical bound on the number of breakpoints is reached in practice. We generated numerous random reward distributions and used our algorithm to find the number of breakpoints. We performed two experiments: one evaluating how the number of breakpoints evolves with respect to the number of actions, and the other with respect to the number of atoms in the reward distributions.

All the distributions considered have values in $[0,1]$ with atoms evenly spaced over this interval. To generate these distributions, we treat each as an element of $[0,1]^n$, where the sum of the elements equals 1, representing a point on the $n$-dimensional simplex. The distributions are thus generated by uniformly sampling a point on this simplex.

For the first experiment, we fixed the number of atoms to 10 and varied the number of actions from 5 to 50. For the second experiment, we fixed the number of actions to 10 and varied the number of atoms from 5 to 50 as well. For each action, the reward distribution was generated randomly as previously described. In the solving algorithm, we searched for breakpoints for $\beta$ values in the range $[-15, 15]$ with a precision of 0.01. For each plot, we generated 100 independent problems and displayed a histogram of the number of breakpoints found across these 100 problems.

\begin{figure}[ht]
    \centering
    \includegraphics[width=\textwidth]{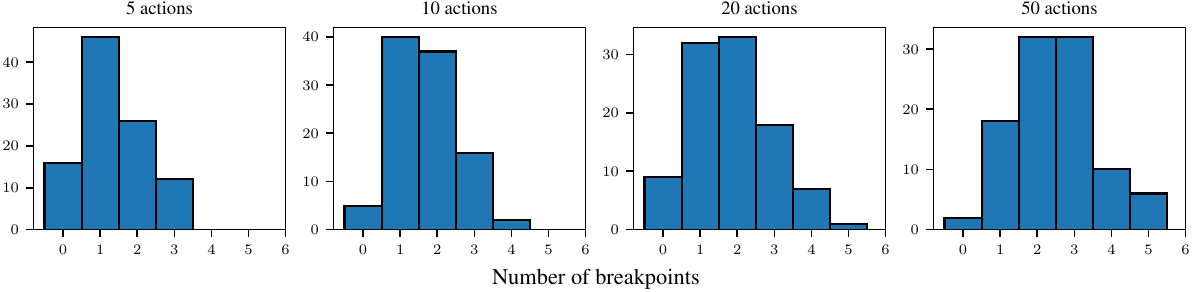}
    \includegraphics[width=\textwidth]{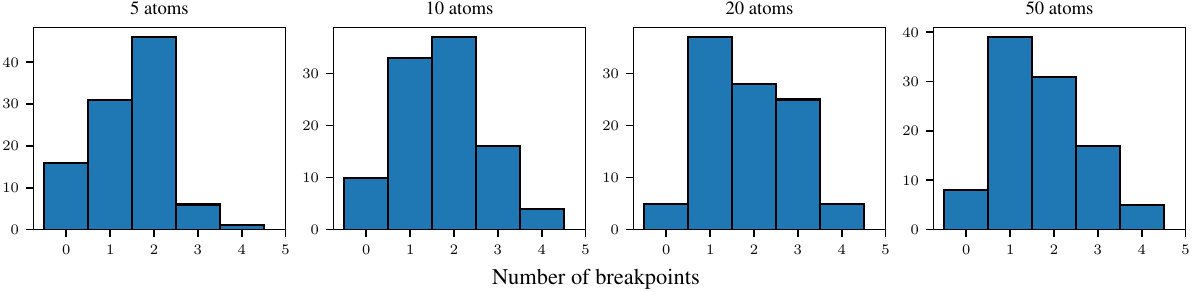}
    \vspace{-0.6cm}
    \caption{Illustration of the evolution of the number of breakpoints when the number of actions (Up) and atoms in the distributions (Down) is increasing. While the number of actions and atoms is multiplied by 10, the average number of breakpoints increases by less than a factor 2.}
    \label{fig:hist_breakpoints}
\end{figure}

The results are shown in \Cref{fig:hist_breakpoints}. We observe that the number of breakpoints increases according to the studied parameters, with an average number of crossings of 1.25, 1.9, 2.19, and 2.36 for the first experiment, and 1.42, 1.66, 1.93, and 2.05 for the second. However, this increase is far from the theoretical bound established in \Cref{pro:max_breakpoints}. Indeed, the number of breakpoints is much lower than the theoretical bound, showing sublinear growth. This experiment confirms the efficiency of our algorithm, whose complexity is strongly tied to the number of breakpoints.

\section{FindBreaks Algorithm}
\label{app:single_state}

Here \emph{FindBreaks} is designed to be able to handle values of the risk parameter both positive and negative. In practice, using DOLFIN, all the values will be non-positive. The values of the increments $\Delta$ come from \Cref{thm:interval_action_change}.

\begin{algorithm}
  \caption{FindBreaks: Computing all optimal actions}\label{alg:state}
  \begin{algorithmic}[1]
    \Require Precision $\varepsilon \in (0,1)$, Random rewards $(R(a_i))_i$, interval $I$.
    \State Compute $a^* = \arg\max_a \mathrm{EntRM}_0(R(a))$ \Comment{Initial optimal action}
    \State Compute $A = \min_{a_2 \neq a^*} \mathrm{EntRM}_0(R(a^*)) - \mathrm{EntRM}_0(R(a_2))$, $\Delta R = r_{\max} - r_{\min}$ \Comment{Advantage function and range of rewards}
    \State Initialize $\beta = \frac{8\Delta\mu}{\Delta R^2}$, $\beta_{\text{old}} = 0$ \Comment{Initial parameter values}
    \State Initialize $b_\ell = 0$, $a_{\beta_{\text{old}}} = a^*$
    \While{$\beta_{\min} < \beta < \beta_{\max}$}
      \State Compute the $\beta$ risks $\mathrm{EntRM}_\beta(R(a))$ for each action $a$ \Comment{Evaluate risks for all actions}
      \State $a_\beta = \arg \max_a \mathrm{EntRM}_\beta(R(a))$ \Comment{Select the action with the highest utility}
      \If{$a_\beta \neq a_{\beta_{\text{old}}}$}
        \State Add $[b_\ell, \beta]$ to the interval set $\mathcal{I}$ and $a_\beta$ to the optimality front $\Pi^*$ \Comment{Update intervals and front}
        \State $b_\ell \gets \beta$ \Comment{Update the lower bound of the next interval}
      \EndIf
      \State $a_{\beta_{\text{old}}} \gets a_\beta$ \Comment{Update the last optimal action}
      \State $\Delta U = \min_{a \neq a_\beta} \bigl(\mathrm{EntRM}_\beta(R(a_\beta)) - \mathrm{EntRM}_\beta(R(a))\bigr)$ \Comment{Smallest optimality gap for non-optimal actions}
      \If{$\beta < 0$}
        \State $\beta \gets \beta - \max\{\beta \frac{\Delta U}{\Delta R}, \varepsilon\}$ \Comment{Decrease $\beta$ for negative values}
      \EndIf
      \If{$\beta > 0$}
        \State $\beta \gets \beta + \max\{\beta \frac{\Delta U}{\Delta R}, \varepsilon\}$ \Comment{Increase $\beta$ for positive values}
      \EndIf
    \EndWhile
    \State Add $[b_\ell, \beta_{\max}]$ to $\mathcal{I}$ \Comment{Add the last interval to the set}
    \State \textbf{Output} Optimality Front $\Gamma^*$
  \end{algorithmic}
\end{algorithm}

\paragraph{Empirical Evaluation and Performance Analysis}

A simple simulation illustrates the behavior or \Cref{alg:state}. We consider two actions, $a_1$ and $a_2$, with reward distributions $\varrho(a_1) = \frac{1}{2}\delta_0 + \frac{1}{2}\delta_1$ and $\varrho(a_2) = \frac{99}{100}\delta_0 + \frac{1}{100}\delta_2$. Action $a_1$ is better in expectation (lower risk) but action $a_2$ can achieve a higher reward with small probability and should only outperform action $a_1$ for large risk parameters.  
\Cref{fig:intervals_illustration} illustrates this: the functions $\mathrm{EntRM}(R(a))$ are plotted for\footnote{This experiment is design to test the transition to a risky action, so it is only relevant to observe the optimality front for $\beta>0$.} $\beta>0$. As soon as the risk parameter $\beta$ is large enough (specifically, $\beta > 3.9$), action $a_2$ becomes optimal.
\begin{figure}[ht]
    \centering
    \includegraphics[width=\textwidth]{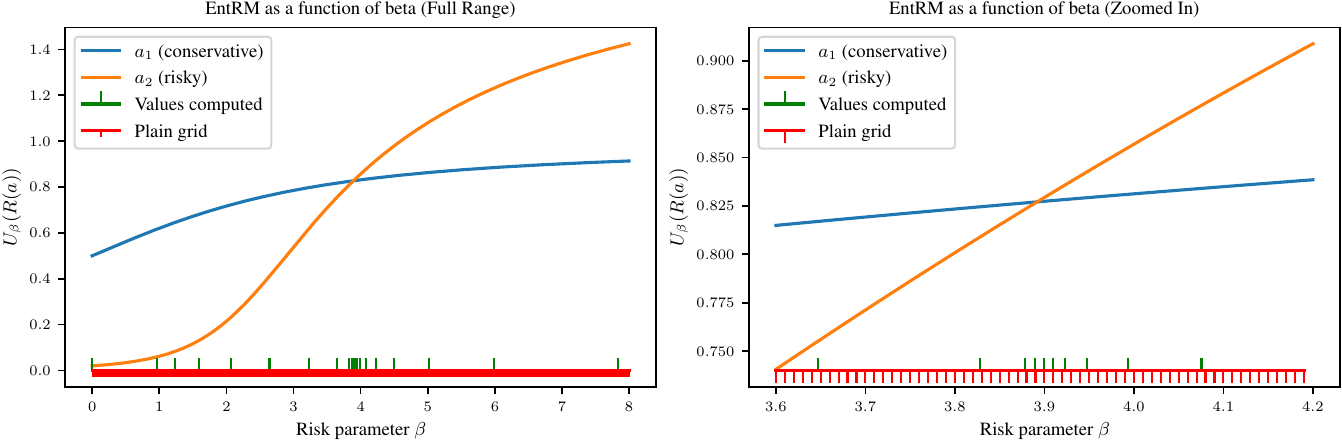} 
    \caption{Utility functions on a single-state problem: a conservative (blue) and risky (red) actions, with respective ranges of optimality intersecting at $\beta_\text{bp}=3.9\pm 10^{-2}$. In red, we show the 800 values of $\beta$ tested with a naive grid; in green the 22 values tested by \Cref{alg:state} to identify the breakpoint. Right-hand figure zooms around $\beta_\text{bp}$.
    }
    \label{fig:intervals_illustration}
\end{figure}

\Cref{alg:state} was executed on this example for $\beta \in [0,8]$, with a precision of $\varepsilon = 10^{-2}$, but our theoretical upper bound\footnote{We show a larger upper bound for visualisation purposes. Our algorithm only evaluates 3 values past this limit so it does not hurt performance significantly.} is $\beta_{\max}=\ln(100)=4.6$. The green markers correspond to the values of $\beta$ where the algorithm computed the EntRM, while the red markers represent the values that would be computed using a plain grid search with precision $\varepsilon$. As expected, we observe that the intervals shrink 
near the breakpoint, but grow significantly larger as we move away from these regions. \Cref{fig:intervals_illustration} (Right) zooms in on the interval $\beta \in [3.6, 4.2]$ to better visualize the concentration of intervals. Around $\beta = 3.9$, we observe that a few intervals are indeed capped by the maximal precision.

In this simple example, the naive grid uses $800$ evaluations while \Cref{alg:state} only requires $22$, with an \emph{efficiency ratio} of $800/22=36$. To better quantify this gain, we run another experiment on random problems with 8 actions and reward function supported on 20 atoms in $[0,1]$ (hence with possibly much more that $1$ breakpoint). On \Cref{fig:performance_algo_state}, for each level of precision $\varepsilon \in [10^{-3},10^{-1}]$, we compare the number of evaluations required by \Cref{alg:state} with the naive $1/\varepsilon$ obtained with a plain grid search. We report the gain on average over 20 random problem and notice efficiency ratios up to 35 for high-precision $\varepsilon$ values. 

\begin{figure}[ht]
    \centering
    \includegraphics[width=0.9\textwidth]{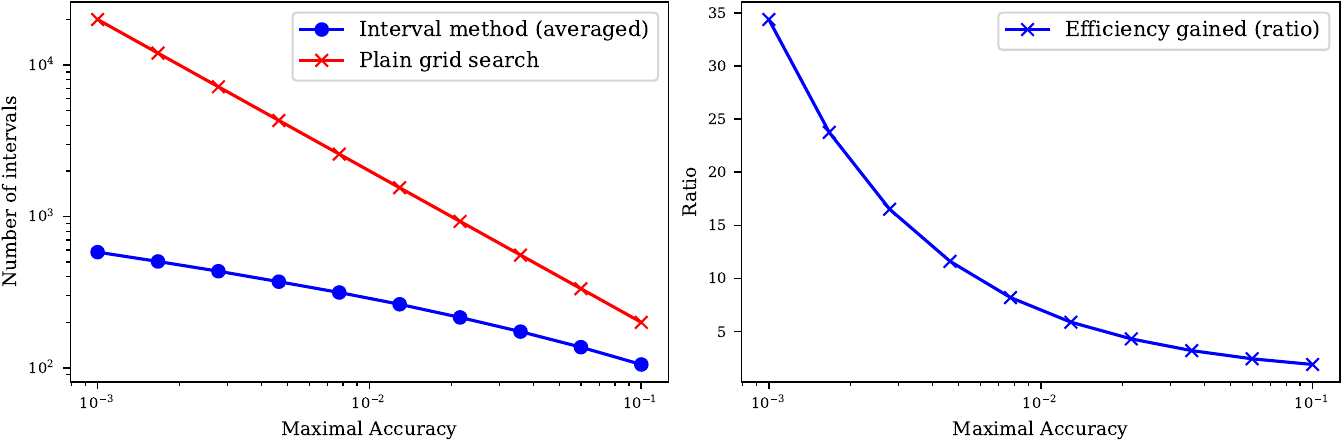}
    \caption{Performance gained using \Cref{alg:state}. (Left) Number of evaluations, (Right) efficiency ratio (red over blue).}
    \label{fig:performance_algo_state}
\end{figure}

The performance of this algorithm is also closely tied to the number of breakpoints. The more breakpoints there are, the smaller the intervals will tend to be, which in turn reduces the algorithm's efficiency compared to a plain grid search. Therefore, the number of breakpoints is a critical factor. \Cref{app:breakpoints_number} show that the number of breakpoints grows much slower than the theoretical bound. This allows our algorithm to be more efficient in practice than predicted by theory.


\section{More Numerical Experiments}
\label{app:more_experiments}

\subsection{Inventory Management}

We first consider some more values for the Inventory Management environments.

\begin{table}[htbp]
    \centering
    \caption{Evaluation of $P(R^\pi \leq T)$ for Inventory Management.}
    \label{tab:tau_results_bis}
    \begin{tabular}{lccccc}
    \toprule
    $T/\mu^*$    & 0.25  & 0.33  & 0.5  & 0.66   & 0.75      \\
    \midrule
    \textbf{Optimality Front} & $\mathbf{1.26e^{-5}}$  & $\mathbf{8.40e^{-5}}$ & $\mathbf{3.26e^{-3}}$ & $\mathbf{3.91e^{-2}}$   & $\mathbf{8.78e^{-2}}$   \\
    Proxy Optimization     & $2.33e^{-5}$  & $1.18e^{-4}$ & $3.28e^{-3}$ & $\mathbf{3.91e^{-2}}$   & $\mathbf{8.78e^{-2}}$   \\
    Risk neutral optimal & $1.11e^{-4} $  & $4.24e^{-4}$  & $5.73e^{-3}$   &$4.62e^{-2}$   & $9.77e^{-2}$        \\
    Nested Prob. Thresh. & $1.54e^{-3}$  & $8.37e^{-3}$  & $1$ & $1$  & $1$   \\
    \midrule
    Optimal value &  $6.71e^{-8}$  & $6.29e^{-7}$  & $4.48e^{-5}$  & $7.85e^{-4}$  & $3.59e^{-3}$   \\
    \bottomrule
    \end{tabular}
\end{table}

\begin{table}[htbp]
    \centering
    \caption{Evaluation of $\mathrm{(C)VaR}_\alpha[R^\pi]$ for Inventory Management.}
    \label{tab:var_results_bis}
    \begin{tabular}{l|cccc|cccc}
    \toprule Risk Measure & \multicolumn{4}{c|}{VaR} & \multicolumn{4}{|c}{CVaR} \\
    Risk parameter $\alpha$   & 0.05  & 0.1 & 0.2 & 0.5 & 0.05 & 0.1 & 0.2 & 0.5\\
    \midrule
    \textbf{Optimality Front}  & $\mathbf{1.25}$  & $\mathbf{1.33}$ & $\mathbf{1.45}$ & $\mathbf{1.65}$ & $\mathbf{1.14}$  & $\mathbf{1.21}$ & $\mathbf{1.30}$ & $\mathbf{1.45}$ \\
    Proxy Optimization & $1.22$ & $1.30$ & $\mathbf{1.45}$ & $\mathbf{1.65}$   & $1.13$ & $1.20$ & $1.30$ & $\mathbf{1.45}$ \\
    Risk neutral optimal & $1.22$ & $\mathbf{1.33}$ & $1.43$ & $\mathbf{1.65}$ & $1.11$ & $1.19$ & $1.28$ & $1.44$ \\
    Nested Risk Measure & $0.88$ & $0.95$ & $1.05$ & $1.28$ & $0.75$ & $0.84$ & $0.95$ & $1.12$ \\
    \bottomrule
    \end{tabular}
\end{table}

The values in \Cref{tab:tau_results_bis} and \Cref{tab:var_results_bis} reveal that the improvements of our \emph{Optimality Front} method over the \emph{Proxy Optimization} becomes less significant as the level of risks decreases.

We also plot in \Cref{fig:performance_algo_mdp_ratio_inv} the efficiency ratio of using the Findbreaks, similar to \Cref{fig:performance_algo_state}. This figure highlights the polynomial gain in performances of using DOLFIN and Findbreaks when one wants to compute all the optimal policies up to a certain accuracy.

\begin{figure}[ht]
    \centering
    \includegraphics[width=0.5\textwidth]{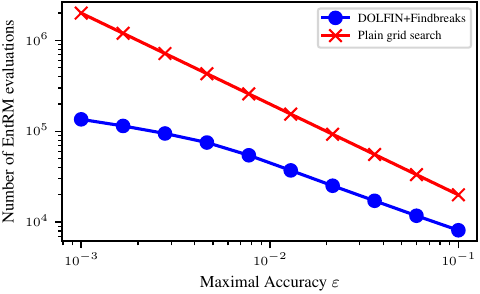}
    \caption{Performance gained using \Cref{alg:mdp} on Inventory Management.}
    \label{fig:performance_algo_mdp_ratio_inv}
\end{figure}

\subsection{Cliff Grid World}
\label{app:cliff}

Here we consider the Cliff grid world \citep{sutton2018reinforcement} illustrated in \Cref{fig:cliff_policies}.  The agent starts in the blue state. At each step, she has a small probability ($0.1$ here) of moving to another random direction. Due to these random transitions, it is risky to walk too close to the cliff (bottom, in red, negative reward $-\frac{1}{2}$), and conservative policies will prefer to walk further away to reach the goal (in green). The horizon is fixed at $H = 15$, so in principle, the agent has enough time to reach the end using the safe path. The reward when the goal is reached at step $h$ is $1 - \frac{h}{2H}$, which encourages the agent to reach it as fast as possible.

The agent thus faces a dilemma: she could either walk along the cliff, risking to fall but reaching the goal faster and consequently receiving a higher reward; or she could go all the way around, taking more time but with a lower probability to fall down. This can be observed in \Cref{fig:cliff_policies}, where the optimal policies for different values of $\beta$ are shown. For high values of beta (e.g., $\beta = 10$), the agent takes the risky path, while for low values of beta (e.g., $\beta = -5$), the agent takes the safe path.  One can even observe that for extremely negative values of $\beta$ values (e.g. $\beta =-10$) the agent prefers to stay away from the cliff, not even trying to reach the goal (see purple arrow in top right corner pointing up).

The Cliff environment is very different from Inventory Management in its nature. The agent only receives rewards when a certain state is reached, making the reward scarce and the return distribution simpler. This implies that the optimal policy for different measures of risks, such as Probability Threshold, VaR and CVaR are markovian for this MDP.

For the Threshold Probability problem, we only consider 2 values of the threshold, $-0.5$ corresponding to falling into the cliff, and $0$ corresponding to not reaching the goal. For the first threshold, the objective is to find the policy that is least likely to fall,  while for the second it is to find the policy with the most chances of reaching the goal. 

\Cref{tab:tau_results} confirms that our \emph{Optimality Front} method performs better than other methods for the Threshold probability. An important remark is that, here, the real optimal value is reached by the optimality front. Similar performances are observed for the CVaR in \Cref{tab:var_results_cliff}. For the VaR, the gain in performance is limited, which is explained by the scarcity of rewards in the environment (the small changes in the return distribution does not change the value of the VaR).

Compared to the Inventory Management, the gain performance for computing the optimality front is much better, with a ratio up to a factor $100$, as seen in \Cref{fig:performance_algo_mdp_cliff}.

\begin{table}[htbp]
    \centering
    \caption{Evaluation of $P(R^\pi \leq T)$ for Cliff.}
    \label{tab:tau_results_cliff}
    \begin{tabular}{lcc}
    \toprule
    $T$    & -0.5  & 0        \\
    \midrule
    \textbf{Optimality Front} & $\mathbf{3.72e^{-2}}$  & $\mathbf{4.65e^{-2}}$   \\
    Proxy Optimization     & $3.85e^{-2}$  & $4.67e^{-2}$  \\
    Risk neutral optimal & $4.50e^{-2}$  & $4.84e^{-2}$   \\
    Nested Prob. Thresh. & $1$  & $1$   \\
    \midrule
    Optimal value      & $\mathbf{3.72e^{-2}}$  & $\mathbf{4.65e^{-2}}$  \\
    \bottomrule
    \end{tabular}
\end{table}

\begin{table}[htbp]
    \centering
    \caption{Evaluation of $\mathrm{(C)VaR}_\alpha[R^\pi]$ for Cliff.}
    \label{tab:var_results_cliff}
    \begin{tabular}{l|cccc|cccc}
    \toprule
    Risk Measure &  \multicolumn{4}{c|}{VaR} & \multicolumn{4}{|c}{CVaR} \\
    Risk parameter $\alpha$ & 0.05  & 0.1 & 0.2 & 0.5 & 0.05 & 0.1 & 0.2 & 0.5\\
    \midrule
    \textbf{Optimality Front}  & $\mathbf{0.53}$  & $\mathbf{0.63}$ & $\mathbf{0.70}$ & $\mathbf{0.76}$ & $\mathbf{-0.37}$ & $\mathbf{0.11}$ & $\mathbf{0.38}$ & $\mathbf{0.58}$ \\
    Proxy Optimization & ${0.00}$ & ${0.6}$ & $\mathbf{0.70}$  & $\mathbf{0.76}$ & $-0.39$ & $-0.08$ & $0.38$ & $0.58$ \\
    Risk neutral optimal & $\mathbf{0.53}$ & $\mathbf{0.63}$ & $\mathbf{0.70}$ & $\mathbf{0.76}$ & $-0.43$ & $0.08$ & $0.37$ & $0.58$ \\
    Nested Risk Measure  & $-0.5$ & $-0.5$ & $-0.5$  & $-0.5$ & $-0.5$ & $-0.5$ & $-0.5$ & $-0.5$\\
    \bottomrule
    \end{tabular}
\end{table}

\begin{figure}[ht]
    \centering
    \includegraphics[width=0.9\textwidth]{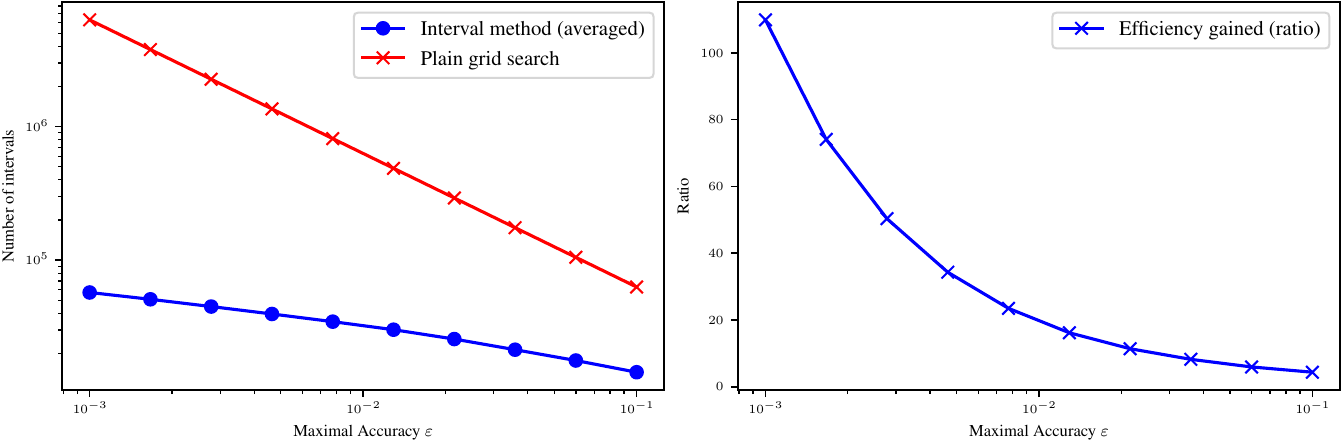}
    \caption{Performance gained using \Cref{alg:mdp} on Cliff. (Left) Number of evaluations, (Right) efficiency ratio (red over blue).}
    \label{fig:performance_algo_mdp_cliff}
\end{figure}


\end{document}